%%%%%	%%%%%	%
%				%
%	Preambule	%
%				%
%%%%%	%%%%%	%

\documentclass[a4paper,11pt]{article}

%
% Packages
%

\usepackage{cmap}

\usepackage[english]{babel}

\usepackage[usenames,dvipsnames]{xcolor}

\definecolor{myblue}{rgb}{0.21, 0.34, 0.74}
\definecolor{mygrey}{rgb}{0.55, 0.57, 0.67}
\definecolor{myred}{rgb}{0.79, 0.0, 0.09}

\usepackage[pdftex, colorlinks=true, bookmarksopen=true, linkcolor=myblue, citecolor=myblue]{hyperref}

\usepackage[T1]{fontenc}
\usepackage[utf8]{inputenc}
\usepackage{enumitem}

\usepackage{lmodern}
\usepackage{libertine}

\usepackage{bbm}
\usepackage{mathrsfs}
\usepackage{amssymb}
\usepackage{amsmath}
\usepackage{multicol}
\usepackage{upgreek}
\usepackage{mathtools}
\usepackage{tikz}
\usepackage{amsthm}

\usepackage{comment}
\usepackage[font=small,labelfont=bf]{caption}

%\usepackage[notcite, notref, color]{showkeys} % Show the keys
%\renewcommand\showkeyslabelformat[1]{\scalebox{.8}{\normalfont\footnotesize\ttfamily#1}\hspace{-.5em}}
%\colorlet{labelkey}{blue!30}
\usepackage[noabbrev,capitalize,nameinlink]{cleveref}

\theoremstyle{plain}
\newtheorem{theorem}{Theorem}[section]

\newtheorem{definition}[theorem]{Definition}
\newtheorem{proposition}[theorem]{Proposition}
\newtheorem{lemma}[theorem]{Lemma}
\newtheorem{corollary}[theorem]{Corollary}
\newtheorem{claim}{Claim}
\newtheorem{remark}[theorem]{Remark}

\newtheorem{problem}{Problem}
\numberwithin{equation}{section}

\usepackage[nottoc,notlot,notlof]{tocbibind}
\usepackage{authblk}

\setlength{\affilsep}{0.0em}
\setcounter{Maxaffil}{10}

\usepackage[svgnames,pdf]{pstricks}

\usepackage{wrapfig}
\usepackage{upgreek}
\usepackage{bm}
\usepackage[scr=boondoxo]{mathalfa}
\usepackage{comment}
\usepackage[font=small,labelfont=bf]{caption}
\usepackage[autostyle]{csquotes}

%
% Command definition
%

\newcommand{\<}{\langle}
\renewcommand{\>}{\rangle}

\newcommand{\proj}{\boldsymbol{\mathsf{P}}^\perp}
\newcommand{\hess}{\mathrm{Hess}}

\def\reals{\mathbb{R}}

\def\sphere{\mathbb{S}}

 % Laplace distribution

\newcommand{\Pp}{\mathscr{P}}

\DeclareMathOperator*{\argmax}{arg\,max}
\DeclareMathOperator*{\argmin}{arg\,min}

\newcommand{\diff}{\, \mathrm{d}}

\renewcommand{\leq}{\leqslant}
\renewcommand{\geq}{\geqslant}

%
% Meta-data
%

\title{ Dynamic metastability in the self-attention model }

\author{Borjan Geshkovski}
\affil{Inria \& Sorbonne Université}
\author{Hugo Koubbi}
\affil{ENS Paris-Saclay \& Yale University}
\author{Yury Polyanskiy}
\affil{MIT}
\author{Philippe Rigollet}
\affil{MIT}

\date{ \today }

%---- Content ----%

\begin{document}

%
% Title
%

\maketitle

%\overfullrule=5pt

%
% Abstract
%

\begin{abstract}
We consider the self-attention model---an interacting particle system on the unit sphere, which serves as a toy model for \emph{Transformers}, the deep neural network architecture behind the recent successes of large language models. 
We prove the appearance of \emph{dynamic metastability} conjectured in \cite{geshkovski2024emergence}---although particles collapse to a single cluster in infinite time, they remain trapped near a configuration of several clusters for an exponentially long period of time. 
By leveraging a gradient flow interpretation of the system, we also connect our result to an overarching framework of \emph{slow motion} of gradient flows proposed by Otto and Reznikoff \cite{otto2007slow} in the context of coarsening and the Allen-Cahn equation. 
We finally probe the dynamics beyond the exponentially long period of metastability, and illustrate that, under an appropriate time-rescaling, the energy reaches its global maximum in finite time and has a staircase profile, with trajectories manifesting \emph{saddle-to-saddle}-like behavior, reminiscent of recent works in the analysis of training dynamics via gradient descent for two-layer neural networks.

\bigskip

\noindent \textbf{Keywords.}\quad Transformers, slow motion, metastability, gradient flows, interacting particle systems

\medskip

\noindent \textbf{\textsc{ams} classification.}\quad \textsc{49Q22, 68T07, 82C22, 37D10, 82C26, 82B26}.
\end{abstract}
	
\thispagestyle{empty}

\setcounter{tocdepth}{2}

\tableofcontents

%
% Intro
%

\section{Introduction}

Introduced in 2017 with the seminal paper \cite{vaswani2017attention}, \emph{Transformers} are the neural network architectures behind the recent successes of large language models. Their impressive results are in part due to the way they process data: inputs are length-$n$ sequences of $d$-dimensional vectors called \emph{tokens} (representing words, or patches of an image, for example), which are processed over several layers of parametrized nonlinearities. Unlike conventional neural networks, all tokens are coupled and mixed at every layer via the so-called \emph{self-attention mechanism}. 

We make this discussion more transparent by following the mathematical framework set out in \cite{geshkovski2024mathematical}---itself based on and inspired by \cite{sander2022sinkformers, lu2019understanding}---viewing layers as a continuous time variable $t$, and tokens as particles, we consider the toy model
\begin{equation} \label{SA}
\dot{x}_{i}(t)=\proj_{x_i(t)}\sum_{j=1}^{n}\frac{e^{\beta \<x_i(t),x_j(t) \>}}{\displaystyle \sum_{k=1}^n e^{\beta\<x_i(t),x_k(t)\>}} x_{j}(t) \hspace{1cm} \text{ for } t\geq0, \tag{SA}
\end{equation} 
for $i\in\{1,\ldots,n\}$; here $\proj_x := I_d-xx^\top$ ensures that the particles $x_i(t)$ evolve on the unit sphere $\sphere^{d-1}$. 
We dub \eqref{SA} the \emph{self-attention model}: it has a single parameter $\beta\geq0$, designating an inverse temperature, is derived from Transformers, and exhibits a remarkably similar qualitative behavior, as touched upon in \cite{geshkovski2024mathematical}. 

To analyze the dynamics and the long-time behavior of \eqref{SA}---referred as \emph{signal propagation} in the machine learning literature \cite{noci2022signal, he2023deep, cowsik2024geometric}---one naturally looks for a Lyapunov function. This endeavor is made simpler upon observing that the partition function
\begin{equation*}
    \mathscr{Z}_{\beta, i}:=\sum_{k=1}^n e^{\beta\langle x_i, x_k\rangle}
\end{equation*}
satisfies $e^\beta\leq \mathscr{Z}_{\beta, i}\leq ne^\beta$. Whereupon, one can, to begin with, consider the \emph{unnormalized self-attention model}
\begin{equation} \label{USA}
\dot{x}_i(t) = n^{-1}\proj_{x_i(t)}\sum_{j=1}^n e^{\beta(\langle x_i(t), x_j(t)\rangle-1)} x_j(t) \hspace{1cm} \text{ for } t\geq0, \tag{USA}
\end{equation}
which is the (time-reversed) gradient flow for the interaction energy 
%$\mathsf{E}_\beta:(\sphere^{d-1})^n\to\reals_{\geq0}$ defined as
\begin{equation} \label{eq: interaction.energy}
    \mathsf{E}_\beta(x_1,\ldots,x_n):= \frac{1}{2\beta e^\beta n^2}\sum_{i=1}^n\sum_{j=1}^n e^{\beta\langle x_i, x_j\rangle}.
\end{equation}
Namely, $X(t) = (x_1(t),\ldots,x_n(t))$ satisfies
\begin{equation*}
    \dot{X}(t) = \nabla\mathsf{E}_\beta(X(t)) \hspace{1cm} \text{ for } t\geq0.
\end{equation*}
This observation impels one to also view \eqref{SA} as a (reverse-time) gradient flow for $\mathsf{E}_\beta$, but one in which the gradient is computed with respect to a different metric, obtained by weighting the canonical metric on $\mathsf{T}_{X}(\sphere^{d-1})^n$ by  $\mathscr{Z}_{\beta, i}$, as done in \cite{geshkovski2024mathematical}. 
Consequently $\mathsf{E}_\beta$ increases along trajectories of \eqref{SA} and \eqref{USA}.

Global maxima of $\mathsf{E}_\beta$ are configurations $(x_1,\ldots,x_n)\in(\sphere^{d-1})^n$ satisfying $x_1=\ldots=x_n$, which we call \emph{clusters}. With this at hand, using established tools from dynamical systems combined with an analysis of the landscape of $\mathsf{E}_\beta$, the authors in \cite{geshkovski2024mathematical, markdahl2017almost} and the subsequent improvement in \cite{criscitiello2024synchronization} conclude that for almost every initial configuration, and for $\beta\geq0$ when $d\geq3$, or $\beta\leq 1 \vee \beta\gtrsim n^2$ when $d=2$, the unique solution to \eqref{SA} or \eqref{USA} converges to some cluster as $t\to+\infty$. This behavior has in fact been observed in trained Transformer models, and is referred to as \emph{token uniformity}, \emph{over-smoothing} \cite{chen2022principle, ru2023token, guo2023contranorm, wu2024demystifying, wu2024role, dovonon2024setting, scholkemper2024residual}, or \emph{rank-collapse} \cite{dong2021attention, feng2022rank, noci2022signal, joudaki2023impact, zhao2023are, zhai2023stabilizing, noci2024shaped, bao2024self, cowsik2024geometric}.  

One can then ask whether for almost every initial configuration, the above convergence holds with some rate.
The answer is affirmative---and the rate is in fact exponential---when the initial configuration lies in an open hemisphere \cite[Lemma 6.4]{geshkovski2024mathematical}. The latter is, generically, is a high-dimensional property ($d\gg n$), and the decay constant is itself exponentially small in $\beta\gg1$.
Prompted by empirical evidence and synthetic simulations, the authors in \cite[Problem 1]{geshkovski2024mathematical} posit that the dynamics instead manifest \emph{metastability}: particles quickly approach a few clusters, stay in the vicinity of these clusters for a very long period of time, before eventually coalescing to a single cluster in infinite time. 
Since the appearance of a single cluster in long time is interpreted as a negative property by practitioners in the empirical literature cited above, alluding to a lack of expressivity, we can view metastability as a desideratum. 
The goal of this paper is to describe and prove the appearance of the metastability phenomenon for both \eqref{SA} and \eqref{USA}. 

\subsection{Main result}

Recall that $f(x)=\Omega(g(x))$ whenever $\liminf_{x\to\infty} f(x)/g(x)>0$. 
We work in the following setting of initial configurations.

\begin{definition}[$(\beta,\varepsilon)$-separated configurations] \label{hyp: init}
Suppose $d, n\geq 2$, $\beta>1$ and $\varepsilon\in(0,\frac{1}{16})$. 
We call $(x_i)_{i=1}^n\in(\sphere^{d-1})^n$ a \emph{$(\beta,\varepsilon)$-separated configuration} if there exist $k\leq n$ points $w_1,\ldots,w_k\in\sphere^{d-1}$ such that 
\begin{enumerate}
    \item For all $i \in \{1,\ldots,n\}$, 
    \begin{equation*}
        x_i(0)\in\bigcup_{q=1}^k \mathscr{S}_q(\varepsilon)
    \end{equation*}
    where $\mathscr{S}_q(\varepsilon)$ is the spherical cap centered at $w_q$ of radius (or \enquote{height}) $1-\varepsilon$: 
    \begin{equation} \label{eq: cones}
\mathscr{S}_{q}(\varepsilon):=\left\{ x \in \sphere^{d-1}\colon \<x,w_q \>\geq 1-\varepsilon \right\}.
\end{equation}
    \item Furthermore,
\begin{equation}\label{eq: gamma}
    \gamma(\beta):=1-\alpha-8\varepsilon-\frac{1}{\beta}\log\left(\frac{2n^2}{\varepsilon}\right)>0 \hspace{0.5cm} \text{ and } \hspace{0.5cm} \gamma(\beta)=\Omega(1)
\end{equation}
with
\begin{equation} \label{eq: alpha.dist}
\alpha:=\underset{\substack{(x,y)\in \mathscr{S}_i(2\varepsilon)\times \mathscr{S}_j(2\varepsilon)\\ i\neq j\in\{1,\ldots,k\}}}{\max} \< x,y\>.
\end{equation}
\end{enumerate}
\end{definition}

Condition \eqref{eq: gamma} is particularly indicative in the regime where $d, n\geq 2$ are fixed and in the low temperature limit $\beta\to+\infty$. In this regime, which is of interest due to the motivating discussion above, we essentially require $\varepsilon$ to be small in comparison to $1-\alpha$. 
%Moreover, $1-\alpha(\varepsilon)$, if it decays, should not decay as fast as $\beta^{-1}$. 
We provide an illustration of such a configuration in \Cref{fig:illustration_separated}.

We now state our main result.

\begin{theorem} \label{thm: metastability}
Suppose $d, n\geq 2$ and $\beta>1$. 
Consider $(x_i(0))_{i=1}^n\in(\sphere^{d-1})^n$ which is $(\beta,\varepsilon)$-separated for some $\varepsilon=\varepsilon(\beta)\in(0,\frac{1}{16})$.
Let $(x_i(\cdot))_{i=1}^n \in \mathscr{C}^0(\mathbb{R}_{\geq0};(\sphere^{d-1})^n)$ be the unique solution to the corresponding Cauchy problem for \eqref{SA} or \eqref{USA}.
Take any $\lambda=\lambda(\beta)$ such that
\begin{equation} \label{eq: lambda.1}
    0<\lambda < 1-\alpha-O_{\beta,n}\left(\frac{1}{\beta}\right)
\end{equation}
(see \Cref{rem: lambda.gamma} for the precise upper bound) and 
\begin{equation} \label{eq: lambda.2}
    \lambda(\beta)=\Omega(1),
\end{equation}
where $\gamma=\gamma(\beta)>0$ and $\alpha=\alpha(\beta)\in(-1,1)$ are defined in \eqref{eq: gamma} and \eqref{eq: alpha.dist} respectively. Then there exist $T_2>T_1>0$ with 
\begin{equation*}
T_1\leq 2ne^{8\varepsilon\beta}+ en\lambda \frac{\beta^2}{\beta-1}\hspace{1cm} \text{ and 
 }\hspace{1cm} T_2\geq\frac{\varepsilon}{n} e^{(1-\alpha)\beta},
\end{equation*}
such that 
\begin{enumerate}
    \item If $x_i(0)\in\mathscr{S}_q(\varepsilon)$, then $x_i(t)\in\mathscr{S}_q(2\varepsilon)$ for all $t\in[0,T_2]$;
    \item For all $q\in\{1,\ldots,k\}$, 
    \begin{equation} \label{eq: stick}\max_{x_i(t),x_j(t)\in\mathscr{S}_q(2\varepsilon)}\|x_i(t)-x_j(t)\|^2\leq 2e^{-\lambda\beta}
    \end{equation}
    for all $t\in[T_1,T_2]$.
    \end{enumerate}
\end{theorem}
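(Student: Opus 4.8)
The plan is to put both \eqref{SA} and \eqref{USA} in the common form $\dot x_i=\proj_{x_i}v_i$ with $v_i=\sum_{l=1}^n q_{il}x_l$, and to record that, by $e^{\beta}\le\mathscr{Z}_{\beta,i}\le ne^{\beta}$ and \eqref{eq: alpha.dist}, the weights satisfy $q_{il}\ge\tfrac1n e^{-\beta(1-\langle x_i,x_l\rangle)}$ for all $i,l$, and $q_{il}\le e^{-\beta(1-\alpha)}$ whenever $x_i,x_l$ lie in two \emph{different} caps $\mathscr{S}_p(2\varepsilon),\mathscr{S}_r(2\varepsilon)$; this is the only place the two models are distinguished, and it costs at most factors of $n$. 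I would then run two Gr\"onwall-type arguments, one per clause, on scalar order parameters, all carried out on the maximal interval $[0,\tau)$ on which every particle still lies in its enlarged cap $\mathscr{S}_q(2\varepsilon)$, a standard first-exit/continuity argument upgrading the conclusions to $[0,T_2]$. Two identities will be used throughout: $\langle x_i,\dot x_i\rangle=0$, and for unit vectors $x,y,z$, $\langle x,y\rangle\langle x,z\rangle-\langle y,z\rangle=-\langle\proj_x y,\proj_x z\rangle$, whose modulus is $\le\|\proj_x y\|\,\|\proj_x z\|=\sqrt{(1-\langle x,y\rangle^2)(1-\langle x,z\rangle^2)}$.

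\emph{Clause 1 (confinement).} Fix a cap $q$, let $m_q=\#\{i:x_i(0)\in\mathscr{S}_q(\varepsilon)\}$, and track $M_q(t):=\max\{1-\langle x_i(t),w_q\rangle:\,x_i(0)\in\mathscr{S}_q(\varepsilon)\}\le\varepsilon$ at $t=0$. Here $\frac{d}{dt}(1-\langle x_i,w_q\rangle)=-\sum_l q_{il}\langle\proj_{x_i}x_l,\proj_{x_i}w_q\rangle$; at a maximizing index $i^\star$ the in-cap summands equal $(1-\langle x_l,w_q\rangle)-(1-\langle x_{i^\star},w_q\rangle)-\tfrac12\|x_{i^\star}-x_l\|^2\langle x_{i^\star},w_q\rangle\le M_q(t)-M_q(t)=0$, while the cross-cap summands have modulus $\le\sqrt{1-\langle x_{i^\star},w_q\rangle^2}\le 2\sqrt\varepsilon$ and weight $\le e^{-\beta(1-\alpha)}$. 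Hence $\dot M_q\le 2n\sqrt\varepsilon\,e^{-\beta(1-\alpha)}$ a.e., so $M_q(t)\le\varepsilon+2n\sqrt\varepsilon\,e^{-\beta(1-\alpha)}t\le 2\varepsilon$ for $t\le\frac{\varepsilon}{n}e^{(1-\alpha)\beta}$ (using $\varepsilon<\tfrac1{16}$). Since the caps $\mathscr{S}_q(2\varepsilon)$ are pairwise disjoint — a common point forces $\alpha\ge1$, contradicting $\gamma(\beta)>0$ in \eqref{eq: gamma} — the particles in $\mathscr{S}_q(2\varepsilon)$ at any such time are exactly those that started in $\mathscr{S}_q(\varepsilon)$, none has escaped, the bootstrap closes, and $T_2\ge\frac{\varepsilon}{n}e^{(1-\alpha)\beta}$.

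\emph{Clause 2 (intra-cap contraction).} Fix $q$ with $m_q\ge2$; it suffices to bound the spread $\Phi_q(t):=\sum_{i,j}\|x_i(t)-x_j(t)\|^2$ (both sums over the $m_q$ relevant indices), since the left side of \eqref{eq: stick} is $\le\Phi_q(t)$. With $\bar x_q:=\frac1{m_q}\sum_i x_i$ one has $\Phi_q=2m_q^2(1-\|\bar x_q\|^2)$ and, using $\langle x_i,\dot x_i\rangle=0$, $\dot\Phi_q=-4m_q\sum_i\sum_l q_{il}\langle\proj_{x_i}\bar x_q,\proj_{x_i}x_l\rangle$. I would then isolate the ``constant'' piece $\tfrac1n e^{-\beta\delta(t)}$ of the in-cap weights, where $\delta(t):=\tfrac12\max\{\|x_i-x_l\|^2:\,x_i(0),x_l(0)\in\mathscr{S}_q(\varepsilon)\}$: it contributes $\tfrac{m_q}{n}e^{-\beta\delta}\|\proj_{x_i}\bar x_q\|^2$, the remaining in-cap part is bounded via $|\langle\proj_{x_i}\bar x_q,\proj_{x_i}x_l\rangle|\le\|x_i-x_l\|$, and the cross-cap part is $\le1$ in size with weight $\le e^{-\beta(1-\alpha)}$. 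Combining this with the variance estimate $\sum_i\|\proj_{x_i}\bar x_q\|^2=\sum_i(\|\bar x_q\|^2-\langle x_i,\bar x_q\rangle^2)\ge\frac{\Phi_q}{2m_q}(1-O(\Phi_q))$ and $\Phi_q\le4m_q^2$ yields, on the bootstrap interval,
\begin{equation*}
\dot\Phi_q(t)\ \le\ -\frac1n\,e^{-\beta\delta(t)}\,\Phi_q(t)\,\bigl(1-O_{\beta,n}(\Phi_q(t))\bigr)\ +\ C\,n^3\,e^{-\beta(1-\alpha)},
\end{equation*}
with $\delta(t)\le8\varepsilon$ throughout and $\delta(t)\to0$ as $\Phi_q(t)\to0$. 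Integrating in two regimes — the rate is $\ge\tfrac1n e^{-8\varepsilon\beta}$ always, so $\Phi_q$ falls below a threshold $\asymp(n^2\beta)^{-1}$ within time $O(ne^{8\varepsilon\beta})$, past which $e^{-\beta\delta}\ge e^{-1}$ and the rate is $\Omega(1/n)$, so $\Phi_q$ decays to its equilibrium level $\asymp n^3 e^{-\beta(1-\alpha)}$ within a further time $O(n\lambda\beta)$ — gives $T_1$ of the stated form and $\Phi_q(t)\le 2e^{-\lambda\beta}$, hence \eqref{eq: stick}, for all $t\in[T_1,T_2]$, provided the equilibrium level $n^3 e^{-\beta(1-\alpha)}=e^{-\beta(1-\alpha-O_{\beta,n}(1/\beta))}$ sits below $e^{-\lambda\beta}$ — precisely the constraint \eqref{eq: lambda.1} (cf.\ \Cref{rem: lambda.gamma}).

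\emph{Main obstacle.} The hard part is clause~2: one must show that the genuinely nonlinear, sphere-constrained in-cap interaction contracts the spread at rate $\asymp 1/n$, and the clean consensus/variance mechanism is only available once a cap is tight ($\delta\lesssim1/\beta$, so that the in-cap weights are all comparable to $1/n$); the slower, less-uniform regime $\delta\in(O_{\beta,n}(1/\beta),8\varepsilon]$ has to be handled separately and is what produces the $ne^{8\varepsilon\beta}$ term in $T_1$. Equally delicate is the quantitative bookkeeping: every $n$- and $\beta$-dependent constant must be tracked so that the equilibrium level of the differential inequality is $\exp(-\beta(1-\alpha))$ up to an $\exp(O_{\beta,n}(1))$ factor and no worse — this is exactly what pins down the admissible window \eqref{eq: lambda.1} for $\lambda$. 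By comparison, clause~1 and the bootstrap are routine once the two weight bounds are in hand.
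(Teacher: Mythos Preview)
Your Clause~1 is essentially the paper's Step~1: both track the worst radial coordinate $1-\langle x_i,w_q\rangle$ at its maximizing index, observe that in-cap terms have the right sign there, and bound the cross-cap contribution by $\le n e^{-(1-\alpha)\beta}$ (your slightly sharper $2n\sqrt\varepsilon\,e^{-(1-\alpha)\beta}$ changes nothing). The bootstrap and disjointness argument are the same.

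Clause~2 is where you diverge, and there is a real gap. The paper tracks not the total spread $\Phi_q$ but the \emph{minimum} inner product $\rho_q(t):=\min_{i,j\in\text{cap }q}\langle x_i(t),x_j(t)\rangle$. At the extremal pair $(i^\star,j^\star)$ one has, for every in-cap $k$, $\langle x_k,x_{j^\star}\rangle\ge\rho_q$, whence
\[
\langle\proj_{x_{i^\star}}x_k,\,x_{j^\star}\rangle
=\langle x_k,x_{j^\star}\rangle-\langle x_{i^\star},x_k\rangle\langle x_{i^\star},x_{j^\star}\rangle
\ge\rho_q\bigl(1-\langle x_{i^\star},x_k\rangle\bigr)\ge0,
\]
and retaining only $k=j^\star$ (with $a_{i^\star j^\star}\ge\tfrac1n e^{\beta(\rho_q-1)}$) already gives $\dot\rho_q\ge\tfrac2n\rho_q(1-\rho_q)e^{\beta(\rho_q-1)}-2ne^{-(1-\alpha)\beta}$, to which a scalar ODE comparison (their Lemma~2.1) applies. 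The point is that at the minimizer \emph{every} in-cap term is nonnegative, so there is no remainder to control; the two regimes you describe are then just the two phases in the analysis of that scalar ODE.

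In your $\Phi_q$ approach this extremal structure is lost. After subtracting the uniform piece $\tfrac1n e^{-\beta\delta}$ you must bound the signed sum $\sum_{i,l}r_{il}\langle\proj_{x_i}\bar x_q,\proj_{x_i}x_l\rangle$ with $r_{il}=q_{il}-\tfrac1n e^{-\beta\delta}\in[0,\tfrac1n]$. Your estimate $|\langle\proj_{x_i}\bar x_q,\proj_{x_i}x_l\rangle|\le\|x_i-x_l\|$ only yields a remainder of order $n^{-1}m_q^{5/2}\sqrt{\Phi_q}$, while the main contracting term is $\asymp\tfrac{m_q}{n}e^{-\beta\delta}\Phi_q$; when $\delta\sim8\varepsilon$ and $\beta$ is large the remainder dominates by a factor $\sim e^{8\varepsilon\beta}/\sqrt{\Phi_q}$, so the inequality $\dot\Phi_q\le-\tfrac1n e^{-\beta\delta}\Phi_q(1-O(\Phi_q))+\ldots$ does not follow. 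You correctly identify this ``less-uniform regime'' as the obstacle, but the proposal offers no mechanism beyond falling back on the rate $\tfrac1n e^{-8\varepsilon\beta}$, and even that requires the in-cap contribution to $\dot\Phi_q$ to be nonpositive---which is precisely what has not been established. Replacing $\Phi_q$ by $\rho_q$ (and hence the centroid by the extremal pair) removes the difficulty with essentially no extra work.
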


Since $1-\alpha>0$ and $1-\alpha=\Omega(1)$ by virtue of \eqref{eq: gamma}, the time $T_2$ which the particles take to escape from the caps $\mathscr{S}_q(2\varepsilon)$ is exponentially long. Furthermore since $\lambda=\Omega(1)$, after time $T_1$ all particles within a cap $\mathscr{S}_q(2\varepsilon)$ are exponentially close to each other. This is precisely the dynamic metastability phenomenon alluded to in the introductory remarks: all particles stay in the vicinity of $k$ points for an exponentially long period of time. See \Cref{fig: circle.metastability.1} for a simulation.  

\begin{figure}[h]
\centering
\includegraphics[scale=0.6]{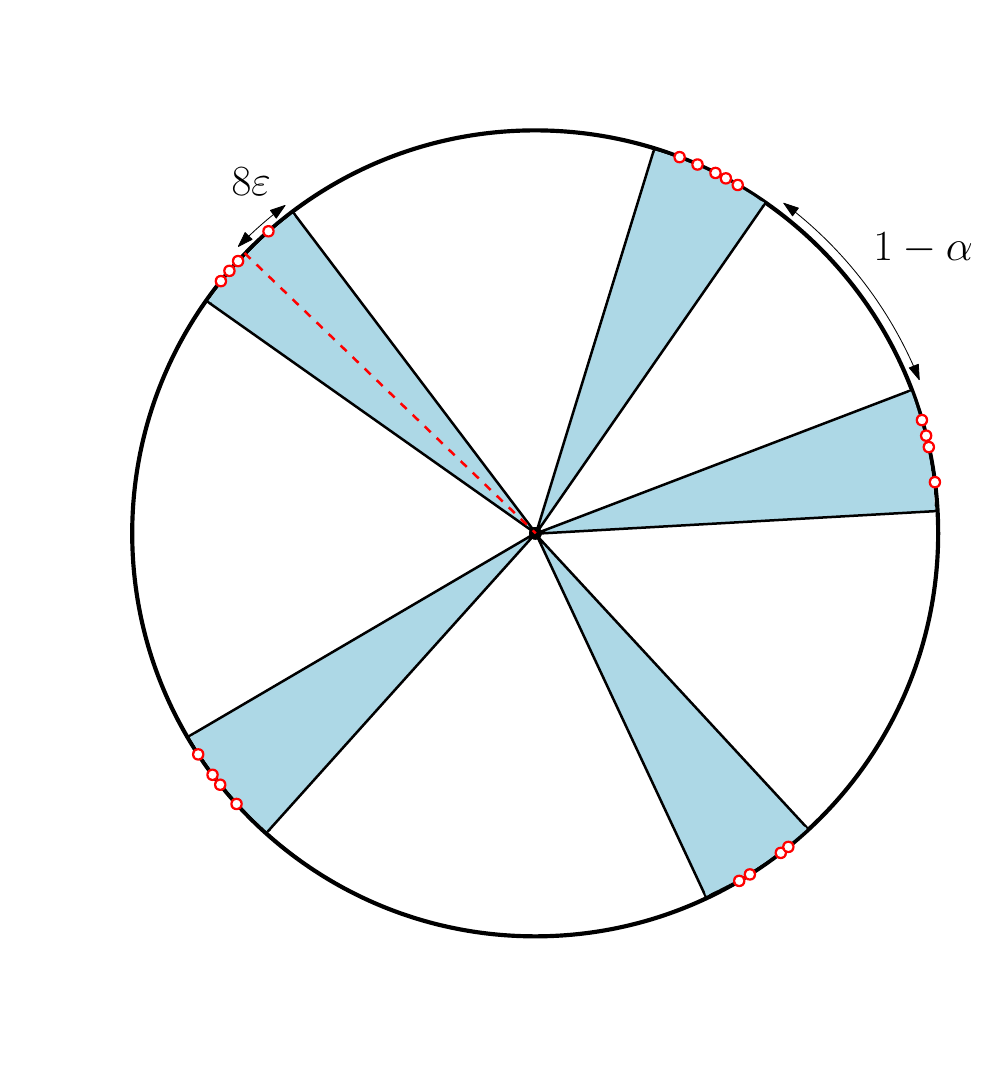}
\caption{An illustration of a $(\beta,\varepsilon)$-separated configuration on the circle $\sphere^{1}$. To clearly visualize distances, we not only show the spherical caps $\mathscr{S}_j(\varepsilon)$, but also their convex hull within the unit disk. The case of interest in our framework is that in which caps have an opening $\varepsilon$ that is much smaller than the distance $1-\alpha$ between them.}
\label{fig:illustration_separated}
\end{figure}

Before delving into a more extensive discussion, we offer some preliminary remarks.

\begin{remark}[On \eqref{eq: lambda.1}] \label{rem: lambda.gamma}
The upper bound we require in \eqref{eq: lambda.1} is precisely:
\begin{equation} \label{eq: lambda.3}
    \lambda<\min\left\{e^{\left(1-\alpha-\frac{1}{\beta}\log\frac{(\beta-1)\varepsilon}{\beta^2n^2e}\right)\beta}(1-e^{-\gamma\beta}), 1-\alpha-\frac{\log\left(\frac{2n^2}{1-e^{-\lambda_*\beta}}\right)}{\beta}-e^{-\lambda_*\beta}\right\}
\end{equation}
where $\lambda_*=\beta^{-1}\log(1/8\varepsilon)>0$. We use the first upper bound to ensure $T_2>T_1$, and the second in a \enquote{propagation of smallness} argument---see \eqref{eq: delta.alpha.cond} in the proof.
By straightforward numerical computations, using \eqref{eq: gamma} and $\varepsilon\in(0,\frac{1}{16})$ one finds that the second term in the upper bound in \eqref{eq: lambda.3} is greater than $\lambda_*$ (a useful lower bound that we use in \eqref{eq: lambda.4}).  
Moreover, since the upper bound in \eqref{eq: lambda.3} is---again by virtue of \eqref{eq: gamma}---of order $1-\alpha$ asymptotically as $\beta\to+\infty$, $\lambda$ can always be chosen so that \eqref{eq: lambda.2} holds as well.
\end{remark}

\begin{remark}[$\Omega(1)$]
    Recall the Bachmann-Landau notation: \(f(x) = \upomega(g(x))\) whenever \(\lim_{x \to +\infty} f(x)/g(x) = +\infty\). We chose to impose \(\gamma = \Omega(1)\) in \eqref{eq: gamma} and \(\lambda = \Omega(1)\) in \eqref{eq: lambda.2} solely to ensure that, in \Cref{thm: metastability}, the escape time \(T_2\) is exponentially large, and the smallness rate in \eqref{eq: stick} is exponentially small as functions of \(\beta\). One can replace \(\Omega(1)\) with \(\upomega(\beta^{-1})\) for both, and provided one doesn't choose an initial configuration that is asymptotically reduced to a point—e.g., \(\gamma \sim \log(\beta)/\beta\) or something similar---both the escape time and the smallness rate remain of exponential order.
\end{remark}

\begin{remark}[Low temperature]
    One could equivalently rephrase \Cref{hyp: init} and \Cref{thm: metastability} so that, instead of having \enquote{well-separated} initial configurations and an arbitrary $\beta$, one rather takes configurations that solely satisfy $1-\alpha-8\varepsilon>0$, and then takes $\beta$ sufficiently large so that $\gamma$ defined in \eqref{eq: gamma} is positive, and adjusts $\lambda$ appropriately.
\end{remark}

\begin{remark}[Different heights]
In \Cref{hyp: init}, all spherical caps are defined using the same height $\varepsilon$. 
The proof can however be adapted to employ different heights $\varepsilon_1,\ldots,\varepsilon_k$ per spherical cap without much difficulty. 
This modified proof yields the same result as discussed in the first step (see \ref{adapter le thm: explication}) of the proof of \Cref{thm: staircase} later on. 
For the sake of simplicity we choose to present the result in less generality. 
\end{remark}

\begin{remark}[Safety caps]
In \Cref{thm: metastability} the time $T_2$ that up to which all particles remain in the safety caps $\mathscr{S}_{q}(2\varepsilon)$. It is possible, at the cost of additional technicalities, to reduce the height parameter $2\varepsilon$ to $\varepsilon+\delta$ with $0<\delta\ll\varepsilon$, up to changing to a time \( T_2^* \) which is smaller than \( T_2 \) and modifying the constant $\lambda$. Once again, we choose to present our result in a less general form for the sake of simplicity.
\end{remark}

\begin{remark}[Time of collapse]
    The time $T_1$ beyond which the particles within caps remain exponentially close to each-other can scale exponentially with $\beta$ when $\varepsilon$ is not of order at least $\beta^{-1}$. We believe that this estimate is sub-optimal due to coarse bounds in Step 2 of the proof in {\bf \S}\ref{sec: direct.proof} (see \Cref{rem: variance} as well), and could be improved under further assumptions on the initial distribution of particles inside each spherical cap (for instance, equidistributed within each cap). We leave this open.
\end{remark}

\begin{figure}[h!]
    \centering
    \includegraphics[scale=0.65]{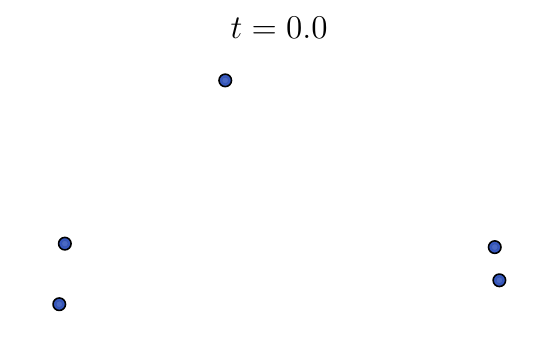}
    \includegraphics[scale=0.65]{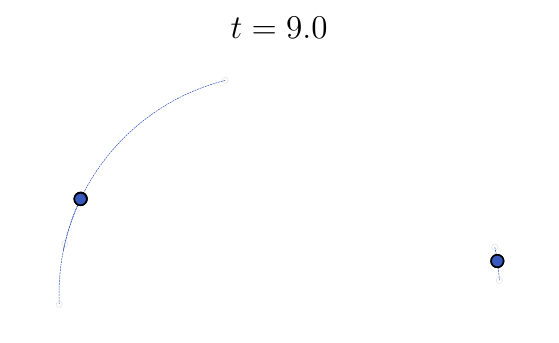}
    \includegraphics[scale=0.65]{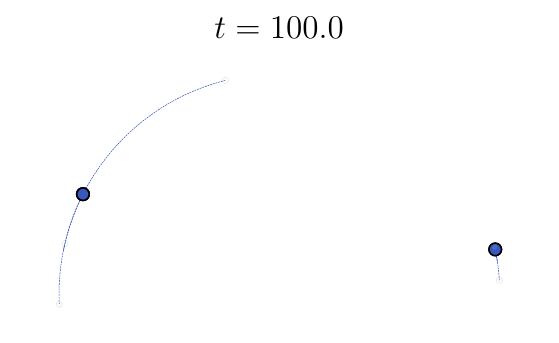}
    \includegraphics[scale=0.65]{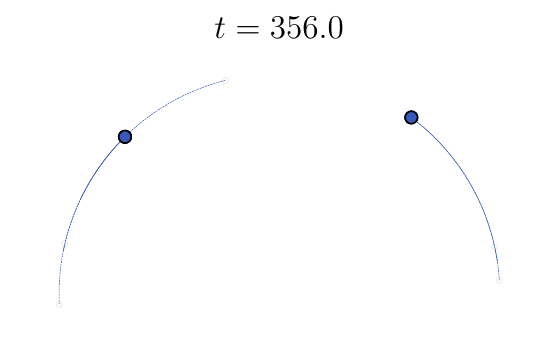}
    \caption{A stylized illustration of \Cref{thm: metastability}: here $d=2$, $n=5$ and $\beta=4$, initial points are distributed uniformly at random, and \eqref{SA} is solved using a forward Euler scheme with time step equal to $0.1$.
    Two caps appear %with $\varepsilon\approx0.225$, 
    and beyond time $T_1\sim 9$ particles within these caps are essentially merged. The dynamics remains in this metastable state at least up to time $T_2\sim 356$, a point beyond which the two merged rightmost particles exit the cap, $\mathscr{S}_1(2\varepsilon)$ say, and \Cref{thm: metastability} is no longer indicative. Continued in \Cref{fig: circle.metastability.2}.}
    \label{fig: circle.metastability.1}
\end{figure}

\subsection{Discussion and outline}

We further discuss the particular framework of \Cref{thm: metastability} as well as extensions thereof, whilst outlining the remainder of the paper.

\subsubsection{An energetic reinterpretation (\S \ref{sec: otto-reznikoff})} The proof of \Cref{thm: metastability}, which can be found in {\bf \S \ref{sec: direct.proof}}, does not make use of the gradient flow interpretation of \eqref{SA} nor \eqref{USA}, as we rather resort to ODE arguments and a fine analysis of the attention nonlinearity. 
One can however reinterpret \Cref{thm: metastability} almost entirely in terms of the energy $\mathsf{E}_\beta$ by following a general framework introduced by Otto and Reznikoff in \cite{otto2007slow}. To put it briefly: for a gradient flow (descent, say; ascent follows thereupon) of some smooth function $\mathsf{E}$ on an abstract manifold $\mathcal{M}$, if there is a subset $\mathcal{N}\subset\mathcal{M}$ on which $\nabla\mathsf{E}$ is of magnitude $\delta\ll1$, and in whose vicinity $\mathsf{E}$ satisfies a \emph{Polyak-\L{}ojasiewicz}-like inequality (see \eqref{eq: first.inequality}), then trajectories are quickly drawn to $\mathcal{N}$ and remain there for time at least 
$\delta^{-1}$. 
We present the framework of Otto and Reznikoff in {\bf \S \ref{sec: otto.reznikoff.sub}}, and prove that our energy $\mathsf{E}_\beta$ fits within this framework in {\bf \S \ref{sec: our.energy.OR}}. Finally, in {\bf \S \ref{sec: acceleration}}, we point out that outside of the metastable states, the gradient of the energy is \emph{accelerating}.

\subsubsection{On $(\beta,\varepsilon)$-separated configurations (\S \ref{sec: initial.configuration})}

It is also natural to inquire about the ubiquitousness of $(\beta,\varepsilon)$-separated configurations per \Cref{hyp: init}. 
In \Cref{prop: mixture.of.gaussians} in {\bf \S \ref{sec: gaussian.mixture}}, we prove that random points drawn from an appropriate Gaussian mixture, projected onto $\sphere^{d-1}$, satisfy this assumption with high probability. 
We cover points drawn from the uniform distribution on $\sphere^{d-1}$ in {\bf \S \ref{sec: unif.pts}}. When on the circle ($d=2$) with $n\gg1$, this condition is rarely true, yet numerical experiments presented in \cite{geshkovski2024mathematical} still indicate the appearance of metastability. We stipulate that the sharp assumption on the initial condition should be related to sufficiently large levels of the energy $\mathsf{E}_\beta$---see {\bf \S \ref{sec: energy.levels}}.

\subsubsection{The mean field regime (\S \ref{sec: mean.field})}

For the sake of generality we extend \Cref{thm: metastability} to the mean-field limit $n\to+\infty$ in {\bf \S \ref{sec: mean.field}}. Specifically, for an initial measure supported in the union of $k$ spherical caps (akin to \Cref{hyp: init}), the corresponding solution to the continuity equation for which \eqref{SA} or \eqref{USA} are the projected characteristics, also displays metastability.

\subsubsection{Beyond the escape time (\S \ref{sec: staircase})} \label{sec: beyond.the.escape.time}

\begin{figure}[h!]
    \centering
    \includegraphics[scale=0.65]{figures/10-23-093560.pdf}
    \includegraphics[scale=0.65]{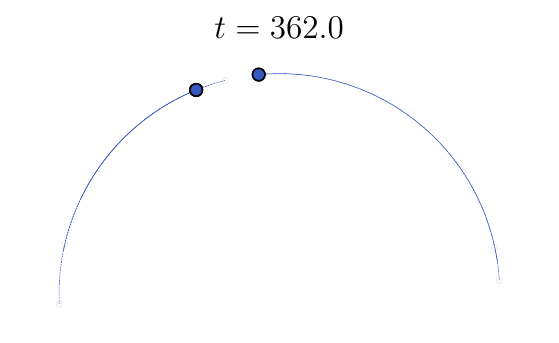}
    \includegraphics[scale=0.65]{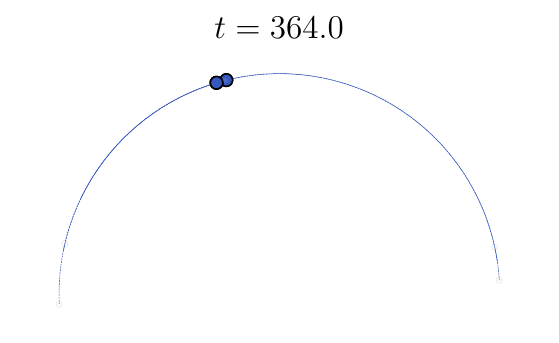}
    \includegraphics[scale=0.65]{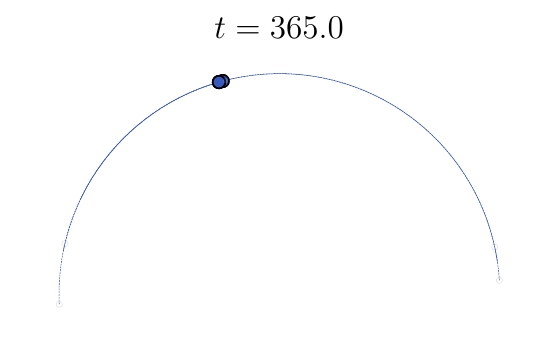}
    \caption{Continuing upon \Cref{fig: circle.metastability.1}, we see that particles keep converging until they meet at a cluster, which is the global maximum of $\mathsf{E}_\beta$. We recall that in this setup ($d=2$ and $\beta\not\gtrsim n^2$, nor are initial particles in some hemisphere), there is no proof of convergence to a cluster as of yet. A movie of the full evolution can be found at \href{https://github.com/HugoKoubbi/2024-transformers-dotm/blob/main/video[tape]/1.gif}{{\color{myblue}https://github.com/HugoKoubbi/2024-transformers-dotm/blob/main/video[tape]/1.gif}}.}
    \label{fig: circle.metastability.2}
\end{figure}

Finally, one may wonder if something can be said beyond the exit time $T_2$ in \Cref{thm: metastability}. In \Cref{fig: circle.metastability.2} we illustrate the continuation of \Cref{fig: circle.metastability.1}, which indicates that all particles eventually coalesce to a single cluster. The issue we encountered in extending our proof of \Cref{thm: metastability} to accommodate further escape times lies in propagating the $(\beta,\varepsilon)$-separateness assumption. 

Contrasting \Cref{fig: circle.metastability.2} to \ref{fig: circle.metastability.1} one sees different time-scales: once the two clusters of particles are \enquote{sufficiently close}, they take little time (compared to the time spent in the spherical caps) to collapse to a single cluster. Leveraging the gradient flow interpretation: the energy stays at a constant level over a long time-scale, before accelerating very quickly over a shorter time-scale, resulting in a jump to another constant level. In the presence of multiple initial clusters, one expects multiple jumps. 

\begin{figure}[h!]
\centering
\includegraphics[scale=0.75]{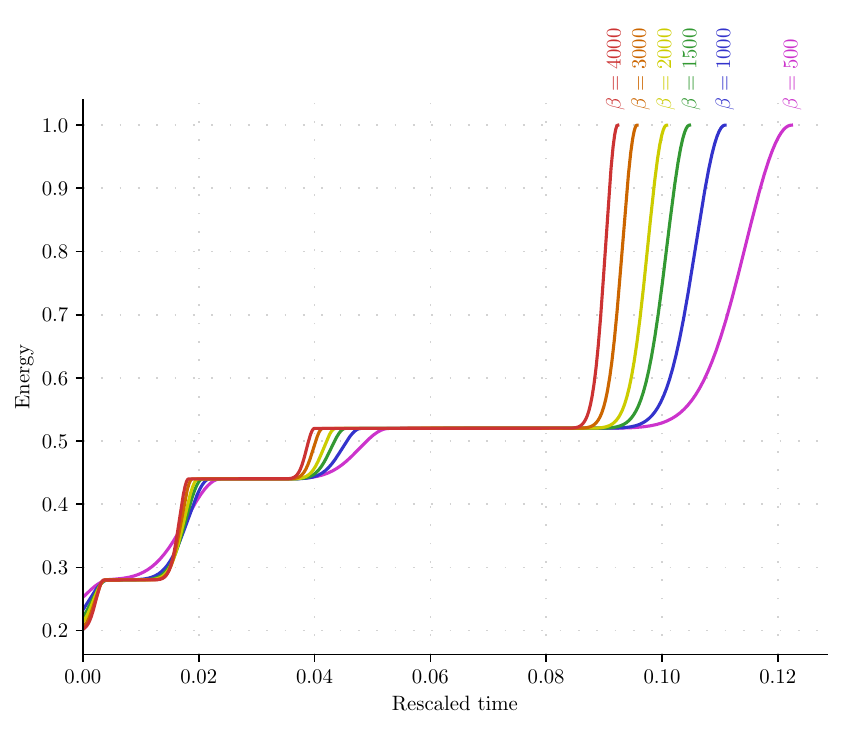}
\caption{\Cref{thm: staircase} entails that the energy of a trajectory along the time-scale defined in \eqref{compt: reparam} converges, uniformly in time, as $\beta\to+\infty$, to a piecewise constant-in time function which equals $1$ (designating the maximal value of $\mathsf{E}_\beta$) beyond some finite time $T_k>0$. Plateaux indicate metastable zones, and jumps in the energy level indicate rapprochement of nearby clusters.}
\label{fig: staircase}
\end{figure}

To formalize this heuristic, in {\bf \S \ref{sec: staircase}} we study the low-temperature limit $\beta\to+\infty$ with $d, n\geq 2$ fixed ($d=2$ in our considerations), and seek to find a time-rescaling under which the energy $\mathsf{E}_\beta$  reaches its global maximum in finite time. To this end we need to slightly modify the dynamics: we clump particles that are within a critical window of size roughly $\beta^{-1}$, and consider the dynamics associated with a single weighted particle instead of closely spaced particles. Such ideas are commonplace in \emph{renormalization group theory} in statistical physics \cite[Chapter 3]{cardy1996scaling}, where systems are simplified by integrating out short-distance degrees of freedom, effectively rescaling the problem to focus on the behavior at larger scales\footnote{We thank Bruno Loureiro for pointing out this link.}.
In this regard, in \Cref{thm: staircase}, for particularly well-prepared initial configurations, we construct a time-rescaling along which the energy has a staircase profile as $\beta\to+\infty$, and reaches its global maximum in finite time. See \Cref{fig: staircase} for an illustration.

Upon seeing \Cref{fig: staircase} one can also draw a connection to several works regarding the training dynamics of neural networks, in which a similar staircase profile for the loss function is observed. Therein, this behavior goes under the names \emph{incremental learning} or \emph{saddle-to-saddle dynamics}. The regime of incremental learning has been analyzed in the training dynamics of linear neural networks \cite{gidel2019implicit, jacot2021saddle}, diagonal neural networks \cite{berthier2023incremental, gissin2019implicitbiasdepthincremental, pesme2023saddle}, more general neural networks \cite{boursier2022gradient, boix2023transformers}, and also in tensor decomposition \cite{razin2021implicit, jiang2023algorithmic}.
An excellent reference for further results and discussions is the thesis \cite{pesme2024deep}. 

\subsection{Related work}

\paragraph{Self-attention dynamics} 
The particle system formulation of Transformers as in \eqref{SA} is set out in \cite{sander2022sinkformers, lu2019understanding}, without using layer normalization so that the particles evolve on $\reals^d$. The resulting system is related to many other variants studied in collective behavior---see \cite{tadmor2022swarming} and the references therein. As a matter of fact, \eqref{SA} is itself a generalization of the celebrated Kuramoto model \cite{kuramoto1975self}. In \cite{sander2022sinkformers} a variant of \eqref{SA} in which the interaction terms yield a bi-stochastic matrix at every time $t$ instead of solely a stochastic one is additionally introduced; this model is further delved-into in \cite{agarwal2024iterated}. 
The particle system formulation is extended to include \emph{masked} self-attention in \cite{castin2024how}---this is relevant for \emph{decoder} Transformer models, in contrast to \emph{encoder} models which largely underpin our motivation.
Considering the model on $\reals^d$, the authors in \cite{geshkovski2024emergence} prove clustering in long time, in the presence of various other parameters (other than just $\beta$), under an appropriate time-rescaling which renders the equation rather comparable to that on $\sphere^{d-1}$. 
These results are first extended in \cite{koubbi2024impact}, where stability of clustering with respect to perturbations of the initial conditions and of the parameters is shown, and then in \cite{alcalde2024clustering}, where the zero temperature model is analyzed in discrete time.

\paragraph{The interaction energy} 

The study of minima (or maxima) of interaction energies such as \eqref{eq: interaction.energy} is a classical question not only in physics but also in combinatorics, particularly in relation to sphere packing problems. Indeed, for a wide array of monotonic potentials $f:[-1, 1]\to\reals_{\geq0}$, encompassing $s\mapsto e^{\beta s}$ which yields \eqref{eq: interaction.energy}, the minima of $\mathsf{E}(X)=\sum\sum f(\langle x_i, x_j\rangle)$ are optimal configurations on $\sphere^{d-1}$ \cite{cohn2007universally}. Versions thereof on $\reals^d$ with radial potentials are also canonical and include the Gaussian core model \cite{stillinger1976phase, cohn2018gaussian}, Coulomb-Riesz potentials \cite{saff1997distributing, petrache2020crystallization}, and so on.

With regard to the particular example of \eqref{eq: interaction.energy}, \cite{markdahl2017almost, criscitiello2024synchronization} prove that $\mathsf{E}_\beta$ has no local maxima for $\beta\geq0$ and $d\geq3$, improving upon \cite{geshkovski2024mathematical}, and also improve $\beta\lesssim\frac1n$ to $\beta\leq1$ when $d=2$. See 
\cite{mcrae2024benign} for related work in this regard.
In the context of the related Kuramoto model, we also refer the reader to \cite{abdalla2022expander, ling2019landscape} for further results on benign landscapes, where the energy contains additional multiplicative coefficients stemming from the adjacency matrix of various random graphs and/or expanders. These are obtained as semidefinite relaxations of diverse combinatorial optimization problems.

\paragraph{Slow motion of gradient flows}
The starting point of our study is \cite{otto2007slow}, which, as alluded to in what precedes, presents an abstract framework for studying slow motion of gradient flows. The application in question is the Allen-Cahn equation in one space dimension
\begin{equation*}
    \partial_t u - \varepsilon^2\partial_{xx} u = u(1-u^2),
\end{equation*}
which is the $L^2$-gradient flow of the scalar Ginzburg-Landau energy 
$$u\mapsto\frac{\varepsilon^2}{2}\int (\partial_x u)^2 \diff x + \frac14 \int (u^2-1)^2\diff x.$$
The dynamics thereof has been a major area of research over the past forty years \cite{carr1989metastable, fusco1989slow, pego2007lectures}, and we only discuss it briefly for completeness. 
The limit $u_\infty=\lim_{t\to+\infty}u(t)$ exists and satisfies 
\begin{equation} \label{eq: ac.ode}
    -\varepsilon^2 \partial_{xx} u +u(u^2-1)=0.
\end{equation}
There are exactly two stable equilibrium states: $u\equiv1$ and $u\equiv -1$. When $\varepsilon>0$, $u$ approaches $1$ where $u>0$ initially and $-1$ where $u<0$ initially. Walls form between these domains, at positions corresponding roughly to zeros in the initial data. A domain wall has characteristic width of order $\varepsilon$ and can be described explicitly as the solution of an ODE remeniscent to \eqref{eq: ac.ode}. The domain structure one then expects to develop consists of arbitrarily placed domain walls of characteristic width $\varepsilon$, separating domains in which $u$ is exponentially close to the stable states $\pm1$. This is known as \emph{coarsening}. As the domain walls move extremely slowly, this behavior is referred to as dynamic metastability.

\paragraph{Stochastic dynamics}
Metastability is also extensively studied in the literature on the physics of disordered systems. Contrary to our setting, in disordered systems the energy landscape $\mathsf{E}$ may have plenty of local minima. 
The question of interest is to qualitatively describe the Langevin dynamics 
\begin{equation*}
\diff X_t = -\nabla \mathsf{E}(X_t) \diff t + \sqrt{2} \diff B_t
\end{equation*}
where $(B_t)_{t\geq0}$ denotes the standard Brownian motion.
The mathematically rigorous analysis of metastability for such dynamics dates back to the work of Freidlin and Wentzell in the early 1970s \cite{freidlin1998random} (see \cite{bovier2016metastability} for more recent developments), and is based on large deviation theory on path-space. On short time scales, trajectories of the system follow those of the system without stochasticity and thus converge toward one of the attractors. On much longer time scales, the stochastic perturbation allows and facilitates the system to perform transitions between stable attractors.
Although metastability is inherently dynamic, there are methods based on a study of the energy landscape and the critical points relying on replica theory---see \cite{ros2022high} for a recent treatise. 
%The fact that the system is trapped for an exponentially long time in the vicinity of a local minimum is also referred to as \emph{aging dynamics}. We refer the reader to \cite{bouchaud1992weak, Biroli_2005} for physics and \cite{arous2001aging, bovier2016metastability} for mathematics literature. 

%It might be of interest to study this model from this perspective by considering disorder coming from the attention matrices i.e $Q^{T}K$ and $V$ for example.

%\subsection*{Notation}
%We denote $\<\cdot,\cdot\>$ the usual inner product on $\reals^{d}$ and by $\|\cdot\|$ the associated norm. 
%we denote $\sphere^{d-1}$ the $(d-1)$ dimensional sphere. 
%We define the orthogonal projection of $\proj_{x}(y):\reals^{d}\rightarrow x^{\perp}$ as $\proj_{x}(y)=y-\<x,y\>x.$
%Let $\mathcal{M}$ be a Riemannian manifold, and let $x(\cdot):\reals \rightarrow \mathcal{M}$ be a differentiable map then we denote $\dot{x}(t)\in T_{x(t)}(\mathcal{M})$ the derivative of $x$ at time $t.$

\subsubsection*{Acknowledgments} 
We thank Rapha\"el Berthier, Etienne Boursier, Bruno Loureiro, Idriss Mazari and Theodor Misiakiewicz for insightful discussions, and Fabrice Béthuel, Arnaud Guyader and Sinho Chewi for helpful references.
\smallskip

\noindent
\textit{Funding.} This project was conducted while H. Koubbi was funded by the Inria team \enquote{Megavolt}. B.G. acknowledges financial support from the French government managed by the National Agency for Research under the France 2030 program, with the reference ”ANR-23-PEIA-0004”.
The work of Y.P. was supported in part by the MIT-IBM Watson AI Lab and by the National Science Foundation under Grant No CCF-2131115.
P.R. was supported by NSF grants DMS-2022448, CCF-2106377, and a gift from Apple.

\section{A direct proof} \label{sec: direct.proof}

We first provide an ODE-based proof of \Cref{thm: metastability} which uses solely the specific structure of the equation and does not rely on any abstract arguments.

\begin{proof}[Proof of \Cref{thm: metastability}]
We focus on \eqref{SA}, but the arguments are identical for \eqref{USA}. Set 
\begin{equation*}
a_{ij}(t) = \frac{e^{\beta\langle x_i(t), x_j(t)\rangle}}{\displaystyle\sum_{k=1}^n e^{\beta \langle x_i(t), x_k(t)\rangle}}.
\end{equation*}
In the following, we drop the dependence on $\beta$ for $\alpha$ and $\varepsilon$ for the sake of readability.
For ease of reading, we also split the proof in several steps.
    
\subsubsection*{Step 1. Lower-bounding the escape time}
Naturally,
\begin{equation*}
    T_2=T_{\mathrm{esc}}:=\inf\left\{t\geq0\colon\exists i\in\{1,\ldots,n\} \text{ such that } x_i(t)\notin \bigcup_{q=1}^{k}\mathscr{S}_q(2\varepsilon)\right\}. %\text{ for all } q\in\{1,\ldots,k\}\right\}.
\end{equation*}
For $q\in\{1,\ldots,k\}$ we also define 
\begin{equation*}
    T_{\text{esc}}(q):=\inf\{t\geq0\colon x_i(0)\in\mathscr{S}_q(\varepsilon) \text{ but } x_i(t)\notin \mathscr{S}_q(2\varepsilon)\}.
\end{equation*}
Observe that 
$$T_{\mathrm{esc}} = \underset{q\in \{1,\ldots,k\}}{\min} T_{\mathrm{esc}}(q),$$ 
whence we can localize our analysis to a single cap to begin with.
Take an arbitrary $q\in\{1,\ldots,k\}$ and consider
\begin{equation*}
    \eta_q(t):=\min_{x_i(t)\in \mathscr{S}_q(2\varepsilon)}\langle x_i(t), w_q\rangle.
\end{equation*}
Setting
$$i(t)\in\argmin_{i\colon x_i(t)\in\mathscr{S}_q(2\varepsilon)}\langle x_i(t),w_q\rangle,$$ 
for $t\in[0, T_{\mathrm{esc}}]$ we compute\footnote{To compute this derivative, we first fix $t_0$, compute the derivative of $\<x_{i_0(t)}(t), w_q \>$, and evaluate at $t=t_0.$}
\begin{align*}
    \dot{\eta}_{q}(t)&=\sum_{j\colon x_{j}(t)\in \mathscr{S}_{q}(2\varepsilon)}a_{i(t)j}(t)\left\< \proj_{x_{i(t)}(t)}(x_{j}(t)),w_q\right\>\\
    &\quad+\sum_{j\colon x_{j}(t)\notin \mathscr{S}_{q}(2\varepsilon)} a_{i(t)j}(t)\left\< \proj_{x_{i(t)}(t)}(x_{j}(t)),w_q\right\>.
\end{align*}
On one hand, we have 
\begin{equation}\label{compt: bound_far}
    \left|\sum_{j\colon x_{j}(t)\notin \mathscr{S}_{2q}(\varepsilon)} a_{i(t)j}\left\< \proj_{x_{i(t)}(t)}(x_{j}(t)),w_q\right\>\right|\leq ne^{-(1-\alpha)\beta},
\end{equation}
and by plugging \eqref{compt: bound_far} in the previous identity, we find 
\begin{equation}  \label{compt: ineq_1}
    \dot{\eta}_q(t)\geq \sum_{j\colon x_{j}(t)\in \mathscr{S}_{q}(2\varepsilon)}a_{i(t)j}(t)\left\< \proj_{x_{i(t)}(t)}(x_{j}(t)),w_q\right\>-ne^{-(1-\alpha)\beta}.
\end{equation}
By definition of $i(t)$, for all indices $j$ such that $x_j(t)\in\mathscr{S}_{q}(2\varepsilon)$ we have
\begin{equation}\label{compt: minim}
    \left\< x_{j}(t),w_q\right\> \geq \< x_{i(t)}(t),w_q\>,
\end{equation}
so by expanding $\<\proj_{x_{i(t)}(t)}(x_{j}(t)),w_q\>$ we see that the sum in \eqref{compt: ineq_1} is nonnegative.
Going back to \eqref{compt: ineq_1} we end up with
\begin{equation*}
    \dot{\eta}_q(t)\geq - n e^{-(1-\alpha)\beta}.
\end{equation*}
Thence 
\begin{equation*}
    \eta_q(t)-\eta_q(0)\geq -nte^{-(1-\alpha)\beta},
\end{equation*}
and so 
\begin{equation*}
    \eta_q(t)\geq 1-\varepsilon-nte^{-(1-\alpha)\beta}.
\end{equation*}
Consequently, as long as $t\leq \frac{\varepsilon}{n}e^{(1-\alpha)\beta}$ we have $\eta_q(t)\geq 1-2\varepsilon$, and so 
\begin{equation*}
    T_{\mathrm{esc}}(q)\geq \frac{\varepsilon}{n}e^{(1-\alpha)\beta}.
\end{equation*}

\subsubsection*{Step 2. Monotonicity within caps}\label{mainproof:step2}

We now show that beyond time $T_1\in(0, T_\mathrm{esc})$, all particles within a cap will remain exponentially close. To this end, for $q\in\{1,\ldots,k\}$ and $t\in[0, T_\mathrm{esc}]$ we consider
\begin{equation*}
    \rho_q(t) := \min_{x_i(t),x_j(t)\in\mathscr{S}_q(2\varepsilon)}\langle x_i(t), x_j(t)\rangle.
\end{equation*}
(Note that 
\begin{equation*}
    \frac{1}{2}\max_{x_i(t),x_j(t)\in\mathscr{S}_q(2\varepsilon)}\|x_i(t)-x_j(t)\|^2 = 1-\rho_q(t)
\end{equation*}
for reference.)
\begin{comment}
Let $0\leq t\leq T_\mathrm{esc}$ and $q\in \{1,\dots,k\}.$ Consider
    \begin{equation*}
        \rho_{q}(t):=\underset{\substack{(i,j)\in\{1,\ldots,n\}^2\\ x_i(t)\neq x_j(t)\in \mathscr{S}_q(\varepsilon)}}{\min} \<x_{i}(t),x_{j}(t) \>
    \end{equation*}    
\end{comment}
We also consider $i(t),j(t)$ (both depending on $q$) such that
\begin{equation*}
    \left(i(t),j(t)\right)\in\underset{\substack{(i,j)\in\{1,\ldots,n\}^2\\ x_i(t)\neq x_j(t)\in \mathscr{S}_q(\varepsilon)}}{\argmin} \<x_{i}(t),x_{j}(t) \>.
\end{equation*}
We compute as before
\begin{equation*}
     \dot{\rho_{q}}(t)=\sum_{k=1}^{n}a_{i(t)k}\left\<\proj_{x_{i(t)}(t)}(x_k(t)),x_{j(t)}(t)\right\> +\sum_{k=1}^{n}a_{j(t),k} \left\<\proj_{x_{j(t)}(t)}(x_{k}(t)),x_{i(t)}(t)\right\>.
\end{equation*}
Bounding any of the two sums in the above identity is clearly the same, so we focus on a single one, the first one say. As in the first step, we split the sum into particles lying in the cap $\mathscr{S}_q(2\varepsilon)$ and those in the complement; we first see that
\begin{align} \label{eq: rhoq.lb}
    &\sum_{k\colon x_k(t)\in \mathscr{S}_{q}(2\varepsilon)}a_{i(t)k}\left(\left\<x_{k}(t),x_{j(t)}(t) \right\>-\left\<x_{k}(t),x_{i(t)}(t) \right\> \left\< x_{i(t)}(t),x_{j(t)}(t)\right\>\right)\nonumber\\
    &\hspace{1cm}\geq \sum_{k\colon x_k(t)\in \mathscr{S}_{q}(2\varepsilon)}a_{i(t)k}\left\<x_{i(t)}(t),x_{j(t)}(t)\right\>\left(1-\left\<x_{k},x_{i(t)}(t)\right\>\right)
\end{align}
where we used $  \<x_{k}(t),x_{j(t)}(t) \>\geq \< x_{i(t)}(t),x_{j(t)}(t)\>$. 
Since $a_{ij}(t)\geq \frac{1}{n}e^{\beta(\rho_{q}(t)-1)}$, we also find 
\begin{align} \label{eq: rhoq.lb2}
    &\sum_{k\colon x_k(t)\in \mathscr{S}_{q}(2\varepsilon)}a_{i(t)k}\left\<x_{i(t)}(t),x_{j(t)}(t)\right\>\left(1-\left\<x_{k},x_{i(t)}(t)\right\>\right)\nonumber\\
    &\hspace{1cm}\geq \frac{1}{n}\rho_q(t)(1-\rho_q(t))e^{\beta(\rho_q(t)-1)},
\end{align}
where we only keep the $j(t)$-th term in the sum above. On the other hand, we also have 
\begin{equation} \label{compt: bound_far_again}
    \left|\sum_{k\colon x_{k}(t)\notin \mathscr{S}_{q}(2\varepsilon)} a_{i(t)k}\left\< \proj_{x_{i(t)}(t)}(x_{k}(t)),x_{i(t)}(t)\right\>\right|\leq ne^{-(1-\alpha)\beta}.
\end{equation}
All in all, combining \eqref{eq: rhoq.lb}, \eqref{eq: rhoq.lb2}, and \eqref{compt: bound_far_again}, we deduce
\begin{equation} \label{eq: ze.equation}
         \dot{\rho_{q}}(t)\geq \frac{2}{n}\rho_{q}(t)(1-\rho_{q}(t))e^{\beta(\rho_{q}(t)-1)}-2ne^{-(1-\alpha)\beta}.
\end{equation}

\subsubsection*{Step 3. The collapse time}

Fix $\lambda>0$ as in the statement, and assume furthermore that 
\begin{equation} \label{eq: lambda.4}
\lambda>\frac{1}{\beta}\log\left(\frac{1}{8\varepsilon}\right).    
\end{equation}
Assuming \eqref{eq: lambda.4} is without loss of generality, since if \eqref{eq: stick} holds for such $\lambda$, it also holds for all smaller, positive $\lambda$. (See also \Cref{rem: lambda.gamma}.)
We wish to use \eqref{eq: ze.equation} to find a time beyond which $1-\rho_q(t)$ is exponentially small. To this end, consider
\begin{equation*}
    T_{*}(q):=\inf\left\{t\in[0,T_{\mathrm{esc}}]\colon \rho_q(t)(1-\rho_q(t))e^{\beta(1-\rho_q(t))}\leq 2n^2 e^{-(1-\alpha)\beta}\right\},
\end{equation*}
with $\inf\varnothing = +\infty$. We show 
that $\underset{q\in\{1,\ldots,k\}}{\max}T_*(q)\leq T_{\mathrm{esc}}$. Suppose  $T_*(q)>T_{\mathrm{esc}}$. Then 
\begin{equation} \label{eq: comparison}
    \dot{\rho}_q(t)\geq \frac{1}{n}\rho_q(t)(1-\rho_q(t))e^{\beta(\rho_q(t)-1)}
\end{equation}
for all $t\in[0,T_{\mathrm{esc}}]$. In particular, $t\mapsto\rho_q(t)$ is increasing, and recall that it is bounded from above by $1$. The following lemma is of crucial use.

\begin{lemma}[Until collapse] 
\label{lem: eminem}
    Suppose $\beta>1$ and $c>0$. For $u_0\in(0,1]$ consider $u\in\mathscr{C}^0(\reals_{\geq0}; [0, 1])$ the unique solution to the Cauchy problem
    \begin{equation*}
    \begin{cases}
        \dot{u}(t) = u(t)(1-u(t))e^{\beta(u(t)-1)} &\text{ for } t\geq0\\
        u(0) = u_0.
    \end{cases}
    \end{equation*}
    Then,
    \begin{equation*}
        \inf\left\{t\geq0\colon 1-u(t)\leq e^{-c\beta}\right\}\leq \frac{e^{\beta(1-u_0)}}{u_0} + \frac{\beta^2\cdot c\cdot e}{\beta-1}.
    \end{equation*}
\end{lemma}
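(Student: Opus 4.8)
The plan is to collapse the problem to a one-dimensional autonomous ODE for the gap $v(t) := 1 - u(t)$ and then to estimate a single explicit integral obtained by separation of variables. Substituting, the equation becomes $\dot v(t) = -v(t)\bigl(1-v(t)\bigr)e^{-\beta v(t)}$ with $v(0) = v_0 := 1 - u_0 \in [0,1)$. Two degenerate cases are immediate: if $u_0 = 1$ then $v \equiv 0$ and the infimum in the statement equals $0$; likewise if $0 < v_0 \leq e^{-c\beta}$ then $1 - u(0) \leq e^{-c\beta}$ already, so the infimum is $0$. Hence I may assume $v_0 \in (e^{-c\beta}, 1)$.

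First I would record the qualitative behaviour. On $(0,1)$ the right-hand side is strictly negative, so $v$ is strictly decreasing; being bounded below by $0$ it converges, and the limit must be $0$ (otherwise $\dot v$ would stay bounded away from $0$ and $v$ would diverge). Therefore $v(t)\downarrow 0$, and there is a unique time $T \geq 0$ with $v(T) = e^{-c\beta}$, which is exactly the infimum we must bound. Separating variables along $[0,T]$ yields the exact identity
\begin{equation*}
T = \int_{e^{-c\beta}}^{v_0} \frac{e^{\beta v}}{v(1-v)}\,\mathrm{d}v,
\end{equation*}
so it suffices to show this integral is at most $\frac{e^{\beta v_0}}{1-v_0} + \frac{\beta^2 c e}{\beta - 1} = \frac{e^{\beta(1-u_0)}}{u_0} + \frac{\beta^2 c e}{\beta-1}$.

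The crux is to split the integral at the single threshold $v = \beta^{-1}$; this particular choice is what makes both pieces come out with the right order, whereas cruder moves---bounding $\frac{1}{1-v} \leq 2$ on $[0,\frac{1}{2}]$, or using the partial-fraction identity $\frac{1}{v(1-v)} = \frac1v + \frac{1}{1-v}$---lose factors that are exponentially large in $\beta$ and are useless here. On the lower part $\{v \leq \beta^{-1}\}$ I would use $e^{\beta v} \leq e$ and $1 - v \geq \frac{\beta-1}{\beta}$ to bound that contribution by $\frac{e\beta}{\beta-1}\int_{e^{-c\beta}}^{\beta^{-1}} \frac{\mathrm{d}v}{v} = \frac{e\beta}{\beta-1}\bigl(c\beta - \log\beta\bigr) \leq \frac{\beta^2 c e}{\beta-1}$, where $\log\beta \geq 0$ because $\beta > 1$. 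On the upper part $\{v \geq \beta^{-1}\}$ I would instead use $\frac1v \leq \beta$ and $1 - v \geq 1 - v_0$, bounding that contribution by $\frac{\beta}{1-v_0}\int_{\beta^{-1}}^{v_0} e^{\beta v}\,\mathrm{d}v \leq \frac{\beta}{1-v_0}\cdot\frac{e^{\beta v_0}}{\beta} = \frac{e^{\beta v_0}}{1-v_0}$. Summing the two pieces gives exactly the claimed bound. One still has to check the two boundary configurations in which $[e^{-c\beta},v_0]$ does not straddle $\beta^{-1}$---namely $e^{-c\beta} \geq \beta^{-1}$ (then only the upper estimate is used) and $v_0 \leq \beta^{-1}$ (then only the lower one)---but each is handled verbatim by the very same two inequalities.

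I do not expect a genuine obstacle: the only real idea is the choice of split point $\beta^{-1}$, and the only care required is (i) justifying $v(t)\downarrow 0$ so that the first-hitting-time formula for $T$ is legitimate, and (ii) bookkeeping which of the two elementary estimates applies when the integration interval lies entirely on one side of $\beta^{-1}$.
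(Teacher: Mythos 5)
Your proof is correct. The pivotal idea---splitting at the threshold $1-u = \beta^{-1}$---is exactly the one the paper uses, but your route to the bound is genuinely different and self-contained: you pass to the exact first-passage-time integral
\begin{equation*}
T = \int_{e^{-c\beta}}^{v_0} \frac{e^{\beta v}}{v(1-v)}\,\mathrm{d}v, \qquad v_0 = 1-u_0,
\end{equation*}
via separation of variables and then bound the two halves of that integral by elementary pointwise estimates on the integrand. The paper instead works in the time domain: on the first phase ($1-u \geq \beta^{-1}$) it observes that $f(t) := e^{\beta(1-u(t))}$ decreases at least linearly at rate $u_0$ (because $\dot f = -\beta e^{\beta(1-u)}\dot u \leq -u_0$ under the phase-1 bounds $u \geq u_0$, $1-u \geq \beta^{-1}$), and on the second phase ($1-u \leq \beta^{-1}$) it uses $\dot u \geq (1-u)\tfrac{\beta-1}{\beta e}$ and Gr\"onwall to get exponential decay of $1-u$. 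The intermediate bounds agree piece for piece: your estimate $\int_{\beta^{-1}}^{v_0} \leq e^{\beta v_0}/(1-v_0)$ reproduces the paper's $t_1 \leq e^{\beta(1-u_0)}/u_0$, and your estimate $\int_{e^{-c\beta}}^{\beta^{-1}} \leq \beta^2 c e/(\beta-1)$ reproduces its $t_2 - t_1 \leq \beta^2 c e/(\beta-1)$, with the same inequalities ($u \geq u_0$, $1-u \geq \beta^{-1}$, and $e^{\beta(u-1)} \geq e^{-1}$) just applied statewise rather than timewise. So the two proofs differ in packaging rather than substance; yours is arguably the cleaner presentation, since it replaces the two-phase differential argument and the auxiliary function $f$ with a single explicit integral and two one-line bounds, and it makes the role of the $\beta^{-1}$ threshold more transparent.
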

We postpone the elementary proof to \Cref{lem: appen_time_bound_collapse}. 
We combine \eqref{eq: comparison} and the comparison principle for scalar ODEs along with \Cref{lem: eminem} with $u_0=\rho_q(0)$: we have $\rho_q(tn)\geq u(t)$, thence 
\begin{equation*}
    \inf\left\{t\geq0\colon 1-\rho_q(tn)\leq e^{-\lambda\beta}\right\}\wedge T_{\mathrm{esc}}\leq \inf\left\{t\geq0\colon 1-u(t)\leq e^{-\lambda\beta}\right\}\wedge T_{\mathrm{esc}}.
\end{equation*}
So
\begin{align*}
    T_1(q):=\inf\left\{t\geq0\colon 1-\rho_q(t)\leq e^{-\lambda\beta}\right\}\wedge T_{\mathrm{esc}}&\leq \frac{ne^{\beta(1-\rho_q(0))}}{\rho_q(0)}+\frac{n\cdot \beta^2\cdot \lambda\cdot e}{\beta-1}\\ 
    &\leq 2ne^{8\varepsilon\beta}+\frac{n\cdot \beta^2\cdot \lambda\cdot e}{\beta-1}
\end{align*}
where we used $\rho_q(0)\geq1-8\varepsilon>\frac12$.
The upper bound above is independent of $q$ and is strictly smaller than $\frac{\varepsilon}{n} e^{(1-\alpha)\beta}$ because of the first of the upper bounds in \eqref{eq: lambda.3}, which is a contradiction with the lower bound on $T_{\mathrm{esc}}$ deduced in Step 1. Therefore 
$$T_1:=\underset{q\in\{1,\ldots,k\}}{\max} T_1(q)\wedge T_*(q)<T_{\mathrm{esc}}.$$

\subsubsection*{Step 4. Within caps, particle stick}

The previous step entails 
\begin{equation} \label{eq: an.ineq}
    1-\rho_q(T_1)\leq e^{-\lambda\beta}.
\end{equation}
We seek to propagate this smallness for all times up to $T_2$. This follows from the following lemma.

\begin{lemma}[Propagation] 
\label{lem:collapsetime}
Fix $\beta>1$, and consider $\delta\in(0,1)$ and $\alpha\in(-1,1)$ such that
\begin{equation} \label{eq: delta.alpha.cond}
    \frac{1}{n}\delta(1-\delta)e^{-\delta\beta}> ne^{-(1-\alpha)\beta}.
\end{equation}
Suppose $(x_i(0))_{i=1}^n\in(\sphere^{d-1})^n$ is such that 
\begin{equation*}
    \left\langle x_i(0), x_j(0)\right\rangle\geq 1-\delta
\end{equation*}
for some $I\subset\{1,\ldots, n\}$ and for all $(i, j)\in I^2$. Let $(x_i(\cdot))_{i=1}^n\in \mathscr{C}^0(\mathbb{R}_{\geq0};(\sphere^{d-1})^n)$ be the unique solution to the corresponding Cauchy problem for \eqref{SA} or \eqref{USA}, and suppose that for all $i\in I$ and $k\in I^c$,
\begin{equation*}
    \left\langle x_i(t), x_k(t)\right\rangle\leq\alpha \hspace{1cm}\text{ for all } t\in[0,T],
\end{equation*}
Then for all $(i,j)\in I^2$,
\begin{equation*}
    \langle x_i(t), x_j(t)\rangle \geq  1-\delta \hspace{1cm}\text{ for all } t\in[0,T].
\end{equation*}
\end{lemma}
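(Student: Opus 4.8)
The plan is to follow the strategy of Step 2 in the proof of \Cref{thm: metastability}: track the single scalar quantity
\begin{equation*}
    \rho(t) := \min_{(i,j)\in I^2}\langle x_i(t), x_j(t)\rangle,
\end{equation*}
for which the hypothesis reads $\rho(0)\geq 1-\delta$ and the desired conclusion reads $\rho(t)\geq 1-\delta$ on $[0,T]$ (since $\langle x_i(t),x_j(t)\rangle\geq\rho(t)$ for every $(i,j)\in I^2$), and then to run a first-exit (barrier) argument for $\rho$ at the level $1-\delta$. I will work with \eqref{SA}; the computation for \eqref{USA} is identical.

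Concretely, I would set $T':=\inf\{t\in[0,T]\colon \rho(t)<1-\delta\}$, with $\inf\varnothing=+\infty$, and assume toward a contradiction that $T'\leq T$; by continuity of $\rho$ one then has $\rho(T')=1-\delta$. Next I would compute (the right derivative of) $\rho$ at $t=T'$ using a minimizing pair $(i_\star,j_\star)\in I^2$, i.e. one with $\langle x_{i_\star}(T'),x_{j_\star}(T')\rangle=\rho(T')$, and split the two attention sums over $k\in I$ and $k\in I^c$ exactly as in \eqref{eq: rhoq.lb}--\eqref{eq: ze.equation}: for $k\in I$ the minimality $\langle x_k(T'),x_{j_\star}(T')\rangle\geq\rho(T')$ combined with $a_{i_\star k}\geq \tfrac1n e^{\beta(\rho(T')-1)}$ (retaining only the $k=j_\star$ term) produces the positive contribution $\tfrac1n\rho(T')(1-\rho(T'))e^{\beta(\rho(T')-1)}$ from each sum, while for $k\in I^c$ the standing hypothesis $\langle x_{i_\star}(T'),x_k(T')\rangle\leq\alpha$ gives $a_{i_\star k}\leq e^{-(1-\alpha)\beta}$ and hence an error of size at most $n e^{-(1-\alpha)\beta}$ per sum. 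This is precisely inequality \eqref{eq: ze.equation} evaluated at $T'$, so at $\rho(T')=1-\delta$,
\begin{equation*}
    \dot\rho(T'^+)\;\geq\;\tfrac{2}{n}(1-\delta)\,\delta\,e^{-\delta\beta}-2n e^{-(1-\alpha)\beta},
\end{equation*}
which is strictly positive by assumption \eqref{eq: delta.alpha.cond}. Hence $\rho(t)>1-\delta$ for $t$ slightly larger than $T'$, contradicting the definition of $T'$; therefore $\{\rho<1-\delta\}\cap[0,T]=\varnothing$, which is the claim.

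The only genuinely delicate point is that $\rho$, being a finite minimum of smooth functions, is merely locally Lipschitz, so $\dot\rho(T'^+)$ must be read as a right Dini derivative; this is handled in the usual Danskin fashion, by noting that the differential-inequality bound above holds uniformly over all pairs $(i,j)\in I^2$ that are active at $T'$ (each such pair has $\langle x_i(T'),x_j(T')\rangle=\rho(T')$, so the same three estimates apply verbatim), and a strictly positive lower bound on the right derivative of a Lipschitz function at $T'$ suffices to force $\rho$ above $1-\delta$ immediately afterward. Note that nothing about the history of $\rho$ on $[0,T']$ is used: the pointwise differential inequality at $T'$ relies only on the definition of the minimum and on the hypothesis $\langle x_i(t),x_k(t)\rangle\leq\alpha$ holding at time $T'\leq T$. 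Everything else is the routine re-run of the computation already performed in Step 2, with the cap $\mathscr{S}_q(2\varepsilon)$ there replaced by the index set $I$ here.
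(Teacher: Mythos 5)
Your proposal is correct and takes essentially the same approach as the paper: both define $\rho(t)=\min_{(i,j)\in I^2}\langle x_i(t),x_j(t)\rangle$, set up a first-exit time at the barrier level $1-\delta$, re-run the Step 2 estimates (splitting the attention sums over $I$ and $I^c$) to get the differential inequality $\dot\rho\geq\frac{2}{n}\rho(1-\rho)e^{-\beta(1-\rho)}-2ne^{-(1-\alpha)\beta}$, and invoke \eqref{eq: delta.alpha.cond} to make the right-hand side strictly positive at $\rho=1-\delta$, which contradicts the definition of the exit time. The only cosmetic difference is that you argue forward (the right Dini derivative at $T'$ forces $\rho>1-\delta$ just after $T'$, contradicting the infimum) while the paper argues backward ($\dot\rho>0$ near $T_*$ with $\rho(T_*)=1-\delta$ forces $\rho<1-\delta$ just before $T_*$, contradicting $T_*$ being the first such time); your explicit handling of the Dini-derivative/Danskin issue is slightly more careful than the paper's.
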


We postpone the proof to \Cref{lem: collaps_time_app}. We apply \Cref{lem:collapsetime} to $\rho_q(t)$ with $\delta=e^{-\lambda \beta}$, starting from time $T_1$ instead of $0$---all conditions in the statement being satisfied by virtue of the second of the upper bounds in \eqref{eq: lambda.3} and \eqref{eq: lambda.4} (for \eqref{eq: delta.alpha.cond}), \eqref{eq: an.ineq} and the definition of $T_2$ respectively---to deduce that
\begin{equation*}
    1-\rho_q(t)\leq e^{-\lambda\beta} \hspace{1cm} \text{ for all } t\in[T_1, T_2].
\end{equation*}
This concludes the proof.
\end{proof}

\begin{remark} \label{rem: variance}
We can provide an even more refined picture of the dynamics: within each cap---the quantity $t\mapsto \eta_{q}(t)$ is actually increasing up to a certain time. 
Indeed in the first step of the proof, for all $t \geq 0$ we saw that 
\begin{equation*}
    \dot{\eta}_{q}(t)\geq \eta_{q}(t)\sum_{j\colon x_j(t) \in \mathscr{S}_{q}(2\varepsilon)}a_{i(t)j}(t)\frac{\|x_{j}(t)-x_{i(t)}(t)\|^{2}}{2} -ne^{-(1-\alpha)\beta}.
\end{equation*}
This shows that the variance within a cap 
$\mathscr{S}_q(2\varepsilon)$, defined as
$$
\textit{Var}_q(t):=\sum_{j\colon x_j(t)\in \mathscr{S}_{q}(2\varepsilon)}a_{i(t)j}(t)\frac{\|x_{j}(t)-x_{i(t)}(t)\|^{2}}{2},
$$ 
controls the rate of convergence within the cap. It is however not straightforward to show the monotonicity of this variance. Indeed, consider a spherical cap which contains two sub-caps, which are separated. Then the variance will first increase, and then decrease, exponentially fast.
\end{remark}

\section{An energetic reinterpretation} \label{sec: otto-reznikoff}

We now provide a rewriting of our metastability result by leveraging the gradient flow structure, following the framework proposed by Otto and Reznikoff in \cite{otto2007slow}. 
%To simplify gradient computations, we focus on the dynamics 
%in the case $d=2$ throughout.
%\eqref{USA}, which we recall is the gradient flow (steepest ascent), with respect to the canonical metric on $(\sphere^{d-1})^n$, of the interaction energy $\mathsf{E}_\beta$ defined in \eqref{eq: interaction.energy}.

\subsection{The Otto-Reznikoff framework} \label{sec: otto.reznikoff.sub}

We begin by reviewing the framework and result proposed in \cite{otto2007slow}. Consider an abstract gradient flow\footnote{To stay faithful to \cite{otto2007slow} we review the framework in the case of gradient descent, but all results apply for gradient ascent under appropriate sign changes.} evolving on a manifold $\mathcal{M}\subset\reals^d$
\begin{equation} \label{eq: otto.gf}
\begin{cases}
    \dot{u}(t) = -\nabla \mathsf{E}(u(t)) &\text{ for } t\geq0\\
    u(0) = u_0
\end{cases}
\end{equation}
for a given $u_0\in\mathcal{M}$.
Infinite-dimensional versions can also be considered---we keep the presentation formal, as done in \cite{otto2007slow}. 
Here $\mathsf{E}:\mathcal{M}\to\reals_{\geq0}$ is assumed smooth, but more importantly, we assume that there exists $\mathcal{N}\subset\mathcal{M}$ such that 
\begin{enumerate}
\item[(\bf H1)] For every $u\in \mathcal{M}$ there exists  $v\in \mathcal{N}$ such that 
\begin{equation} \label{eq: first.inequality}
    \frac{1}{2}\|u-v\|^{2}\leq  \mathsf{E}(u)- \mathsf{E}(v)\leq \frac{1}{2}\|\nabla  \mathsf{E}(u)\|^{2};
\end{equation}
\item[(\bf H2)] There exists some constant $\delta>0$ such that for all $v_1,v_2 \in\mathcal{N}$,
\begin{equation*}
    \left| \mathsf{E}(v_1)- \mathsf{E}(v_2) \right|\leq \delta \|v_1-v_2 \|.
\end{equation*}
\end{enumerate}

The upper bound of the energy discrepancy in ({\bf H1}) is reminiscent of a \emph{Polyak-\L{}ojasiewicz (PL)} inequality (\cite{bolte2007lojasiewicz, karimi2016linear}) in the vicinity of the slow manifold $\mathcal{N}$. This makes $\mathcal{N}$ attractive for points $u\in\mathcal{M}\setminus \mathcal{N}$. %The lower bound is sometimes referred to as a \emph{quantitative inequality}.
On the other hand, should $\delta\ll1$, ({\bf H2}) entails that, along $\mathcal{N}$, the landscape is essentially flat since the energy gradient is of order $\delta$. Hence, the flow ought to remain trapped in $\mathcal{N}$. The manifold $\mathcal{N}$ is determined by the above hypotheses and, because of ({\bf H2}), is referred to as the \emph{slow manifold}.

The following result is then shown in \cite{otto2007slow}---we repeat the statement verbatim.

\begin{theorem}[{\cite[Theorem 1.1]{otto2007slow}}] \label{thm: Otto result}
Suppose that ({\bf H1})--({\bf H2}) hold, and let $v$ be such that $v(t)$ and $u(t)$ satisfy \eqref{eq: first.inequality}. Then the solution of \eqref{eq: otto.gf} is drawn into a $\delta$-neighborhood of $\mathcal{N}$ with an exponential rate close to $1$; that is, for any $\varepsilon\in (0,1)$, there exists a finite constant $C_{\varepsilon}>0$ such that
\begin{equation} \label{eq: otto.1}
    \|u(t)-v(t)\|+\sqrt{\mathsf{E}(u(t))-\mathsf{E}(v(t))} \leq e^{-(1-\varepsilon)t}\sqrt{\mathsf{E}(u(0))- \mathsf{E}(v(0))}+C_{\varepsilon}\delta. 
\end{equation}
Moreover, we have for any $0<s<t$ that 
\begin{equation} \label{eq: otto.2}
    \|u(t)-u(s)\|\leq \sqrt{\mathsf{E}(u(s))- \mathsf{E}(v(s))}+\delta(t-s+1).
\end{equation}
\end{theorem}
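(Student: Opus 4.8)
The plan is to run an energy-dissipation/Gronwall argument in which (\textbf{H2}) plays the role of a perturbation of size $\delta$. Fix the measurable selection $t\mapsto v(t)\in\mathcal{N}$ provided in the statement, and abbreviate $e(t):=\mathsf{E}(u(t))$, $g(t):=\|\nabla\mathsf{E}(u(t))\|$ and $D(t):=e(t)-\mathsf{E}(v(t))$. By the left inequality in \eqref{eq: first.inequality} one has $D(t)\geq\tfrac12\|u(t)-v(t)\|^2\geq 0$, and by the right inequality $D(t)\leq\tfrac12 g(t)^2$. Along \eqref{eq: otto.gf} the chain rule gives the exact identity $\dot e(t)=-g(t)^2\leq -2D(t)$, which is the dissipation mechanism pulling $u$ toward $\mathcal{N}$; since moreover $\|u(t)-v(t)\|\leq\sqrt{2D(t)}$, it suffices to control $D$.

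To obtain \eqref{eq: otto.1} I would derive a differential inequality of the form $\tfrac{d}{dt}D(t)\leq -2(1-\varepsilon)D(t)+C_\varepsilon\delta^2$ and integrate it. Formally, $\dot D=\dot e-\tfrac{d}{dt}\mathsf{E}(v(t))=-g^2-\tfrac{d}{dt}\mathsf{E}(v(t))$, the last term being the perturbation: (\textbf{H2}) says $\mathsf{E}$ is $\delta$-Lipschitz along $\mathcal{N}$, so this term is at most $\delta$ times the speed at which the reference point moves, which one ties back to $g$ and to $\sqrt{D}$ through \eqref{eq: first.inequality} and the triangle inequality $\|v(t)-v(s)\|\leq\|u(t)-u(s)\|+\sqrt{2D(t)}+\sqrt{2D(s)}$. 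A Young splitting $\delta g\leq\varepsilon g^2+C_\varepsilon\delta^2$ combined with $2D\leq g^2$ then yields the advertised inequality; Gronwall gives $D(t)\leq e^{-2(1-\varepsilon)t}D(0)+C_\varepsilon\delta^2$, and taking square roots and using $\|u(t)-v(t)\|\leq\sqrt{2D(t)}$ — after relabelling $\varepsilon$ and absorbing constants into the rate — produces \eqref{eq: otto.1}.

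For the displacement bound \eqref{eq: otto.2} the starting point is $\|u(t)-u(s)\|\leq\int_s^t\|\dot u\|\diff\tau=\int_s^t g(\tau)\diff\tau$. I would split $[s,t]$ into a transient window, on which Cauchy--Schwarz bounds $\int g$ by a constant times $\sqrt{e(s)-e(t)}\leq\sqrt{D(s)}$ (after comparing $\mathsf{E}(v(\cdot))$ at the two endpoints via (\textbf{H2})), and a long-time window, on which the dissipation estimate already established forces $g$ to be small on average, so that this part contributes at most a constant times $\delta(t-s)$; the additive $1$ in $\delta(t-s+1)$ absorbs the length of the transient window. Thus \eqref{eq: otto.1} and \eqref{eq: otto.2} are naturally proved together.

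The main obstacle is the low regularity of the reference curve $t\mapsto v(t)$: (\textbf{H1}) determines $v(t)$ only up to the slack in the inequalities, so it need not be continuous, let alone differentiable, and $\tfrac{d}{dt}\mathsf{E}(v(t))$ is not literally defined. The fix is to replace the formal computations above by estimates on upper Dini derivatives of $t\mapsto D(t)$ (equivalently, by a short-time-increment version of the inequalities), and to close the resulting coupled system by a bootstrap: the displacement bound controls $\|v(t)-v(s)\|$, hence via (\textbf{H2}) the oscillation of $\mathsf{E}(v(\cdot))$, hence the perturbation term in the inequality for $D$. Making this bootstrap quantitative and, in particular, extracting a decay rate arbitrarily close to $1$ rather than merely some positive rate — which is exactly where the $\varepsilon$-dependent constant $C_\varepsilon$ enters — is the delicate point of the argument.
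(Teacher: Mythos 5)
The paper does not prove this statement: it is imported verbatim from Otto--Reznikoff \cite{otto2007slow} and used as a black box, so there is no ``paper's proof'' to compare against. Judged on its own terms, your outline assembles the right ingredients --- the PL-type inequality in (\textbf{H1}) giving exponential contraction, (\textbf{H2}) as a $\delta$-perturbation, and the triangle inequality to relate $\|v(t)-v(s)\|$ to $\|u(t)-u(s)\|$ and to the energy gaps --- and you correctly flag the regularity of $t\mapsto v(t)$ as the central difficulty. However, the fix you propose does not actually resolve it, so there is a genuine gap.

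Concretely: the differential inequality $\dot D\leq -2(1-\varepsilon)D+C_\varepsilon\delta^2$ cannot be extracted from the finite-increment estimate
\begin{equation*}
D(t)-D(s)\leq -\int_s^t \|\nabla\mathsf{E}(u)\|^2\,\mathrm{d}\tau+\delta\Bigl(\sqrt{2D(t)}+\int_s^t\|\nabla\mathsf{E}(u)\|\,\mathrm{d}\tau+\sqrt{2D(s)}\Bigr),
\end{equation*}
because the two $\sqrt{D}$ contributions originate from the jumps of the selection $v(\cdot)$ and do \emph{not} vanish as $s\to t$; dividing by $t-s$ therefore yields an upper Dini derivative of $+\infty$, and the Young-splitting step you invoke has nothing to act on. Replacing derivatives by Dini derivatives does not help precisely because the obstruction is $O(1)$ in the increment length, not a lack of smoothness. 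The remaining proposal is a bootstrap coupling \eqref{eq: otto.1} and \eqref{eq: otto.2}, but as written it is circular: the displacement bound is used to control $\|v(t)-v(s)\|$, which feeds back into the estimate needed to prove the displacement bound, and you give no invariant quantity or iteration scheme that would close this loop and simultaneously extract the sharp rate $(1-\varepsilon)$. Otto--Reznikoff avoid this by working with a decreasing Lyapunov quantity of the form $\|u(t)-v\|+\sqrt{2(\mathsf{E}(u(t))-\mathsf{E}(v))}$ with a \emph{frozen} reference $v$ (so the quantity is genuinely differentiable in $t$), and then controlling the drift of the admissible reference level purely through (\textbf{H2}) and the integrated energy dissipation $\int\|\nabla\mathsf{E}(u)\|^2=\mathsf{E}(u(s))-\mathsf{E}(u(t))$, rather than through an instantaneous inequality on $D$; this is the step your sketch is missing.
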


As the statement of \Cref{thm: Otto result} appears\footnote{One can, however, refer to \cite[Theorem 1.2]{otto2007slow}, concerning the application to the Allen-Cahn equation, where the statement is almost identical to that of our main result.} different from \Cref{thm: metastability}, we reformulate the concrete conclusion as in \cite[Remark 1]{otto2007slow} before proceeding. 
Suppose \(\delta \ll 1\) and set \(\varepsilon = \frac{1}{2}\) in \Cref{thm: Otto result}.
For short times, if the initial energy gap is of order $1$, then the first term in the upper bound is dominant in \eqref{eq: otto.1}. After time
\begin{equation*}
    t_1\sim \log\left(\frac{\mathsf{E}(u(0))-\mathsf{E}(v(0))}{\delta^{2}}\right),
\end{equation*}
one sees that the energy gap in  \eqref{eq: otto.1} is reduced to order $\delta$:
\begin{equation} \label{eq: otto.3}
    \sqrt{\mathsf{E}(u(t_1))-\mathsf{E}(v(t_1))}\lesssim \delta.
\end{equation}
Furthermore, after this initial layer $t_1$ comes the ``slow motion phase'', which lasts for a time of order $\delta^{-1}$; setting $s=t_1$ in \eqref{eq: otto.2}, 
\begin{equation*}
    \|u(t)-u(t_1)\|\lesssim \sqrt{\mathsf{E}(u(t_1))-\mathsf{E}(v(t_1))}+\delta(t-t_1+1)\lesssim \delta+\delta(t-t_1)
\end{equation*}
from \eqref{eq: otto.3}. 
This is precisely like \Cref{thm: metastability} with $\delta\sim e^{-\lambda\beta}$, which we confirm in \Cref{eq: otto.attention}. 

\subsection{Application to the self-attention model} \label{sec: our.energy.OR}

\begin{figure}[h!]
    \centering
    \includegraphics[scale=0.3]{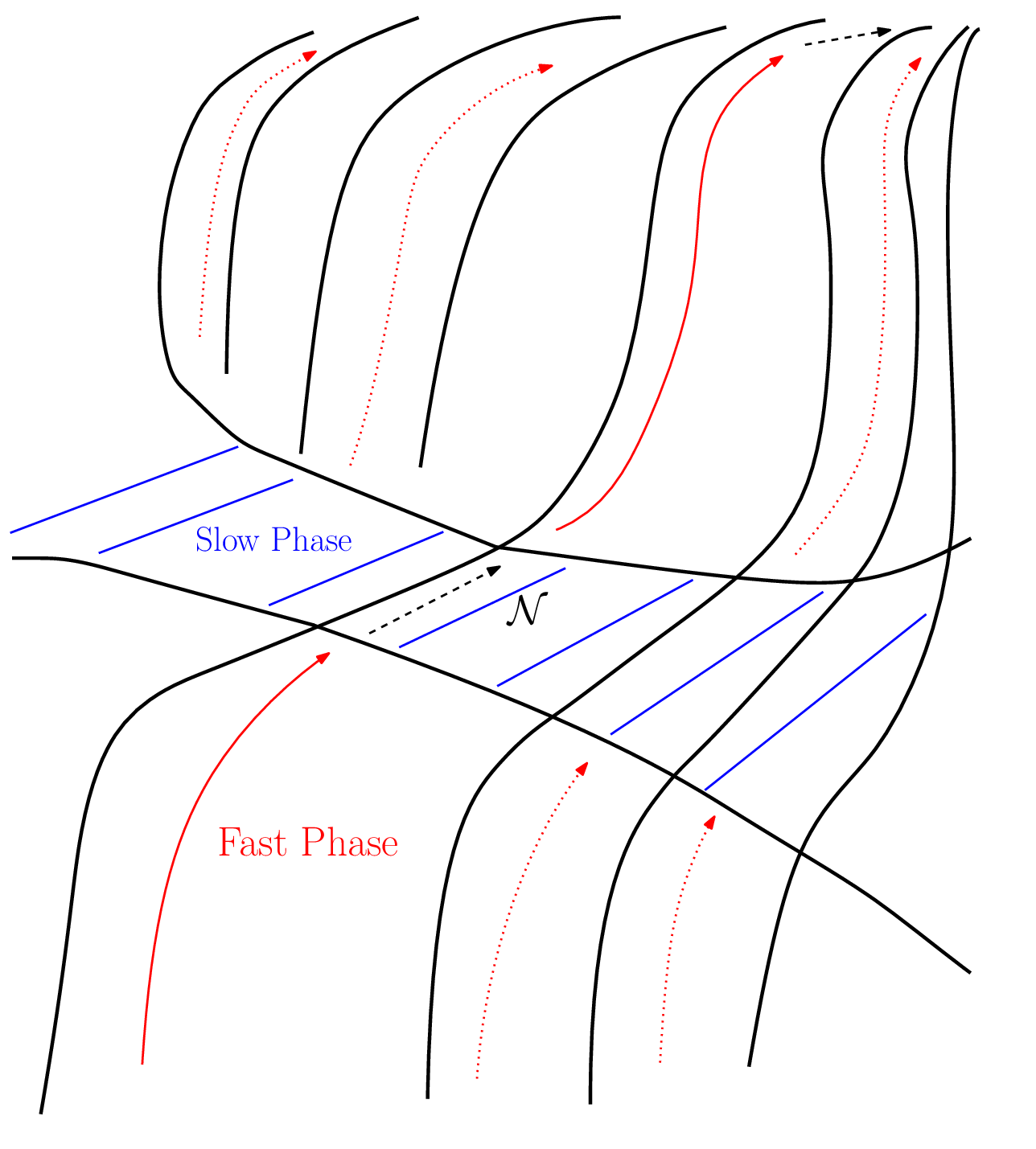}
    \caption{An illustration of the landscape of $\mathsf{E}_\beta$. The slow manifold $\mathcal{N}$ is an almost-flat zone, thus one where the gradient flow moves very little, and is surrounded by zones where $\mathsf{E}_\beta$ satisfies a PL inequality.}
    \label{fig:enter-label}
\end{figure}

We wish to apply \Cref{thm: Otto result} to \eqref{USA} and \eqref{SA}. This requires checking the concave analogue of ({\bf H1}), as well as ({\bf H2}), for the interaction energy $\mathsf{E}_\beta$ defined in \eqref{eq: interaction.energy}. This in turn requires determining the slow manifold $\mathcal{N}$. A first guess could be be to consider $\mathcal{N}$ as the set consisting of configurations with isolated points, or clustered in isolated points, which we can define as
\begin{equation*}
    \mathcal{N}_{\beta}:=\left\{(x_1,\dots,x_n) \in (\sphere^{d-1})^{n}\colon \underset{1\leq i,j\leq n}{\max} \| x_i-x_j\|e^{-\frac{\beta}{2}\| x_i-x_j\|^{2}}\leq 2\delta_{\beta}  \right\};
\end{equation*}
here $0<\delta_\beta\ll1$ is fixed and to be determined later on. There is two cases in which the left-hand side term in the above definition is be small: either $\|x_i-x_j\|$ is small, or $\|x_i-x_j\| $ is big, since then the exponential part renders the entire term small. The important observation is that for $(x_1,\dots,x_n)\in \mathcal{N}_{\beta}$, ({\bf H2}) holds:
\begin{equation*}
    \| \nabla \mathsf{E}_{\beta}(x_1,\dots,x_n) \| \leq \delta_{\beta}.
\end{equation*}

We are therefore left with checking the concave analogue of ({\bf H1}). PL inequalities are often proven globally by using concavity of the function in question, namely by controlling the spectrum of the Hessian. But it is known that on any compact connected Riemannian manifold, all geodesically concave functions are constant. Alternative to this stationary proof is a \emph{dynamic version} which relies on evaluating the Hessian along trajectories, in the spirit of Bakry-\'Emery calculus (\cite{SPS_1985__19__177_0}, and also \cite[Chapter 9]{villani2021topics}, \cite{Otto2000GeneralizationOA}). This is the main clue in our proof---we can adapt this strategy, allowing us to localize near $\mathcal{N}$.
    
\begin{lemma} \label{lem: bakry-emery}
Let $\mathsf{E}:\mathcal{M}\to\mathbb{R}_{\geq0}$ be smooth,  
and let $\mathcal{N}\subset\mathcal{M}$. Fix $u\in\mathcal{M}$ and consider
\begin{equation*}
    \begin{cases} 
    \dot{X}(t)=\nabla \mathsf{E}(X(t)) &\text{ for } t\geq0\\
    X(0)= u.
    \end{cases}
\end{equation*}
Suppose that there exist $v\in\mathcal{N}$ and $T>0$ such that
\begin{equation*}
    X(T)=v,
\end{equation*}
and a numerical constant $c>0$ such that  
\begin{equation} \label{ineq: Almost Hessian}
\left\<\nabla \mathsf{E}(X(t)),\hess\,\mathsf{E}(X(t))\nabla \mathsf{E}(X(t)) \right\> \leq -c \| \nabla \mathsf{E}(X(t))\|^{2}
\end{equation}
for all $t\in[0,T]$. Then, 
\begin{equation*}
    \mathsf{E}(v)-\mathsf{E}(u)\leq \frac{1}{2c}\|\nabla \mathsf{E}(u)\|^{2}.
\end{equation*}
\end{lemma}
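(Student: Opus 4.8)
The plan is to run a Bakry--\'Emery-type argument directly along the gradient-ascent trajectory $t\mapsto X(t)$, converting the pointwise Hessian bound \eqref{ineq: Almost Hessian} into exponential decay of the gradient norm and then integrating. Concretely, I would introduce the scalar function
\[
    g(t) := \|\nabla\mathsf{E}(X(t))\|^2, \qquad t\in[0,T],
\]
which is $\mathscr{C}^1$ since $\mathsf{E}$ is smooth and $X(\cdot)$ solves a smooth ODE. Along the flow $\dot X(t)=\nabla\mathsf{E}(X(t))$, metric compatibility of the Levi-Civita connection gives $\tfrac{d}{dt}\|\nabla\mathsf{E}(X(t))\|^2 = 2\big\langle \tfrac{D}{dt}\nabla\mathsf{E}(X(t)),\,\nabla\mathsf{E}(X(t))\big\rangle$, and since by definition of the Riemannian Hessian $\tfrac{D}{dt}\nabla\mathsf{E}(X(t)) = \hess\,\mathsf{E}(X(t))\,\dot X(t) = \hess\,\mathsf{E}(X(t))\,\nabla\mathsf{E}(X(t))$, hypothesis \eqref{ineq: Almost Hessian} yields
\[
    \dot g(t) = 2\left\langle \nabla\mathsf{E}(X(t)),\,\hess\,\mathsf{E}(X(t))\,\nabla\mathsf{E}(X(t))\right\rangle \leq -2c\,g(t) \qquad\text{for all } t\in[0,T].
\]

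Second, Grönwall's inequality then gives $g(t)\leq g(0)\,e^{-2ct}$ on $[0,T]$, i.e. $\|\nabla\mathsf{E}(X(t))\|^2\leq \|\nabla\mathsf{E}(u)\|^2 e^{-2ct}$. Third, since $\mathsf{E}$ increases along the flow with $\tfrac{d}{dt}\mathsf{E}(X(t)) = \langle \nabla\mathsf{E}(X(t)),\dot X(t)\rangle = g(t)$, and since $X(0)=u$, $X(T)=v$, the fundamental theorem of calculus together with $g\geq 0$, $c>0$ and $T<\infty$ gives
\[
    \mathsf{E}(v)-\mathsf{E}(u) = \int_0^T g(t)\diff t \leq \int_0^T g(0)\,e^{-2ct}\diff t \leq \int_0^{\infty}\|\nabla\mathsf{E}(u)\|^2\, e^{-2ct}\diff t = \frac{1}{2c}\|\nabla\mathsf{E}(u)\|^2,
\]
which is exactly the claimed inequality.

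I do not expect a serious obstacle here; the argument is short and self-contained. The only point requiring a modicum of care is the covariant-derivative computation of $\dot g$ in the first step: one must make sure that differentiating $\nabla\mathsf{E}$ along the curve $X(\cdot)$ genuinely produces $\hess\,\mathsf{E}$ contracted with $\dot X$, which is precisely the definition of the Riemannian Hessian and relies on the compatibility of the connection with the metric (so that $\tfrac{d}{dt}\langle V(t),W(t)\rangle = \langle\tfrac{D}{dt}V,W\rangle + \langle V,\tfrac{D}{dt}W\rangle$). Beyond that, the passage from $\int_0^T$ to $\int_0^\infty$ is harmless because $g\geq 0$ and $c>0$, and the assumption that $X$ actually reaches $v\in\mathcal{N}$ at the finite time $T$ is used solely to identify $\mathsf{E}(X(T))$ with $\mathsf{E}(v)$—the bound itself is uniform in $T$.
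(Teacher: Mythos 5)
Your proof is correct and follows exactly the same strategy as the paper's: differentiate $\|\nabla\mathsf{E}(X(t))\|^2$ to get a Grönwall-type inequality from the Hessian bound, obtain exponential decay of the gradient norm, and integrate the energy-dissipation identity. The only difference is cosmetic—you are slightly more explicit about the Riemannian covariant-derivative bookkeeping, while the paper carries out the same computation in a coordinate-free shorthand.
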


The proof is elementary.
    
\begin{proof}[Proof of \Cref{lem: bakry-emery}]

We compute 
\begin{align*}
    \frac{\diff}{\diff t}\| \nabla \mathsf{E}(X(t))\|^{2}&=2\< \nabla \mathsf{E}(X(t)),\hess\,\mathsf{E}(X(t))\nabla \mathsf{E}(X(t))\> \\
    &\leq -2 c \| \nabla \mathsf{E}(X(t))\|^{2}.
\end{align*}
by using \eqref{ineq: Almost Hessian}. By virtue of the Grönwall lemma we find
\begin{equation*}
    \| \nabla \mathsf{E}(X(t))\|^{2} \leq e^{ -2 ct}\| \nabla \mathsf{E}(X(0))\|^{2}. 
\end{equation*}
We integrate to find
\begin{align*}
    \mathsf{E}(X(t))-\mathsf{E}(X(0))&=\int_{0}^{t}\| \nabla \mathsf{E}(X(s))\|^{2}\diff s\\
    &\leq \int_{0}^{t}e^{-2cs} \diff s \|  \nabla \mathsf{E}(X(0))\|^{2}\\
    &\leq \frac{1}{2c}\|\nabla \mathsf{E}(X(0))\|^{2}.
\end{align*}
Since $X(T)=v$, 
\begin{equation*}
    \mathsf{E}(v)-\mathsf{E}(u)\leq \frac{1}{2 c}\|\nabla \mathsf{E}(u)\|^{2}.\qedhere
\end{equation*}
\end{proof}

Thus, provided a curated definition of the slow manifold $\mathcal{N}_\beta$, we are reduced to showing \eqref{ineq: Almost Hessian}. To this end, it is necessary to have a tractable form of the Hessian of \(\mathsf{E}_\beta\). 
We focus on the case of the circle \(\sphere^1\) to carry out the computations, but we believe the general idea should extend to the higher-dimensional case. 
Additionally, we primarily focus on \eqref{USA}---the extension to \eqref{SA} is discussed in \Cref{rem: sa.extension}.
We can reparametrize the problem to work with angles on the torus $\mathbb{T}=\mathbb{R}/2\pi\mathbb{Z}$: for $\theta_i(t)=\arccos\langle x_i(t), e_1\rangle$, \eqref{USA} equivalently rewrites as 
\begin{equation} \label{eq: usa.angles} 
 \dot{\Uptheta}(t) = \nabla\mathsf{E}_\beta(\Uptheta(t))   
\end{equation}
with
\begin{equation*}
    \mathsf{E}_\beta(\theta_1,\ldots,\theta_n)=\frac{1}{2\beta e^\beta n^2}\sum_{i=1}^n\sum_{j=1}^n e^{\beta\cos(\theta_i-\theta_j)}.
\end{equation*}
In other words,
\begin{equation*}
    \dot{\theta}_i(t) = \sum_{j=1}^n e^{\beta(\cos(\theta_j(t)-\theta_i(t))-1)} \sin(\theta_j(t)-\theta_i(t)) \hspace{1cm} \text{ for } t\geq0.
\end{equation*}
We now reformulate \Cref{hyp: init} in this setting.

\begin{definition}\label{hyp: init.theta}
Suppose $\beta>1$ and $\tau\in(0,\frac{1}{16})$. 
We call $(\theta_1,\ldots,\theta_n)\in\mathbb{T}^n$ a \emph{$(\beta,\tau)$-separated configuration} if there exist $k\leq n$ points $\omega_1,\ldots,\omega_k\in\mathbb{T}$ such that 
\begin{enumerate}
    \item For all $i \in \{1,\ldots,n\}$, 
    \begin{equation*}
        \theta_i\in\bigcup_{q\in\{1,\ldots,k\}} \mathscr{S}_q(\tau)
    \end{equation*}
    where 
    \begin{equation} \label{eq: cones_2}
        \mathscr{S}_{q}(\tau):=\left\{\theta\in \mathbb{T}\colon \cos(\theta-\omega_q) \geq 1-\tau \right\}.
    \end{equation}
    \item Furthermore,
    \begin{equation}\label{d: condition_ineq_2}
       \gamma:=1-\alpha-8\tau-\frac{1}{\beta}\log\left(\frac{2n^2}{\tau}\right)>0 \hspace{0.5cm} \text{ and } \hspace{0.5cm} \gamma(\beta)=\Omega(1),
    \end{equation}
    where
    \begin{equation*} \label{eq: alpha.dist.2}
        \alpha(\tau):=\underset{\substack{(\theta,\phi)\in \mathscr{S}_q(2\tau)\times \mathscr{S}_p(2\tau)\\ q\neq p\in\{1,\ldots,k\}}}{\max} \cos(\theta-\phi).
    \end{equation*}
    \end{enumerate}
\end{definition}

\begin{lemma}[PL inequality] \label{lem: PL.borjan}
Suppose $\beta>1$ and $n\geq 2$. Consider a configuration $\Uptheta:=(\theta_1(0),\ldots,\theta_n(0))\in \mathbb{T}^n$ which is $(\beta,\tau)$-separated for some $\tau=\tau(\beta)>0$ which is such that for all $q\in \{1,\ldots,k\}$, and for all $(u,v)\in \mathscr{S}_{q}(2\tau)$,  we have 
\begin{equation} \label{eq: tau.small}
    \left| u-v\right|\leq \frac18\sqrt{\frac{1-\delta}{\beta +\frac{1}{2}}},
\end{equation}
for some $8(1+\beta)e^{-(1-\alpha)\beta}e^{-\frac{1}{2}}<\delta<1$.
Take any $\lambda>0$ as in \eqref{eq: lambda.1}--\eqref{eq: lambda.2}, and consider
\begin{equation*}
    \mathcal{N}_{\beta}:=\left\{(\theta_1,\ldots,\theta_n)\in\mathbb{T}^n \colon \max_{q\in\{1,\ldots,k\}}\max_{\theta_i,\theta_j\in\mathscr{S}_q(2\tau)}|\theta_i-\theta_j|\leq e^{-\frac{\lambda}{2}\beta} \right\},
\end{equation*} 
Then there exist $U\in \mathcal{N}_{\beta}$ and $\kappa(\beta,n)>0$ such that  
\begin{equation*}
    \mathsf{E}_{\beta}(U)-\mathsf{E}_{\beta}(\Uptheta)\leq \frac{1}{2 \kappa(\beta,n)}\|\nabla \mathsf{E}_{\beta}(\Uptheta)\|^{2}.
\end{equation*}
\end{lemma}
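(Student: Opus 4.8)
The plan is to verify the dynamical (Bakry-Émery-type) hypothesis \eqref{ineq: Almost Hessian} of \Cref{lem: bakry-emery} along the trajectory of \eqref{eq: usa.angles} issued from the $(\beta,\tau)$-separated configuration $\Uptheta$, with $\mathcal{M}=\mathbb{T}^n$, $\mathsf{E}=\mathsf{E}_\beta$ (note we are in the gradient-\emph{ascent} setting, so $\dot X=\nabla\mathsf{E}_\beta$ and we need the Hessian to be negative definite in the relevant directions), and $\mathcal{N}=\mathcal{N}_\beta$. First I would invoke \Cref{thm: metastability} (in the angular formulation, \Cref{hyp: init.theta}) to guarantee that the trajectory starting from $\Uptheta$ reaches $\mathcal{N}_\beta$ in finite time $T$: by item (2) of that theorem, after time $T_1$ all particles inside a cap $\mathscr{S}_q(2\tau)$ satisfy $\|x_i-x_j\|^2\leq 2e^{-\lambda\beta}$, which translates (via $|\theta_i-\theta_j|\lesssim\|x_i-x_j\|$ on the circle) into $|\theta_i-\theta_j|\leq e^{-\frac{\lambda}{2}\beta}$ for a slightly adjusted $\lambda$, i.e. $\Uptheta(T)=:U\in\mathcal{N}_\beta$; moreover item (1) ensures $\Uptheta(t)$ stays in $\bigcup_q\mathscr{S}_q(2\tau)$ for all $t\in[0,T]$, so the caps—and in particular the separation parameter $\alpha$—remain valid along the whole trajectory. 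This is what lets us feed a \emph{global-in-$[0,T]$} Hessian bound into \Cref{lem: bakry-emery}.

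The core computation is the Hessian bound \eqref{ineq: Almost Hessian}. I would write out $\hess\,\mathsf{E}_\beta(\Uptheta)$ explicitly in angular coordinates: with $K(s)=e^{\beta(\cos s-1)}$ one has $\partial_i\mathsf{E}_\beta=\frac1{\beta e^\beta n^2}\cdot\beta e^\beta\sum_j K(\theta_j-\theta_i)\sin(\theta_j-\theta_i)$ (up to the normalization), so the Hessian entries are sums of terms involving $K'$ and $K$ against $\sin,\cos$ of angular differences—schematically the off-diagonal entry $(i,j)$ is proportional to $-\big(\beta\sin^2(\theta_i-\theta_j)-\cos(\theta_i-\theta_j)\big)e^{\beta(\cos(\theta_i-\theta_j)-1)}$ and the diagonal is minus the row sum of off-diagonals. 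The quadratic form $\langle g,\hess\,\mathsf{E}_\beta\,g\rangle$ with $g=\nabla\mathsf{E}_\beta$ then splits into an intra-cap part and an inter-cap part. For pairs $(i,j)$ in the \emph{same} cap, $|\theta_i-\theta_j|$ is controlled by \eqref{eq: tau.small}, which is precisely the window in which $\beta\sin^2(\theta_i-\theta_j)-\cos(\theta_i-\theta_j)<0$ is bounded away from zero—here $\cos(\theta_i-\theta_j)\geq 1-\delta$ and $\beta\sin^2(\theta_i-\theta_j)\leq(\beta+\tfrac12)|\theta_i-\theta_j|^2\leq\frac{1-\delta}{64}$—so the intra-cap contribution is $\leq -c\|g\|^2$ for an explicit $c=c(\beta,n)$ (this is where the factor $8$ and the $\sqrt{(1-\delta)/(\beta+\tfrac12)}$ in \eqref{eq: tau.small} are calibrated). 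For pairs $(i,j)$ in \emph{different} caps, $\cos(\theta_i-\theta_j)\leq\alpha$, so every such term carries a prefactor $e^{-(1-\alpha)\beta}$ and is therefore negligible—controlled against the good intra-cap term using the lower bound $8(1+\beta)e^{-(1-\alpha)\beta}e^{-1/2}<\delta$, which is exactly designed to absorb the cross terms. Summing, $\langle g,\hess\,\mathsf{E}_\beta(\Uptheta)\,g\rangle\leq-\kappa(\beta,n)\|g\|^2$ along the trajectory, which is \eqref{ineq: Almost Hessian} with $c=\kappa(\beta,n)$.

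Finally I would apply \Cref{lem: bakry-emery} with this $\kappa(\beta,n)$, $u=\Uptheta$, $v=U$, $T$ as above, concluding $\mathsf{E}_\beta(U)-\mathsf{E}_\beta(\Uptheta)\leq\frac{1}{2\kappa(\beta,n)}\|\nabla\mathsf{E}_\beta(\Uptheta)\|^2$. One subtlety to handle carefully: $\mathcal{N}_\beta$ here is the angular set defined via $|\theta_i-\theta_j|\leq e^{-\lambda\beta/2}$, while $\mathcal{N}_\beta$ in \Cref{sec: our.energy.OR} was defined via $\|x_i-x_j\|e^{-\frac\beta2\|x_i-x_j\|^2}\leq2\delta_\beta$; I would remark that on a cap the former implies the latter for a suitable $\delta_\beta$, so that ({\bf H2}) (i.e. $\|\nabla\mathsf{E}_\beta\|\lesssim\delta_\beta$ on $\mathcal{N}_\beta$) holds there too—though strictly this lemma only asserts the PL-type inequality (the analogue of ({\bf H1})). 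The main obstacle I anticipate is keeping the intra-cap/inter-cap splitting of the quadratic form clean: one must bound $\|g\|^2=\|\nabla\mathsf{E}_\beta(\Uptheta)\|^2$ itself from below by the intra-cap contribution (so that the negative intra-cap term genuinely dominates $-c\|g\|^2$) without losing the gain—equivalently, showing the inter-cap components of $g$ are also exponentially small—and tracking the precise $\beta,n$-dependence of $\kappa$ through the normalization constant $\frac{1}{2\beta e^\beta n^2}$; the bookkeeping, rather than any conceptual difficulty, is the real work.
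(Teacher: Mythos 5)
Your high-level strategy matches the paper's: evolve the configuration under the gradient flow until it reaches $\mathcal{N}_\beta$ (using \Cref{thm: metastability}), verify the dynamical Hessian bound \eqref{ineq: Almost Hessian} along that trajectory by exploiting the Laplacian structure $\langle g, \hess\,\mathsf{E}_\beta\, g\rangle = -\tfrac12\sum_{i,j}\partial_{\theta_i}\partial_{\theta_j}\mathsf{E}_\beta(g_i-g_j)^2$ and splitting into intra-cap and inter-cap contributions, then invoke \Cref{lem: bakry-emery}. Your treatment of the intra-cap Hessian entry (the $\cos\geq 1-\delta$ and $\beta\sin^2\leq \tfrac{1-\delta}{64}$ calibration against \eqref{eq: tau.small}) and the $e^{-(1-\alpha)\beta}$ inter-cap tail are exactly as in the paper.

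However, there is a genuine gap, and you have located it but mischaracterized it as bookkeeping. After lower-bounding $\partial_{\theta_i}\partial_{\theta_j}\mathsf{E}_\beta \geq \tfrac{\delta e^{-1/2}}{n^2}$ on intra-cap pairs, the quadratic form is controlled by $-\sum_{\text{intra}}(g_i-g_j)^2$, and this does \emph{not} automatically dominate $\|g\|^2$: it vanishes whenever $g$ is constant on each cap, which nothing in your argument rules out. The paper closes this via three observations none of which appear in your sketch: (i) for $t<T$ the state is outside $\mathcal{N}_\beta$, so there is a cap whose extreme angles $\theta_1 < \cdots < \theta_r$ are separated by at least $e^{-\lambda\beta/2}$; (ii) in that cap the extreme gradient components $\partial_{\theta_1}\mathsf{E}_\beta$ and $\partial_{\theta_r}\mathsf{E}_\beta$ have \emph{opposite signs}, so $(\partial_{\theta_1}\mathsf{E}_\beta - \partial_{\theta_r}\mathsf{E}_\beta)^2 \geq \max\{(\partial_{\theta_1}\mathsf{E}_\beta)^2, (\partial_{\theta_r}\mathsf{E}_\beta)^2\}$; and (iii) a separate comparison estimate (\Cref{claim: 1}) shows $\max_\ell|\partial_{\theta_\ell}\mathsf{E}_\beta| \leq \tfrac{e}{2}\max\{|\partial_{\theta_1}\mathsf{E}_\beta|, |\partial_{\theta_r}\mathsf{E}_\beta|\}$ and that the overall largest gradient component lives in such a cap. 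Your proposed resolution---showing the inter-cap components of $g$ are exponentially small---points in a different direction from what is needed: the issue is a lower bound on a single within-cap difference $(g_1-g_r)^2$ against $\max_\ell g_\ell^2$, not smallness of any individual component. Without (i)--(iii), the passage from the Hessian-entry bound to the PL constant $\kappa(\beta,n)$ does not close.
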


%\begin{remark}
%Before proceeding with the proof, we give a couple of brief comments.
%\begin{enumerate}
%    \item Condition \eqref{eq: tau.small} is, once again, one which essentially requires $8\tau$ to be slightly smaller than $\beta^{-\frac{1}{2}}$. 
%    \item The definition of the slow manifold $\mathcal{N}_\beta$ is not the most general one can take; one could perhaps define it solely through a smallness of the gradient condition. We take this stronger definition to facilitate some technical steps in the proof.
%\end{enumerate}    
%\end{remark}

\begin{proof}[Proof of \Cref{lem: PL.borjan}]
Consider 
\begin{equation*}
    \begin{dcases}
        \dot{\Uptheta}(t) = \nabla\mathsf{E}_\beta(\Uptheta(t)) &\text{ for } t\geq 0,\\
        \Uptheta(0) =\Uptheta.
    \end{dcases}
\end{equation*}
From \eqref{eq: stick} in \Cref{thm: metastability}, we gather that there exists a time $T>0$ such that 
\begin{equation*}
     \Uptheta(T)\in\mathcal{N}_{\beta}, \hspace{1cm} \text{ and } \hspace{1cm}  \Uptheta(t)\in\mathbb{T}^n\setminus\mathcal{N}_\beta \quad \text{ for all } t\in[0, T).
\end{equation*}
We now seek to check \eqref{ineq: Almost Hessian}. 
For any $i, j\in\{1,\ldots,n\}$ one has
$$\partial_{\theta_i} \mathsf{E}_\beta(\theta_1,\ldots,\theta_n) = -\frac{1}{n^2} \sum_{m=1}^n \sin(\theta_i-\theta_m)
e^{\beta (\cos(\theta_i-\theta_m)-1)},$$
and
\begin{equation*}
\partial_{\theta_i}\partial_{\theta_j} \mathsf{E}_\beta(\theta_1,\ldots,\theta_n) = \frac{1}{n^2}\cdot
\begin{dcases} 
g(\theta_i - \theta_j) & i\neq j\\
-\sum_{m\in\{1,\ldots,n\}\setminus\{i\}} g(\theta_i-\theta_m) & i = j\,,
\end{dcases}
\end{equation*} 
where we set $g(x) := (\cos(x) - \beta\sin^2(x))e^{\beta(\cos(x)-1)}$. 
One can observe that the Hessian has the structure of a  Laplacian matrix. 
Let $v\in\reals^n$; by standard computations for such matrices, we find 
\begin{align*}
    \<\mathrm{Hess}\,\mathsf{E}_{\beta}(\Uptheta) v,v\>&=\sum_{i=1}^{n}\sum_{j\in\{1,\ldots,n\}\setminus\{i\}} \partial_{\theta_i}\partial_{\theta_j}\mathsf{E}_{\beta}(\Uptheta)v_iv_j\\
    &=-\frac{1}{2}\sum_{i=1}^{n}\sum_{j=1}^n\partial_{\theta_i}\partial_{\theta_j}\mathsf{E}_{\beta}(\Uptheta)(v_{i}-v_{j})^{2}.
\end{align*}
For the sake of concise notation, we henceforth denote 
$$
\mathsf{H}(t):=\left\<\mathrm{Hess}\,\mathsf{E}_{\beta}(\Uptheta(t))\nabla\mathsf{E}_{\beta}(\Uptheta(t)),\nabla \mathsf{E}_{\beta}(\Uptheta(t))\right\>.
$$
We apply the above computations with $v=\nabla \mathsf{E}_{\beta}(\Uptheta(t))$ to find
\begin{equation*}
\mathsf{H}(t) =-\frac{1}{2}\sum_{i=1}^{n}\sum_{j=1}^n\partial_{\theta_i}\partial_{\theta_j}\mathsf{E}_{\beta}(\Uptheta(t))\left(\partial_{\theta_i} \mathsf{E}_\beta(\Uptheta(t)) -\partial_{\theta_j} \mathsf{E}_\beta(\Uptheta(t))\right)^{2}. 
\end{equation*}
Recall that we seek to upper bound $\mathsf{H}(t)$ by $-\|\nabla\mathsf{E}_\beta(\Uptheta)\|^2$. 
Let $q\in\{1,\ldots,k\}$ and $i \in \{1,\dots,n\}$. By arguing as in \eqref{compt: bound_far}, if $\theta_i(t)\in\mathscr{S}_q(2\tau)$ and $\theta_j(t)\notin\mathscr{S}_q(2\tau)$, we have 
\begin{equation*}
\left|\partial_{\theta_i}\partial_{\theta_j}\mathsf{E}_{\beta}(\Uptheta(t))\right| \leq\frac{(1+\beta)}{n^2}e^{-(1-\alpha)\beta}.
\end{equation*}
Thus 
\begin{align} \label{eq: Ht.first.lb}
&\sum_{j=1}^n\partial_{\theta_i}\partial_{\theta_j}\mathsf{E}_{\beta}(\Uptheta(t))\left(\partial_{\theta_i} \mathsf{E}_\beta(\Uptheta(t)) -\partial_{\theta_j} \mathsf{E}_\beta(\Uptheta(t))\right)^{2}\nonumber\\
&\hspace{1cm}\geq \sum_{j\colon\theta_{j}(t) \in \mathscr{S}_q(2\tau)}\partial_{\theta_i}\partial_{\theta_j}\mathsf{E}_{\beta}(\Uptheta(t))\left(\partial_{\theta_i} \mathsf{E}_\beta(\Uptheta(t)) -\partial_{\theta_j} \mathsf{E}_\beta(\Uptheta(t))\right)^{2}\nonumber\\
&\hspace{1.5cm}-\frac{4(1+\beta)}{n}e^{-(1-\alpha)\beta}\underset{j\in \{1,\dots,n\}}{\max} \left(\partial_{\theta_j}\mathsf{E}_{\beta}(\Uptheta(t))\right)^{2}.
\end{align}
Since $q$ is fixed, for simplicity we relabel the $1\leq r<n$ particles in $\mathscr{S}_q(2\tau)$ in such a way that $\theta_1(t)<\ldots<\theta_r(t)$. 
First observe that over $\{j: \theta_{j}(t) \in \mathscr{S}_q(2\tau)\}\setminus\{i\}$, by virtue of the definition of $g$,
\begin{align}
\partial_{\theta_i}\partial_{\theta_j}\mathsf{E}_\beta(\Uptheta(t))
%&=\frac{1}{n^2}\left(\cos(\theta_i(t)-\theta_j(t))-\beta\sin^2(\theta_i(t)-\theta_j(t))\right)e^{\beta(\cos(\theta_i(t)-\theta_j(t))-1)}\nonumber\\
&\geq \frac{1}{n^2}\left(1-\left(\beta +\frac{1}{2}\right)\left|\theta_i-\theta_j\right|^{2} \right)e^{-\frac{\beta|\theta_j-\theta_i |^{2}}{2}}\nonumber\\
&\geq \frac{\delta }{n^{2}}e^{-\frac{\beta }{2\beta +1}(1-\delta)}\nonumber\\
&\geq \frac{\delta e^{-\frac{1}{2}} }{n^{2}},\label{eq: Ht.second.lb}
\end{align}
where we used $\sin(x)\leq x$, as well as $\cos(x)\geq 1-\frac{x^{2}}{2}$ when $|x|\leq 1$ for the first inequality, and \eqref{eq: tau.small} for the second. 
In view of \eqref{eq: Ht.first.lb}, we are left with lower bounding the discrepancy between components of the gradient. 
Since $\Uptheta(t)\notin\mathcal{N}_\beta$ for $t\in[0,T)$, there necessarily exists some $q\in\{1,\ldots,k\}$ such that 
\begin{equation} \label{eq: cond.sine}
    \max_{\theta_a(t), \theta_b(t)\in\mathscr{S}_q(2\tau)}\left|\theta_a(t)-\theta_b(t)\right|\geq e^{-\frac{\lambda}{2}\beta}.
\end{equation}
With this at hand, let $q\in\{1,\ldots,k\}$ be any index for which the corresponding cap satisfies \eqref{eq: cond.sine}.
We see that
\begin{align*}
\partial_{\theta_1}\mathsf{E}_\beta(\Uptheta(t))&\geq\frac{1}{n^2}\sum_{j=1}^r \sin(\theta_j(t)-\theta_1(t))e^{\beta(\cos(\theta_j(t)-\theta_1(t))-1)}-\frac{1}{n} e^{-(1-\alpha)\beta}\\
&\geq \frac{1}{n^2}\sin(\theta_r(t)-\theta_1(t))e^{\beta(\cos(\theta_r(t)-\theta_1(t))-1)}-\frac{1}{n} e^{-(1-\alpha)\beta}>0,
\end{align*}
and similarly
\begin{align*}
\partial_{\theta_r}\mathsf{E}_\beta(\Uptheta(t))&\leq\sum_{j=1}^r \sin(\theta_j(t)-\theta_r(t))e^{\beta(\cos(\theta_j(t)-\theta_r(t))-1)}+\frac{1}{n} e^{-(1-\alpha)\beta}\\
&\leq-\frac{1}{n^2}\sin(\theta_r(t)-\theta_1(t))e^{\beta(\cos(\theta_r(t)-\theta_1(t))-1)}+\frac{1}{n} e^{-(1-\alpha)\beta}<0,
\end{align*}
both by virtue of \eqref{eq: cond.sine} and the choice of $\lambda$.
We use the following inequality, the proof of which we postpone to after the present one due to its technical nature.

\begin{claim} \label{claim: 1}
We have
\begin{equation} \label{eq: Ht.third.lb}
    \max_{\ell\in\{1,\ldots,r\}}\left|\partial_{\theta_\ell}\mathsf{E}_\beta(\Uptheta(t))\right|\leq \frac{e}{2}\max\left\{\left|\partial_{\theta_1}\mathsf{E}_\beta(\Uptheta(t))\right|,\left|\partial_{\theta_r}\mathsf{E}_\beta(\Uptheta(t))\right|\right\}.
\end{equation}    
\end{claim}

Now observe first of all that the coordinate $j$ for which $(\partial_{\theta_j}\mathsf{E}(\Uptheta(t)))^2$ is largest must correspond to a particle $\theta_j(t)$ lying in a spherical cap $\mathscr{S}_q(2\tau)$ satisfying \eqref{eq: cond.sine}. Using this information, by virtue of \eqref{eq: Ht.second.lb}, and since $\partial_{\theta_r}\mathsf{E}_\beta(\Uptheta(t))$ and $\partial_{\theta_1}\mathsf{E}_\beta(\Uptheta(t))$ are of opposite signs and thus 
\begin{equation*}
 \left(\partial_{\theta_r}\mathsf{E}_\beta(\Uptheta(t))-\partial_{\theta_1}\mathsf{E}_\beta(\Uptheta(t))\right)^2\geq \max\left\{\left(\partial_{\theta_1}\mathsf{E}_\beta(\Uptheta(t))\right)^2,\left(\partial_{\theta_r}\mathsf{E}_\beta(\Uptheta(t))\right)^2\right\},   
\end{equation*}
and taking \eqref{eq: Ht.third.lb} into account, using \eqref{eq: Ht.first.lb} we deduce that
\begin{align*}
&\sum_{i=1}^n\sum_{j=1}^n\partial_{\theta_i}\partial_{\theta_j}\mathsf{E}_{\beta}(\Uptheta(t))\left(\partial_{\theta_i} \mathsf{E}_\beta(\Uptheta(t)) -\partial_{\theta_j} \mathsf{E}_\beta(\Uptheta(t))\right)^{2}\nonumber\\
&\hspace{1cm}\geq \kappa(\beta, n)\underset{j\in \{1,\dots,n\}}{\max} \left(\partial_{\theta_j}\mathsf{E}_{\beta}(\Uptheta(t))\right)^{2},
\end{align*}
where
\begin{equation*}
    \kappa(\beta, n) := \frac{1}{n}\cdot \left(\frac{\delta e^{\frac{1}{2}}}{2}-4(1+\beta)e^{-(1-\alpha)\beta}\right)>0,
\end{equation*}
because of the assumption on $\delta$ given by \eqref{eq: tau.small}.
All in all, we gather that
\begin{equation*}
    \mathsf{H}(t)\leq -\frac{\kappa(\beta,n)}{2}\underset{j\in \{1,\dots,n\}}{\max} \left(\partial_{\theta_j}\mathsf{E}_{\beta}(\Uptheta(t))\right)^{2}\leq -\frac{\kappa(\beta,n)}{2n}\|\nabla\mathsf{E}_\beta(\Uptheta(t))\|^2.
\end{equation*}
We can apply \Cref{lem: bakry-emery} to conclude.
\end{proof}

\begin{proof}[Proof of \Cref{claim: 1}]
We omit time dependence for the sake of readability.
Without loss of generality, suppose $|\partial_{\theta_1}\mathsf{E}_\beta(\Uptheta)|\geq |\partial_{\theta_r}\mathsf{E}_\beta(\Uptheta)|$. Let $\ell \in \{1,\ldots,r\},$ and suppose that $\partial_{\theta_{\ell}}\mathsf{E}_{\beta}(\Uptheta(t)) \partial_{\theta_1}\mathsf{E}_\beta(\Uptheta)>0$, We now compute:
\begin{align*}
n^2\left|\partial_{\theta_j}\mathsf{E}_\beta(\Uptheta)\right|=n^2\partial_{\theta_j}\mathsf{E}_\beta(\Uptheta)&\leq\sum_{k=1}^r \sin(\theta_j-\theta_k)e^{\beta(\cos(\theta_j-\theta_k)-1)}\\
&\hspace{1cm}+\sum_{k\colon\theta_k\notin\mathscr{S}_q(2\tau)}\sin(\theta_j-\theta_k)e^{\beta(\cos(\theta_j-\theta_k)-1)}.
\end{align*}
We focus on the first term; recalling that $\theta_1<\ldots<\theta_r$, we end up with
\begin{align*}
&\sum_{k=1}^r \sin(\theta_j-\theta_k)e^{\beta(\cos(\theta_j-\theta_k)-1)}\\
&\hspace{1cm}\leq\sum_{k=1}^j \sin(\theta_k-\theta_j)e^{\beta(\cos(\theta_j-\theta_k)-1)}+\sum_{k=j+1}^{r}\sin(\theta_k-\theta_1)e^{\beta(\cos(\theta_j-\theta_k)-1)}\\
&\hspace{1cm}\leq\sum_{k=1}^{j}\sin(\theta_k-\theta_1)e^{\beta(\cos(\theta_1-\theta_k)-1)}\\
&\hspace{1cm}\hspace{1cm}+e^{\beta|\theta_j-\theta_1|\underset{\ell\in \{1,\ldots,r\}}{\max}\sin(\theta_\ell-\theta_1)}\sum_{k=j+1}^{r}\sin(\theta_k-\theta_1)e^{\beta\cos(\theta_k-\theta_1)}\\
&\hspace{1cm}\leq e^{\beta |\theta_j-\theta_1| \underset{\ell\in \{1,\ldots,r\}}{\max} \sin(\theta_\ell-\theta_1)}\sum_{k=1}^r \sin(\theta_k-\theta_1)e^{\beta\cos(\theta_k-\theta_1)},
\end{align*}
where we used $\frac{\pi}{2}>\theta_k-\theta_1>\theta_j-\theta_1>0$ and the monotonicity of $\sin(\cdot)$ for the first inequality, whereas we used
$$\sum_{k=1}^{j}\sin(\theta_k-\theta_j)e^{\beta(\cos(\theta_j-\theta_k)-1)}\leq 0\leq \sum_{k=1}^{j}\sin(\theta_k-\theta_1)e^{\beta(\cos(\theta_j-\theta_k)-1)}$$
for the second, and 
$$\left|\cos(\theta_j-\theta_k)-\cos(\theta_k-\theta_1)\right|\leq \sin(\theta_k-\theta_1)|\theta_j-\theta_1|$$  
for the third (which follows by the mean-value theorem).
Besides, by definition of spherical caps we have $|\sin(\theta_j-\theta_i)|\leq\beta^{-1/2}$, so we can conclude that 
\begin{equation*}
\max_{\ell\in\{1,\ldots,r\}}\left|\partial_{\theta_\ell}\mathsf{E}_\beta(\Uptheta(t))\right|\nonumber\leq e\max\left\{\left|\partial_{\theta_1}\mathsf{E}_\beta(\Uptheta(t))\right|,\left|\partial_{\theta_r}\mathsf{E}_\beta(\Uptheta(t))\right|\right\}-(1+\beta)e^{-(1-\alpha)\beta}.
\end{equation*}
Using the fact that $\Uptheta(t)\notin \mathcal{N}_{\beta}$, we get 
$$\max_{\ell\in\{1,\ldots,r\}}\left|\partial_{\theta_\ell}\mathsf{E}_\beta(\Uptheta(t))\right| \geq 2e(1+\beta)e^{-(1-\alpha)\beta},$$
whence,
\begin{equation*}
    \max_{\ell\in\{1,\ldots,r\}}\left|\partial_{\theta_\ell}\mathsf{E}_\beta(\Uptheta(t))\right|\leq \frac{e}{2}\max\left\{\left|\partial_{\theta_1}\mathsf{E}_\beta(\Uptheta(t))\right|,\left|\partial_{\theta_r}\mathsf{E}_\beta(\Uptheta(t))\right|\right\}.\qedhere
\end{equation*}
\end{proof}

We now focus on the lower bound of the energy discrepancy in \eqref{eq: first.inequality}. To this end, we simply adapt \cite[Proposition 1]{Otto2000GeneralizationOA} to the framework of slow manifolds.

\begin{lemma} \label{lem: quantitative inequality}
Let $\mathsf{E}:\mathcal{M}\to\mathbb{R}_{\geq0}$ be smooth, and let $\mathcal{N}\subset\mathcal{M}$. Fix $u\in\mathcal{M}$ and consider
\begin{equation*}
    \begin{cases} 
        \dot{X}(t)=\nabla \mathsf{E}(X(t)) &\text{ for } t\geq0\\
        X(0)= u.
    \end{cases}
\end{equation*}
Suppose that there exist $v \in \mathcal{N}$, $T>0$, and $c>0$ such that
\begin{equation} \label{eq: pl.for.quant}
    \mathsf{E}(v)-\mathsf{E}(u)\leq \frac{1}{2c}\|\nabla \mathsf{E}(u)\|^{2}.
\end{equation}
Then, 
\begin{equation*}
    2c\|u-v\|^{2}\leq \mathsf{E}(v)-\mathsf{E}(u).
\end{equation*}
\end{lemma}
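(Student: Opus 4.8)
The plan is to run a Łojasiewicz-type ``length of the trajectory'' estimate along the gradient-ascent flow $X(\cdot)$ joining $u=X(0)$ to $v=X(T)$, in the spirit of \cite[Proposition 1]{Otto2000GeneralizationOA}. The crucial observation is that the Polyak--Łojasiewicz inequality \eqref{eq: pl.for.quant} must be used not merely at $u$ but along the whole trajectory: in the setting in which this lemma is applied---the same as in \Cref{lem: bakry-emery}, i.e. \eqref{ineq: Almost Hessian} holding on $[0,T]$ with $X(T)=v$---one may invoke the conclusion of \Cref{lem: bakry-emery} with initial point $X(s)$ and terminal time $T$ to get
\[
\mathsf{E}(v)-\mathsf{E}(X(s))\leq \frac{1}{2c}\|\nabla\mathsf{E}(X(s))\|^{2}\qquad\text{for every }s\in[0,T].
\]
Equivalently, writing $p(t):=\|\nabla\mathsf{E}(X(t))\|$, one has $p(t)\geq\sqrt{2c}\,(\mathsf{E}(v)-\mathsf{E}(X(t)))^{1/2}$ for all $t\in[0,T]$.

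With this in hand, set $\theta(t):=(\mathsf{E}(v)-\mathsf{E}(X(t)))^{1/2}$, which is well defined and nonnegative since $\mathsf{E}$ is nondecreasing along the ascent flow and $X(T)=v$, and which satisfies $\theta(T)=0$ and $\theta(0)=(\mathsf{E}(v)-\mathsf{E}(u))^{1/2}$. Differentiating, $\dot\theta(t)=-p(t)^{2}/(2\theta(t))$, and substituting the trajectory PL bound $\theta(t)\leq p(t)/\sqrt{2c}$ yields the differential inequality
\[
\dot\theta(t)\leq -\tfrac{1}{2}\sqrt{2c}\,p(t)=-\tfrac{1}{2}\sqrt{2c}\,\|\dot X(t)\|\qquad\text{for }t\in[0,T].
\]
Integrating over $[0,T]$, using $\theta(T)=0$, and then bounding the chord by the arc length, $\|u-v\|\leq\int_{0}^{T}\|\dot X(t)\|\diff t$, gives $(\mathsf{E}(v)-\mathsf{E}(u))^{1/2}\geq\tfrac{1}{2}\sqrt{2c}\,\|u-v\|$; squaring yields an inequality of the form $c'\|u-v\|^{2}\leq\mathsf{E}(v)-\mathsf{E}(u)$ with $c'$ of order $c$, and matching the precise constant in the statement amounts to tightening the elementary inequalities along the way (as in \cite{Otto2000GeneralizationOA}).

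The bookkeeping of the numerical constant and the issue at $\{\theta=0\}$---handled because $\mathsf{E}(X(t))<\mathsf{E}(v)$ for $t<T$ whenever $\nabla\mathsf{E}$ does not vanish along the trajectory, the degenerate case $\nabla\mathsf{E}(u)=0$ forcing $u=v$ being trivial---are routine. The genuinely essential, and only nontrivial, ingredient is the propagation of the PL inequality along the whole trajectory, which is precisely why the flow and the time $T$, rather than a pointwise condition at $u$ alone, enter the hypotheses; once that is secured, the remainder is the standard Łojasiewicz length bound.
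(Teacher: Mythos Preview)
Your approach is correct and is essentially the same as the paper's: both run the standard \L{}ojasiewicz arc-length estimate along the ascent trajectory from $u$ to $v$, using the PL inequality at every point $X(t)$ (which, as you rightly observe, is what the argument actually needs and is delivered by \Cref{lem: bakry-emery}). The paper packages the same computation via the auxiliary function $\varphi(t)=\|u-X(t)\|+\tfrac{1}{\sqrt{2c}}\sqrt{\mathsf{E}(v)-\mathsf{E}(X(t))}$ and shows $\dot\varphi\leq0$, then compares $\varphi(0)$ and $\varphi(T)$; your version differentiates $\theta(t)=\sqrt{\mathsf{E}(v)-\mathsf{E}(X(t))}$ directly, bounds $\dot\theta$ by a multiple of $\|\dot X\|$, and integrates against the chord/arc inequality. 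These are the same argument, only rearranged. Your remark about the constant is apt: the method as written yields $\tfrac{c}{2}\|u-v\|^{2}\leq\mathsf{E}(v)-\mathsf{E}(u)$ rather than $2c$, and the paper's computation of $\dot\varphi$ in fact drops a factor $\tfrac12$ in the derivative of the square root, so the discrepancy is not in your argument.
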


\begin{proof}[Proof of \Cref{lem: quantitative inequality}]
Consider 
\begin{equation*}
    \varphi(t):=\| u-X(t)\|+ \frac{\sqrt{\mathsf{E}(v)-\mathsf{E}(X(t)))}}{\sqrt{2c}}.
\end{equation*}
We compute
\begin{equation*}
    \dot{\varphi}(t)= -\left\< \nabla \mathsf{E}(X(t)),\frac{u-X(t)}{\| u-X(t)\|}\right\> - \frac{\|\nabla \mathsf{E}(X(t)) \|^{2}}{\sqrt{2c(\mathsf{E}(v)-\mathsf{E}(X(t)))}}
\end{equation*}
Using \eqref{eq: pl.for.quant} we get
\begin{equation*}
    - \frac{\|\nabla \mathsf{E}(X(t) \|^{2}}{\sqrt{2c(\mathsf{E}(v)-\mathsf{E}(X(t)))}}\leq -\| \nabla\mathsf{E}(X(t) \|,
\end{equation*}
and Cauchy-Schwarz,
\begin{equation*}
    \left|\left\< \nabla \mathsf{E}(X(t)),\frac{u-X(t)}{\| u-X(t)\|}\right\>\right|\leq \|\nabla \mathsf{E}(X(t))\|.
\end{equation*}
It follows that $\varphi$ is non-increasing, and we conclude the proof by evaluating $\varphi$ at $t=0$ and $t=T$.
\end{proof}

As a result of \Cref{thm: Otto result} and Lemmas \ref{lem: bakry-emery} and \ref{lem: quantitative inequality}, we conclude the following.

\begin{corollary} \label{eq: otto.attention}
Suppose $\beta>1$, and consider a $(\beta,\tau)$-separated configuration $\Uptheta(0)\in\mathbb{T}^n$ for some $\tau=\tau(\beta)$ satisfying the conditions of \Cref{hyp: init.theta} as well as \eqref{eq: tau.small}. Let $\mathcal{N}_\beta\subset\mathbb{T}^n$ be defined as in \Cref{lem: PL.borjan}. Then the conclusions of \Cref{thm: Otto result} hold for \eqref{eq: usa.angles} with $\delta=e^{-\lambda\beta/2}$, for $\lambda$ as in \Cref{lem: PL.borjan}. 
\end{corollary}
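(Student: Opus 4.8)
The plan is to recognize \Cref{eq: otto.attention} as a direct instance of \Cref{thm: Otto result}: I verify the two hypotheses (H1)--(H2) for the energy $\mathsf{E}_\beta$ driving \eqref{eq: usa.angles}, with slow manifold $\mathcal{N}_\beta$ taken exactly as in \Cref{lem: PL.borjan} and with $\delta=e^{-\lambda\beta/2}$, and then quote the theorem. Since \Cref{thm: Otto result} is stated with the normalized constants $\tfrac12$, while the inequalities furnished by the lemmas below carry the PL constant $\kappa=\kappa(\beta,n)>0$ of \Cref{lem: PL.borjan}, the first move is to work with $\mathsf{F}:=\kappa^{-1}\mathsf{E}_\beta$ in place of $\mathsf{E}_\beta$: the gradient flow \eqref{eq: usa.angles} for $\mathsf{F}$ is a constant time-reparametrization of the original one, and $\nabla\mathsf{F}=\kappa^{-1}\nabla\mathsf{E}_\beta$, so that an estimate of the form $\kappa\|u-v\|^2\le\mathsf{E}_\beta(v)-\mathsf{E}_\beta(u)\le\tfrac1{2\kappa}\|\nabla\mathsf{E}_\beta(u)\|^2$ becomes $\tfrac12\|u-v\|^2\le\mathsf{F}(v)-\mathsf{F}(u)\le\tfrac12\|\nabla\mathsf{F}(u)\|^2$, i.e. exactly the ascent form of \eqref{eq: first.inequality}, at the cost only of an $n$-dependent factor in front of $\delta$ that is absorbed into $C_\varepsilon$ in \eqref{eq: otto.1}--\eqref{eq: otto.2}.

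Next I check (H2). Starting from the closed form $\partial_{\theta_i}\mathsf{E}_\beta(\Uptheta)=-n^{-2}\sum_{m=1}^n\sin(\theta_i-\theta_m)e^{\beta(\cos(\theta_i-\theta_m)-1)}$, I fix $\Uptheta\in\mathcal{N}_\beta$ and split each inner sum according to whether $\theta_m$ lies in the cap $\mathscr{S}_q(2\tau)$ containing $\theta_i$ or in a different cap. For the in-cap indices, $|\theta_i-\theta_m|\le e^{-\lambda\beta/2}$ by definition of $\mathcal{N}_\beta$, hence $|\sin(\theta_i-\theta_m)|\le e^{-\lambda\beta/2}$ while the exponential weight is $\le 1$; for the cross-cap indices, $\cos(\theta_i-\theta_m)-1\le-(1-\alpha)$, so each term is bounded by $e^{-(1-\alpha)\beta}$. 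Because $\lambda/2<\lambda<1-\alpha$ (from \eqref{eq: lambda.1}), this yields $\|\nabla\mathsf{E}_\beta(\Uptheta)\|\lesssim_n e^{-\lambda\beta/2}$ on $\mathcal{N}_\beta$; the Lipschitz bound in (H2) then follows by integrating this pointwise gradient estimate along geodesics inside (a neighborhood of) $\mathcal{N}_\beta$, and after the rescaling above this is (H2) with $\delta=e^{-\lambda\beta/2}$, up to the absorbed constant.

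It remains to verify (H1), and here the lemmas already proved do all the work. Given a $(\beta,\tau)$-separated configuration $u=\Uptheta(0)$ (in the sense of \Cref{hyp: init.theta}, with $\tau$ also satisfying \eqref{eq: tau.small}), run the ascent flow $\dot\Uptheta=\nabla\mathsf{E}_\beta(\Uptheta)$. By \Cref{thm: metastability} in its angular form, the particles remain in the safety caps $\mathscr{S}_q(2\tau)$ and collapse within each cap below the threshold $e^{-\lambda\beta/2}$ before the escape time, so the orbit reaches some $v=U\in\mathcal{N}_\beta$ at a finite time $T>0$ with $\Uptheta(t)\notin\mathcal{N}_\beta$ for $t<T$. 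Along $[0,T)$, the Laplacian structure of $\hess\,\mathsf{E}_\beta$, the lower bound \eqref{eq: Ht.second.lb} on its in-cap entries, and \Cref{claim: 1} combine to give the Hessian estimate \eqref{ineq: Almost Hessian} with $c=\kappa(\beta,n)$; \Cref{lem: bakry-emery} then produces the upper half of (H1), $\mathsf{E}_\beta(v)-\mathsf{E}_\beta(u)\le\tfrac1{2\kappa}\|\nabla\mathsf{E}_\beta(u)\|^2$, and feeding the very same $v$ into \Cref{lem: quantitative inequality} with $c=\kappa$ gives the matching lower half $\kappa\|u-v\|^2\le\mathsf{E}_\beta(v)-\mathsf{E}_\beta(u)$. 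After passing to $\mathsf{F}=\kappa^{-1}\mathsf{E}_\beta$ these are (H1) with constant $\tfrac12$, so \Cref{thm: Otto result} applies and delivers \eqref{eq: otto.1}--\eqref{eq: otto.2}, which upon undoing the reparametrization is the claim with $\delta=e^{-\lambda\beta/2}$.

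The main obstacle is twofold, and is exactly why the earlier results must be threaded together with care. First, one must guarantee that the ascent trajectory from an arbitrary $(\beta,\tau)$-separated $u$ genuinely enters $\mathcal{N}_\beta$ and stays there long enough for the conclusion to be non-vacuous: this is what \Cref{thm: metastability} supplies, since the within-cap collapse occurs by a time $T_1$ that is at most (sub-)polynomial in $\beta$, while the escape time satisfies $T_2\gtrsim e^{(1-\alpha)\beta}$ and thus dwarfs both the short relaxation scale $t_1\asymp\lambda\beta$ and the slow-motion window $\delta^{-1}=e^{\lambda\beta/2}$ of \Cref{thm: Otto result}, the latter by an $\Omega(1)$ margin in the exponent because $1-\alpha=\Omega(1)$ forces $(1-\alpha)-\lambda/2\ge(1-\alpha)/2=\Omega(1)$. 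Second, one must check that the two a priori unrelated constants in the PL chain --- the Hessian constant $\kappa(\beta,n)$ coming out of \Cref{claim: 1} and the gradient-smallness rate from (H2) --- are mutually compatible; this is precisely where the smallness hypothesis \eqref{eq: tau.small} on $\tau$ is used, since it is exactly what makes $\kappa(\beta,n)>0$, and also where one exploits that the relevant configurations have within-cap diameter of order $\beta^{-1/2}$, so that the quadratic estimate $\cos x\ge 1-\tfrac{x^2}{2}$ controlling the in-cap Hessian entries is in force.
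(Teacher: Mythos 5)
Your proposal is correct and reconstructs the argument the paper leaves implicit: the paper simply asserts the corollary ``as a result of'' \Cref{thm: Otto result} and Lemmas~\ref{lem: bakry-emery} and~\ref{lem: quantitative inequality}, whereas you actually thread the lemmas through the Otto--Reznikoff hypotheses. Two things you make explicit that the paper does not: the constant rescaling $\mathsf{F}=\kappa^{-1}\mathsf{E}_\beta$ needed to bring the PL chain from \Cref{lem: PL.borjan} (upper half, constant $\tfrac{1}{2\kappa}$) and \Cref{lem: quantitative inequality} (lower half, constant $2\kappa$) into the normalized form $\tfrac12$ that \Cref{thm: Otto result} assumes; and a direct verification of (H2) on the angular $\mathcal{N}_\beta$ by the in-cap/cross-cap split of $\partial_{\theta_i}\mathsf{E}_\beta$, yielding $\|\nabla\mathsf{E}_\beta\|\lesssim_n e^{-\lambda\beta/2}$. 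The paper only discusses (H2) heuristically for the spherical $\mathcal{N}_\beta$ of \S\ref{sec: our.energy.OR} via the quantity $\|x_i-x_j\|e^{-\beta\|x_i-x_j\|^2/2}$, not for the $\mathbb{T}^n$ version used in \Cref{lem: PL.borjan}, so your computation is a genuine filling-in.

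One point where both you and the paper are a bit loose, and which deserves a sentence: promoting the pointwise gradient bound on $\mathcal{N}_\beta$ to the global Lipschitz statement in (H2) by ``integrating along geodesics'' requires the relevant part of $\mathcal{N}_\beta$ to be (quasi-)convex, which the set as defined is not. What saves the argument is that the conclusions of \Cref{thm: Otto result} concern a single trajectory, which (by \Cref{thm: metastability}) is trapped near a single limit configuration $U\in\mathcal{N}_\beta$ for the entire slow-motion window; it therefore suffices to have the Lipschitz bound in the small neighborhood of $U$ that the orbit visits, where the gradient bound is uniform and the neighborhood is convex. You would strengthen the write-up by replacing the global (H2) claim with this local version. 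Otherwise the proof is sound and follows the paper's intent.
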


{\color{black}
\begin{remark}[\eqref{SA}] 
\label{rem: sa.extension}
    We demonstrate how the proof of \Cref{lem: PL.borjan} can be adapted to \eqref{SA} when $d=2$. 
    Written in angles, \eqref{SA}, for $i\in\{1,\ldots,n\}$, reads
    \begin{equation} \label{eq: SA.angles}
        \dot{\theta}_i(t) = \sum_{j=1}^n \frac{e^{\beta\cos(\theta_j(t)-\theta_i(t))}}{\displaystyle\sum_{k=1}^n e^{\beta\cos(\theta_i(t)-\theta_k(t))}} \sin(\theta_j(t)-\theta_i(t)) \hspace{1cm} \text{ for } t\geq0.
    \end{equation}
    As implied in the introduction, \eqref{SA} is also the gradient flow for $\mathsf{E}_\beta$, but for a gradient  taken with respect to a different metric $\mathfrak{g}$. We do not go into the details here---see \cite{geshkovski2024mathematical}---all we need to know is  that 
    \begin{equation*}
        \dot{\Uptheta}(t) = \mathrm{grad}_{\mathfrak{g}}\,\mathsf{E}_\beta(\Uptheta(t)) \hspace{1cm} \text{ for } t\geq 0,
    \end{equation*}
    and the $i$-th coordinate of $\mathrm{grad}_{\mathfrak{g}}\,\mathsf{E}_\beta(\Uptheta(t))\in\mathbb{T}^n$ is precisely the right-hand side in \eqref{eq: SA.angles}. 
    With this in hand, we wish to compute the Hessian---with respect to $\mathfrak{g}$---of $\mathsf{E}_\beta$. Since $\mathbb{T}^n$ a submanifold of $\mathbb{R}^n$, we actually have\footnote{For instance, see \cite[Chapter 5]{boumal2023introduction} for details.}
    \begin{equation*}
        \mathrm{Hess}_{\mathfrak{g}}\mathsf{E}_\beta(\Uptheta)[v] = \mathrm{proj}_\Uptheta\left(\left.\frac{\diff}{\diff \varepsilon}\mathsf{G}(\Uptheta+\varepsilon v) \right|_{\varepsilon=0}\right)
    \end{equation*}
    at a point $\Uptheta\in\mathbb{T}^n$ and direction $v\in\mathsf{T}_\Uptheta\mathbb{T}^n$. Here $\mathrm{proj}_\Uptheta$ is the orthogonal projection onto $\mathsf{T}_\Uptheta\mathbb{T}^n$, and $\mathsf{G}$ is any smooth vector field with $\mathsf{G}(\Uptheta)=\mathrm{grad}_{\mathfrak{g}}\mathsf{E}_\beta(\Uptheta)$ for $\Uptheta\in\mathbb{T}^n$. Since $\mathbb{T}^n$ is locally flat, the tangent space can be identified with $\mathbb{R}^n$ itself, and the orthogonal projection is the idenitity map. Whereupon, we can simply use the trivial extension to $\mathbb{R}^n$ of the right-hand side in \eqref{eq: SA.angles} to compute the Hessian: the $i$-th coordinate of $\mathrm{Hess}_{\mathfrak{g}}\mathsf{E}_\beta(\Uptheta)[v]\in\mathbb{T}^n$ reads
    \begin{align} \label{eq: hess.i}
        (\mathrm{Hess}_{\mathfrak{g}}\mathsf{E}_\beta(\Uptheta)[v])_i &= \sum_{j=1}^n b_{ij}(v_i-v_j),
    \end{align}
    where 
    \begin{equation*}
        b_{ij} := a_{ij}\Bigg[\cos(\theta_i-\theta_j)-\beta\sin^2(\theta_i-\theta_j)+\beta\sin(\theta_i-\theta_j)\sum_{k=1}^n a_{ik}\sin(\theta_i-\theta_k)\Bigg],
    \end{equation*}
    and $a_{ij} := e^{\beta\cos(\theta_i-\theta_j)}/\sum_{\ell=1}^n e^{\beta\cos(\theta_i-\theta_\ell)}$.
    We can identify \eqref{eq: hess.i} with a $n\times n$ matrix that has a Laplacian structure---denoting it again by $\mathrm{Hess}_{\mathfrak{g}}\mathsf{E}_\beta(\Uptheta)$ and its entries by $\overline{\partial}_{\theta_i}\overline{\partial}_{\theta_j}\mathsf{E}_\beta(\Uptheta)$, we have
    \begin{equation*} \overline{\partial}_{\theta_i}\overline{\partial}_{\theta_j}\mathsf{E}_\beta(\Uptheta) = \begin{dcases}
        -b_{ij} &i\neq j\\
        \sum_{k\neq i} b_{ik} &i=j.
    \end{dcases}
     \end{equation*}
     Therefore the proof of \Cref{lem: PL.borjan} can be repeated to this case, and one solely needs to check if $\overline{\partial}_{\theta_i}\overline{\partial}_{\theta_j}\mathsf{E}_\beta(\Uptheta)$ satisfy similar bounds to those by ${\partial}_{\theta_i}{\partial}_{\theta_j}\mathsf{E}_\beta(\Uptheta)$. Because the first two terms in $b_{ij}$ are the same as before, we only have to manage the third term of this expression, and the previous arguments can be adapted to this case as well.
\end{remark}
}

\subsection{Acceleration of the gradient between metastable states} \label{sec: acceleration}

The dynamics of separated particles is in fact \emph{accelerating} over time as distances between particles decrease. 
Actually when particles are sufficiently separated we can show a \emph{reverse PL inequality}. 

We first motivate this acceleration in a general framework as before. Suppose $\mathsf{E}:\mathcal{M}\rightarrow \reals_{\geq 0}$ is smooth, fix $u\in\mathcal{M}$, and consider
\begin{equation*}
    \begin{cases} 
    \dot{X}(t)=\nabla \mathsf{E}(X(t)) &\text{ for } t\geq0\\
    X(0)= u.
    \end{cases}
\end{equation*}
Let $\mathscr{A}\subset\mathcal{M}$ designate the \emph{accelerating} manifold: setting
$$T_u=\inf\{t\geq 0\colon X(t)\notin \mathscr{A}\},$$
suppose, for some $c>0$ and all $t\in[0, T_u]$, that
\begin{equation} \label{eq: hessian.lb.reverse.pl}
    \< \mathrm{Hess}\,\mathsf{E}(X(t))\nabla \mathsf{E}(X(t)),\nabla\mathsf{E}(X(t))\> \geq c\|\nabla \mathsf{E}(X(t))\|^2.
\end{equation}
We say that a \emph{reverse PL inequality} holds if for all $u \in \mathscr{A}$, there exist $v\notin \mathscr{A}$ and $c>0$ such that 
\begin{equation*}
    \mathsf{E}(v)-\mathsf{E}(u)\geq c\|\nabla \mathsf{E}(v)\|^{2}.
\end{equation*} 
We briefly explain the argument allowing one to establish this inequality. 
Suppose that \eqref{eq: hessian.lb.reverse.pl} holds.
Then,
\begin{align*}
\frac{\diff}{\diff t}\| \nabla \mathsf{E}(X(t))\|^{2}&=2\< \nabla \mathsf{E}(X(t)),\mathrm{Hess}\,\mathsf{E}(X(t)) \nabla \mathsf{E}(X(t))\> \\
&\geq 2 c\| \nabla \mathsf{E}(X(t))\|^{2}.
\end{align*}
Using Grönwall's lemma, we get the differential inequality
\begin{equation*}
\| \nabla \mathsf{E}(X(t))\|^{2} \geq e^{ 2 ct}\| \nabla \mathsf{E}(X(0))\|^{2},
\end{equation*}
resulting in an acceleration of the gradient\footnote{One could potentially use such an inequality to answer Problem \ref{conj: saddle-to-saddle}. Indeed, to escape a metastable state, in which we recall the gradient is exponentially small, one needs the gradient to start growing exponentially. We believe that this acceleration mechanism is behind the escape of such metastable states, and thus jumps in the energy level as seen in the staircase profile.}. Moreover if $X(t)\notin \mathscr{A}$ for some $t<+\infty$, we then have 
\begin{equation*}
    \mathsf{E}(v)-\mathsf{E}(u)\geq c\|\nabla \mathsf{E}(v)\|^{2}.
\end{equation*}
We can derive a bound of the mould \eqref{eq: hessian.lb.reverse.pl} for the Hessian of $\mathsf{E}_\beta$ defined in \eqref{eq: interaction.energy}.
Using the shorthand 
$$\mathsf{H}(t):=\< \nabla \mathsf{E}_{\beta}(X(t)),\mathrm{Hess}\,\mathsf{E}_{\beta}(X(t))\nabla \mathsf{E}_{\beta}(X(t))\>,$$
we recall that
\begin{equation*}
    \mathsf{H}(t)=-\frac{1}{2}\sum_{i=1}^{n}\sum_{j=1}^n\partial_{\theta_i}\partial_{\theta_j}\mathsf{E}_{\beta}(\Uptheta(t))\left( \partial_{\theta_i} \mathsf{E}_\beta(\Uptheta(t)) -\partial_{\theta_j} \mathsf{E}_\beta(\Uptheta(t)) \right)^{2}.    
\end{equation*}
For any $i\in \{1,\ldots, n\}$ suppose that there exists $j_i \in \{1,\dots,n\}$ such that 
\begin{equation*}
    |\theta_{j_i}(t)-\theta_i(t)|=\underset{k\in \{1,\dots,n\}\setminus \{i\}}{\min}|\theta_k-\theta_i|\quad \text{ and } \quad \partial_{\theta_i}\mathsf{E}_{\beta}(\Uptheta(t)) \partial_{\theta_{j_i}}\mathsf{E}_{\beta}(\Uptheta(t))<0.
\end{equation*}
Suppose that all particles are separated by at least $\tau(\beta)$; then
\begin{align*}
    \mathsf{H}(t) &=-\frac{1}{2}\sum_{i=1}^{n}\sum_{j=1}^n\partial_{\theta_i}\partial_{\theta_j}\mathsf{E}_{\beta}(\Uptheta(t))\left(\partial_{\theta_i} \mathsf{E}_\beta(\Uptheta(t)) -\partial_{\theta_{j_i}}\mathsf{E}_\beta(\Uptheta(t))\right)^{2}\\
    &\geq -\frac{1}{2}\sum_{i=1}^{n}\partial_{\theta_i}\partial_{\theta_{j_i}}\mathsf{E}_{\beta}(\Uptheta(t))\left(\partial_{\theta_i} \mathsf{E}_\beta(\Uptheta(t)) -\partial_{\theta_{j_i}} \mathsf{E}_\beta(\Uptheta(t))\right)^{2}\\
    &\geq -\frac{\mathsf{L}(t)}{2}\| \nabla \mathsf{E}_\beta(\Uptheta(t)) \|^{2}.
\end{align*}
where
\begin{equation*}
    \mathsf{L}(t):=\underset{i\in \{1,\ldots,n\}}{\max}\partial_{\theta_i}\partial_{\theta_{j_i}}\mathsf{E}_{\beta}(\Uptheta(t)),
\end{equation*}
and where we used the fact that $\partial_{\theta_i}\mathsf{E}_{\beta}(\Uptheta(t)) \partial_{\theta_{j_i}}\mathsf{E}_{\beta}(\Uptheta(t))<0$ and that $g(s)$ is non-positive and increasing for $|s|\geq \tau(\beta).$ 
Then
\begin{equation*}
  \frac{\diff}{\diff t}\| \nabla \mathsf{E}_{\beta}(\Uptheta(t))\|^{2}\geq -\frac{\mathsf{L}(t)}{2}\| \nabla \mathsf{E}_\beta(\Uptheta(t)) \|^{2}.
\end{equation*}
By Grönwall's lemma we deduce
\begin{equation*}
\| \nabla \mathsf{E}_{\beta}(\Uptheta(t))\|^{2} \geq \exp\left(-\frac12\int_{0}^{t} \mathsf{L}(s)\diff s\right)\|\nabla \mathsf{E}_\beta(\Uptheta(0)) \|^{2}.
\end{equation*}

\section{On the initial configuration} \label{sec: initial.configuration}

We now discuss a couple of examples of randomly generated initial configurations which may or may not fall in the setting of \Cref{hyp: init}. 

\subsection{Projected Gaussian mixtures} \label{sec: gaussian.mixture}

The first case of interest are Gaussian mixtures, namely random variables $X$ with a density of the form
\begin{equation} \label{eq: gaussian.mixture}
    f(x)=\frac{1}{r\sqrt{2\pi\sigma^2}}\sum_{i=1}^{r} e^{-\frac{\|x-\sqrt{r}w_i\|^{2}}{2\sigma^{2}}} \hspace{1cm} x\in\reals^d,
\end{equation}
where $\sigma>0$, and $w_1,\ldots,w_r\in \sphere^{d-1}$ with $r\geq1.$

\begin{definition} \label{d: separated_mixtures}
Suppose $d, n\geq 2,$ $r\in\{1,\ldots,n\}$ and $\varepsilon>0$.
We say that the configuration $(w_1,\ldots,w_r)\in (\sphere^{d-1})^{n}$ is $(\beta,\varepsilon)$-\emph{centered} if the corresponding spherical caps $(\mathscr{S}_{1}(\varepsilon),\ldots,\mathscr{S}_{r}(\varepsilon))$ satisfy \eqref{eq: gamma} in \Cref{hyp: init}.
\end{definition}

We show the following result.

\begin{proposition} \label{prop: mixture.of.gaussians}
Suppose $\beta>0$, $d, n\geq 2$, $r\in \{1,\ldots,n\}$ and $\varepsilon>0.$
Let $(w_1,\ldots,w_r)\in (\sphere^{d-1})^{n}$ be $(\beta,\varepsilon)$-centered per \Cref{d: separated_mixtures}. 
Let $X_1,\ldots,X_n$ be i.i.d. random variables following the Gaussian mixture law with density \eqref{eq: gaussian.mixture} and such that   
\begin{equation*}
    \frac{6 \delta \sqrt{d}}{1+\delta \sqrt{d}}+\delta\sqrt{2d\log n}\leq \varepsilon,
\end{equation*}
where $\delta:=\frac{\sigma}{\sqrt{r}}$.
Then
\begin{equation*}
\mathbb{P}\left(\left(\frac{X_1}{\|X_1\|},\ldots,\frac{X_n}{\|X_n\|}\right)\text{ is }(\beta,\varepsilon)\text{-separated}\right)\geq 1-2 e^{-d}.
\end{equation*}
\end{proposition}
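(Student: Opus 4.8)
The plan is to show that, with the stated probability, each normalized sample $X_i/\|X_i\|$ falls into one of the caps $\mathscr{S}_q(\varepsilon)$ built around the centers $w_1,\ldots,w_r$; since the centers are assumed $(\beta,\varepsilon)$-centered, condition \eqref{eq: gamma} in \Cref{hyp: init} is then automatically satisfied, and the configuration is $(\beta,\varepsilon)$-separated. So the entire task reduces to a concentration estimate: if $X$ is drawn from the mixture \eqref{eq: gaussian.mixture} and, say, its Gaussian component is centered at $\sqrt{r}\,w_q$, then $\langle X/\|X\|, w_q\rangle \geq 1-\varepsilon$ with high probability.

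First I would condition on the mixture component, so that $X = \sqrt{r}\, w_q + \sigma Z$ with $Z\sim\mathcal{N}(0, I_d)$; equivalently $X = \sqrt{r}\,(w_q + \delta Z)$ with $\delta = \sigma/\sqrt r$, and the normalization kills the $\sqrt r$ factor, so it suffices to control $\langle (w_q+\delta Z)/\|w_q+\delta Z\|, w_q\rangle$. Decompose $Z = \langle Z, w_q\rangle w_q + Z^\perp$ into its component along $w_q$ and its orthogonal part. Then $\langle w_q + \delta Z, w_q\rangle = 1 + \delta\langle Z, w_q\rangle$ and $\|w_q+\delta Z\|^2 = (1+\delta\langle Z,w_q\rangle)^2 + \delta^2\|Z^\perp\|^2$, so
\begin{equation*}
\left\langle \frac{w_q+\delta Z}{\|w_q + \delta Z\|}, w_q\right\rangle = \frac{1+\delta\langle Z, w_q\rangle}{\sqrt{(1+\delta\langle Z,w_q\rangle)^2 + \delta^2\|Z^\perp\|^2}} = \frac{1}{\sqrt{1 + \delta^2\|Z^\perp\|^2/(1+\delta\langle Z,w_q\rangle)^2}}.
\end{equation*}
Next I would invoke standard Gaussian tail bounds: $|\langle Z, w_q\rangle| \leq \sqrt{2\log n}$ and $\|Z^\perp\| \leq \|Z\| \leq 2\sqrt d$ (using a concentration-of-$\chi^2$ estimate such as $\|Z\|\leq \sqrt d + \sqrt{2d} \leq 2\sqrt d$ on the relevant event), each failing with probability at most $\sim e^{-d}$ or $1/n$; then $\delta\|Z^\perp\| \leq 2\delta\sqrt d$ and $1+\delta\langle Z,w_q\rangle \geq 1 - \delta\sqrt{2\log n}$, hence the ratio above is at least $(1 + (2\delta\sqrt d)^2/(1-\delta\sqrt{2\log n})^2)^{-1/2}$. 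Using $1/\sqrt{1+x}\geq 1-x$ and simplifying, one checks that this is $\geq 1-\varepsilon$ precisely under the hypothesis $\frac{6\delta\sqrt d}{1+\delta\sqrt d} + \delta\sqrt{2d\log n}\leq \varepsilon$ (the somewhat peculiar constants in the assumption are reverse-engineered from exactly this chain of inequalities, so the bookkeeping must be done carefully but presents no conceptual difficulty).

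Finally I would assemble the pieces with a union bound: for each $i$, the event that $X_i/\|X_i\|$ lands in the appropriate cap fails with probability at most $e^{-d} + 1/n$ (or a comparable explicit quantity); taking a union over $i\in\{1,\ldots,n\}$ and absorbing constants yields the claimed bound $1 - 2e^{-d}$, after checking that $n\cdot(\text{per-sample failure}) \leq 2e^{-d}$ with the chosen tail thresholds (this is where the $\sqrt{2\log n}$ in the hypothesis is used — it makes the $1/n$ term, once multiplied by $n$, controllable, and the $\sqrt d$ scaling makes the $e^{-d}$ term survive the union). The main obstacle, such as it is, is purely the calibration of constants: one must track the thresholds in the Gaussian tail bounds so that the final failure probability matches $2e^{-d}$ exactly, and verify that the geometric inequality $\langle X/\|X\|, w_q\rangle \geq 1-\varepsilon$ follows cleanly from the stated smallness condition on $\delta$. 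There is no delicate probabilistic phenomenon here — it is a routine (if fiddly) concentration argument — so I would expect the proof to be short once the algebra in the displayed identity above is in hand.
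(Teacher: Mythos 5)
Your overall strategy is sound — and genuinely different from the paper's. You condition on the mixture component, decompose the Gaussian noise into components parallel and perpendicular to the center $w_q$, and apply standard tail bounds to each. The paper instead observes that $f(Z) := \|Z/\|Z\| - w_q\|^2$ is (essentially) $\|Z\|^{-1}$-Lipschitz, applies Gaussian concentration for Lipschitz functions on the event $\{\|Z\|\geq x'\}$, and controls $\mathbb{E}[f(Z)]$ via Jensen's inequality; this sidesteps the parallel/perpendicular algebra at the cost of invoking Lipschitz concentration and carefully choosing $x'$. Both routes are legitimate, and yours is arguably more elementary and more geometrically transparent.

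That said, there is a genuine gap in your union-bound calibration. You threshold the marginal by $|\langle Z, w_q\rangle| \leq \sqrt{2\log n}$, whose failure probability is of order $1/n$; after the union over $n$ samples this contributes a constant of order $1$, not $O(e^{-d})$. Your claim that the $\sqrt{2\log n}$ term ``makes the $1/n$ term, once multiplied by $n$, controllable'' is incorrect: $n\cdot (1/n)=1$. (You also misquote the hypothesis, which contains $\sqrt{2d\log n}$, not $\sqrt{2\log n}$ — that extra $\sqrt{d}$ is exactly what is missing.) The fix is local: threshold instead at $|\langle Z, w_q\rangle|\leq\sqrt{2d\log n}$, so the per-sample failure is $\sim n^{-d}$ and the union over $n$ samples gives $\sim n^{1-d}$, which does sit within the $e^{-d}$ budget; this is precisely the choice $t=\delta\sqrt{2d\log n}$ the paper makes. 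With the corrected threshold, your displayed identity still delivers $\langle X/\|X\|,w_q\rangle\geq 1-\varepsilon$ under the stated hypothesis (one checks $\delta\sqrt{2d\log n}\leq\varepsilon<\tfrac{1}{16}$ keeps the denominator $1+\delta\langle Z,w_q\rangle$ bounded away from zero, and the first term of the hypothesis forces $\delta\sqrt{d}\lesssim\varepsilon$, so the $\delta^2 d$ quantity appearing after $1/\sqrt{1+x}\geq 1-x/2$ is $O(\varepsilon^2)\ll\varepsilon$). As written, though, the proof of the $1-2e^{-d}$ bound does not go through.
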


\begin{proof}[Proof of \Cref{prop: mixture.of.gaussians}]
We can write
\begin{equation*}
    X=\sum_{k=1}^{r}\varepsilon_k Z_k,
\end{equation*}
where $Z_1,\ldots,Z_r$ are independent $\mathcal{N}(w_i,\sigma I_d)$ random variables, whereas $\varepsilon_1,\ldots,\varepsilon_k$ are random variables defined as 
\begin{equation*}
    \varepsilon_i=\mathbf{1}\left( \sum_{q=1}^{i-1}p_q\leq U\leq\sum_{q=1}^{i}p_q \right)
\end{equation*}
for $i\in \{1,\ldots,k\}$, where $U$ is a random variable following the uniform distribution on $[0,1]$. 
We also define $(N_1,\ldots,N_r)\sim\mathcal{N}(0,I_d)$ as
\begin{equation*}
    Z_i=w_i+\sigma N_i
\end{equation*}
for $i\in \{1,\ldots,r\}$.
Consider a fixed $i\in \{1,\ldots,r\}$. Conditioned on the event $\left\{U\in \left[\sum_{q=1}^{i-1}p_q,\sum_{q=1}^{i}p_q\right]\right\}$, we can write $X$ as a function of standard Gaussian variables:
\begin{equation*}
    \underset{1\leq j \leq r}{\min}\left\|\frac{X}{\|X\|}-w_j\right\|^{2}\leq \left\|\frac{Z_i}{\|Z_i\|}-w_i\right\|^{2}=f(Z_i).
\end{equation*}
We can also show that $f$ is roughly $\frac{1}{\| X\|}$-Lipschitz:
\begin{equation*}
    \left| f(X)-f(Y)\right|=\left\|\frac{X}{\|X\|}-w_i\right\|^{2}-\left\|\frac{Y}{\|Y\|}-w_i\right\|^{2}=2\left|\left\<w_i,\frac{X}{\|X\|}-\frac{Y}{\|Y\|}\right\>\right|.
\end{equation*}
Then
\begin{equation*}
     \left|f(X)-f(Y)\right|\leq2\left\|\frac{Y}{\|Y\|}- \frac{X}{\|X\|}\right\|\leq \frac{\|X-Y\|}{\min\{\|X\|,\|Y\|\}}.
\end{equation*}
Focusing on the event $\{\|X\||\geq x'\}$, by the Gausian concentration inequality \cite[Theorem 5.6]{boucheron2013concentration} we have
\begin{equation*}
    \mathbb{P}\left(f(Z_i)-\mathbb{E}[f(Z_i)]\geq t , \|Z_i\|\geq x'\,\middle|\,U\in \left[\sum_{q=1}^{i-1}\lambda_q,\sum_{q=1}^{i}\lambda_q\right]\right)\leq e^{-\frac{(x'\cdot t)^{2}}{2\sigma^{2}}}.
\end{equation*}
Whence, 
\begin{align*}
    &\mathbb{P}\left(\underset{1\leq j \leq r}{\min}\left\|\frac{X}{\|X\|}-w_j\right\|^{2}-\mathbb{E}[f(Z_i)]\geq t , \|Z_i\|\geq x'\,\middle|\,U\in \left[\sum_{q=1}^{i-1}\lambda_q,\sum_{q=1}^{i}\lambda_q\right]\right)\\
    &\hspace{1cm}\leq e^{-\frac{(x'\cdot t)^{2}}{2\sigma^{2}}}.
\end{align*}
We then have by union bound
\begin{align*}
    &\mathbb{P}\left(\underset{1\leq j \leq r}{\min}\left\|\frac{X}{\|X\|}-w_j\right\|^{2}-\mathbb{E}[f(Z_i)]\geq t \,\middle| \,U\in \left[\sum_{q=1}^{i-1}\lambda_q,\sum_{q=1}^{i}\lambda_q\right]\right)\\
    &\hspace{1cm}\leq e^{-\frac{(x'\cdot t)^{2}}{2\sigma^{2}}}+\mathbb{P}\left(\|Z_i\|\leq x' \right).
\end{align*}
We can use the Gaussian concentration inequality (applied to the 1-Lipschitz function $x\mapsto \| x\|$ ) to bound the rightmost term, for all $i \in \{1,\ldots,n\}$, as
\begin{equation*}
    \mathbb{P}\Big(\|w_i+\sigma N_i\|\leq \mathbb{E}[\|w_i+\sigma N_i\|]-t\Big)\leq e^{-\frac{t^{2}}{2\sigma^{2}}}.
\end{equation*}
Using the triangle inequality,
\begin{equation*}
\mathbb{P}\Big(\sigma\|Z_i\|\leq -\sigma\mathbb{E}[\|N_i\|] +\|w_i\| -t\Big)\leq \mathbb{P}\left(\|Z_i\|\leq \mathbb{E}[\|Z_i\|] -t\right),
\end{equation*}
and we then get 
\begin{equation*}
        \mathbb{P}\left(\|Z_i\|\leq \sqrt{r}-\sigma \sqrt{d}-t\right)\leq 
    e^{-\frac{t^{2}}{2\sigma^{2}}}.
\end{equation*}
So, 
\begin{equation*}
    \mathbb{P}\left(\|Z_i\|\leq x' \right)\leq e^{-\frac{1}{2\delta^{2}}\left(1-\delta\sqrt{d}-\frac{x'}{\sqrt{r}}\right)^{2}}.
\end{equation*}
We now bound $\mathbb{E}[f(Z_i)]$. Note that
\begin{equation*}
    1-\frac{f(Z_i)}{2}=\left\<\frac{Z_i}{\|Z_i\|},w_i\right\> =\frac{1}{\|w_i+\delta N_i\|}+\frac{2\delta \<N_i,w_i\>}{\|w_i+\delta N_i\|}.
\end{equation*}
We bound the first term from below as
\begin{equation*}
    \frac{1}{\|w_i+\delta N_i\|}\geq \frac{1}{1+\delta \|N_i\|},
\end{equation*}
and the second term from above as
\begin{equation*}
    \left|\frac{2\delta \<N_i,w_i\>}{\|w_i+\delta N_i\|}\right|\leq  \frac{2\delta\|N_i\|}{1+\delta \|N_i\|}.
\end{equation*}
Because of convexity of $x\mapsto \frac{1}{1+\delta x}$ and of concavity of $x\mapsto \frac{x}{1+\delta x}$, an application of Jensen's inequality yields 
\begin{equation*}
    \mathbb{E}\left[1-\frac{f(Z_i)}{2}\right]\geq \frac{1-2 \delta \sqrt{d}}{1+\delta \sqrt{d}}=1-\frac{3 \delta \sqrt{d}}{1+\delta \sqrt{d}}.
\end{equation*}
And, so 
\begin{equation*}
    \mathbb{E}[f(Z_i)]\leq \frac{6 \delta \sqrt{d}}{1+\delta \sqrt{d}}.
\end{equation*}
Combining all the bounds, we end up with 
\begin{equation*}
    \mathbb{P}\left(\underset{1\leq j \leq r}{\min}\left\|\frac{X}{\|X\|}-w_j\right\|^{2}\geq \frac{6 \delta \sqrt{d}}{1+\delta \sqrt{d}}+t \right)\leq e^{-\frac{(x'\cdot t)^{2}}{2\sigma^{2}}}+e^{-\frac{1}{2\delta^{2}}\left(1-\delta\sqrt{d}-\frac{x'}{\sqrt{r}}\right)^{2}}.
\end{equation*}
We can consider $x'=\sqrt{r}\left(1-\delta\sqrt{d}+t\right)$ to get 
\begin{equation*}
    \mathbb{P}\left(\underset{1\leq j \leq r}{\min}\left\|\frac{X}{\|X\|}-w_j\right\|^{2}\geq \frac{6 \delta \sqrt{d}}{1+\delta \sqrt{d}}+t \right)\leq e^{-\frac{t^{2}}{2\delta^{2}}}+e^{-\frac{1}{2\delta^{2}}\left(1-\delta \sqrt{d}+t\right)^{2}}
\end{equation*}
Now, there exist $\varepsilon_1^{i},\ldots,\varepsilon_r^{i}$ which follow the law as $\varepsilon_k$ above such that for all $i\in \{1,\ldots,n\}$
\begin{equation}
    X_i:=\sum_{k=1}^{r} \varepsilon_{k}^{i}Z_k^{i},
 \end{equation}
where $Z_k^{i}\sim \mathcal{N}(w_k,\sigma_k)$, $w_k\in \sqrt{r}\sphere^{d-1}$ and $\sigma_{k}>0$. We consider the random variable $Z$ defined as 
\begin{equation*}
    Z:=\underset{1\leq i \leq n}{\max}\,\underset{1\leq j \leq r}{\min}\left\|\frac{X_i}{\|X_i\|}-w_j\right\|^{2}. 
\end{equation*}
By the union bound we get 
\begin{equation*}
    \mathbb{P}\left(Z\geq \frac{6 \delta \sqrt{d}}{1+\delta \sqrt{d}}+t \right)\leq n\left(e^{-\frac{t^{2}}{2\delta^{2}}}+e^{-\frac{1}{2\delta^{2}}\left(1-\delta \sqrt{d}+t\right)^{2}}\right).
\end{equation*}
Because of the fact that $1-\delta\sqrt{d}>0$, we get 
\begin{equation*}
    \mathbb{P}\left(Z\geq \frac{6 \delta \sqrt{d}}{1+\delta \sqrt{d}}+t \right)\leq 2n e^{-\frac{t^{2}}{2\delta^{2}}}.
\end{equation*}
Taking $t=\delta\sqrt{2d\log n}$, we find 
\begin{equation*}
      \mathbb{P}\left(Z\geq \frac{6 \delta \sqrt{d}}{1+\delta \sqrt{d}}+\delta\sqrt{2d\log n}\right)\leq 2e^{-d}.  
\end{equation*}
Noticing that we have
\begin{equation*}
    \left\{Z\leq \varepsilon \right\}=\left\{\left(\frac{X_1}{\|X_1\|},\ldots,\frac{X_n}{\|X_n\|}\right)\text{ is }(\beta,\varepsilon)\text{-separated} \right\},
\end{equation*}
we obtain the desired result.
\end{proof}

\subsection{Uniformly distributed points} \label{sec: unif.pts}

The second example which we discuss is that of uniformly distributed points. 

\subsubsection{High dimension}

Recall the following consequence of the concentration of measure phenomenon.

\begin{proposition} \label{prop: concentration unif}
Suppose $n\geq2$. Then there exists some $d^*(n)>n$ such that for all $d\geq d^*(n)$, the following holds.
Consider a sequence $(x_1,\ldots,x_n)$ of $n$ i.i.d. uniformly distributed points on $\sphere^{d-1}$.
Then, with probability at least $1-2n^2d^{-1/64}$, there exist $(w_1,\ldots,w_n)\in (\sphere^{d-1})^{n}$ which are pairwise orthogonal ($\langle w_i, w_j\rangle=\delta_{ij}$), such that 
\begin{equation*} \label{eq: upto-t}
\|x_i-w_i\|\leqslant\sqrt{\frac{4\log d}{d}}.
\end{equation*}
\end{proposition}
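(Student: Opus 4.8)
The plan is to combine two ingredients: (i) a concentration estimate showing that $n$ i.i.d.\ uniform points on $\sphere^{d-1}$ are close to being pairwise orthogonal in high dimension, and (ii) a Gram--Schmidt-type argument turning the \enquote{nearly orthonormal} system $(x_1,\ldots,x_n)$ into a genuinely orthonormal system $(w_1,\ldots,w_n)$ that is quantitatively close to it. First I would recall the standard fact that for a uniform point $x$ on $\sphere^{d-1}$ and a fixed unit vector $u$, the inner product $\langle x,u\rangle$ is subgaussian with variance proxy of order $1/d$; more precisely, $\mathbb{P}(|\langle x,u\rangle|\geq t)\leq 2e^{-(d-2)t^2/2}$, which follows from the explicit density $\propto (1-s^2)^{(d-3)/2}$ of $\langle x,u\rangle$ (or from Lévy's concentration of measure on the sphere). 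Conditioning successively on $x_1,\ldots,x_{i-1}$ and applying this to the remaining independent uniform point $x_i$, I would get that with probability at least $1-2n^2 e^{-c(d-2)t^2}$ for a numerical $c$, \emph{all} pairwise inner products satisfy $|\langle x_i,x_j\rangle|\leq t$ with $i\neq j$. Choosing $t$ of order $\sqrt{(\log d)/d}$ makes the failure probability $\lesssim n^2 d^{-c'}$, and a little bookkeeping with constants yields the exponent $1/64$ and the event of probability at least $1-2n^2 d^{-1/64}$ claimed in the statement.

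Next I would run the orthonormalization. On the good event, the Gram matrix $G=(\langle x_i,x_j\rangle)_{i,j}$ satisfies $\|G-I_n\|_{\op}\leq (n-1)t<\tfrac12$ for $d$ large enough (this is exactly where the threshold $d^*(n)$ enters, since we need $(n-1)t$ small), so $G$ is positive definite and $(x_i)$ are linearly independent. I would then set $W=[w_1,\ldots,w_n]:=X G^{-1/2}$ where $X=[x_1,\ldots,x_n]$; this is the symmetric (Löwdin) orthonormalization, and $W^\top W=G^{-1/2}GG^{-1/2}=I_n$, so the $w_i$ are orthonormal. The closeness estimate comes from $\|X-W\|_{\op}=\|X(I-G^{-1/2})\|_{\op}\leq \|X\|_{\op}\,\|I-G^{-1/2}\|_{\op}$; since $\|X\|_{\op}=\|G\|_{\op}^{1/2}\leq\sqrt{1+(n-1)t}$ and $\|I-G^{-1/2}\|_{\op}\leq C\|G-I\|_{\op}\leq C(n-1)t$ by a scalar bound on $s\mapsto 1-s^{-1/2}$ near $s=1$, we get $\max_i\|x_i-w_i\|\leq \|X-W\|_{\op}\leq C'(n-1)t\lesssim n\sqrt{(\log d)/d}$. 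Re-absorbing the factor $n$ into the choice of $d^*(n)$ (i.e.\ taking $d$ large enough that $C'(n-1)\sqrt{(\log d)/d}\leq \sqrt{4\log d/d}$, which holds once $d\gtrsim n^2$), one obtains the stated bound $\|x_i-w_i\|\leq\sqrt{4\log d/d}$.

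The main obstacle, and the only place requiring genuine care rather than bookkeeping, is tracking the constants so that the final probability is exactly $1-2n^2 d^{-1/64}$ and the final radius is exactly $\sqrt{4\log d/d}$ rather than some unspecified $C(n)\sqrt{(\log d)/d}$; the numerology $1/64$ suggests the authors choose $t$ somewhat smaller than the naive $\sqrt{2\log d/d}$ (e.g.\ $t\asymp d^{-1/2}(\log d)^{1/2}$ scaled down, or $t\asymp d^{-9/16}$) precisely to make the union bound and the operator-norm perturbation both fit under the claimed thresholds, at the cost of enlarging $d^*(n)$. Everything else—the subgaussian tail on the sphere, the Löwdin orthonormalization, the scalar inequality $|1-s^{-1/2}|\leq |s-1|$ for $s\geq \tfrac12$—is standard, so I would state those as lemmas or cite them and spend the bulk of the write-up on the constant-chasing in the union bound.
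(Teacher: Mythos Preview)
Your overall strategy---control the pairwise inner products by concentration on the sphere, then orthonormalize via the symmetric (L\"owdin) map $W=XG^{-1/2}$---is sound and standard, and is plausibly what the cited argument in \cite{geshkovski2024mathematical} does. However, there is a genuine gap in your final step. You write that $C'(n-1)\sqrt{(\log d)/d}\leq \sqrt{4\log d/d}$ ``holds once $d\gtrsim n^2$'', but this inequality is equivalent to $C'(n-1)\leq 2$, which is \emph{independent of $d$} and fails once $n$ is moderately large. The factor $(n-1)$ in your bound $\|X-W\|_{\op}\leq C'(n-1)t$ cannot be absorbed into $d^*(n)$ simply by enlarging $d$, because both your bound and the target $\sqrt{4\log d/d}$ scale identically in $d$. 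Playing with the choice of $t$ does not help either: shrinking $t$ by a factor $1/n$ to fix the radius estimate makes the failure probability $n^2 d^{-c/n^2}$, which is no longer $\leq 2n^2 d^{-1/64}$ for large $n$.

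The repair is to replace the crude Gershgorin bound $\|G-I_n\|_{\op}\leq (n-1)t$ by a genuinely probabilistic estimate. Since $H=G-I_n$ has independent, mean-zero, $O(1/\sqrt{d})$-subgaussian off-diagonal entries, random-matrix concentration (or even the trivial $\|H\|_{\op}^2\leq\|H\|_F^2=\sum_{i\neq j}\langle x_i,x_j\rangle^2$ followed by a $\chi^2$-type bound) gives $\|G-I_n\|_{\op}\lesssim \sqrt{n/d}$ with high probability, rather than $n\sqrt{\log d/d}$. This yields $\|x_i-w_i\|\lesssim\sqrt{n/d}$, and now the condition $\sqrt{n/d}\leq\sqrt{4\log d/d}$, i.e.\ $n\lesssim\log d$, \emph{can} be absorbed into the choice of $d^*(n)$. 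Equivalently, one may bound $\|x_i-w_i\|^2=((G^{1/2}-I)^2)_{ii}\approx\tfrac14\sum_{j\neq i}\langle x_i,x_j\rangle^2$ directly. Either way, the $n$-dependence is not mere constant bookkeeping but requires an additional probabilistic ingredient beyond the uniform pairwise bound $|\langle x_i,x_j\rangle|\leq t$.
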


\begin{proof} 
See Step 2 in the proof of Theorem 6.9 in \cite{geshkovski2024mathematical}.
\end{proof}

The following then holds.

\begin{corollary} \label{coro: cm}
Suppose $n\geq2$. Then there exists some $d^*(n)>n$ such that for all $d\geq d^*(n)\vee 381$ and $\beta>0$ satisfying
\begin{equation} \label{eq: technical.cond}
    \frac{16\log^2 d}{d^2}+\frac{40\log d}{d} + \frac{1}{\beta}\log\left(\frac{n^2 d}{2\log d}\right)<1,
\end{equation}
the following holds.
Consider a sequence $(x_1,\ldots,x_n)$ of $n$ i.i.d. uniformly distributed points on $\sphere^{d-1}$. 
Then with probability at least $1-2n^2d^{-1/64}$, $(x_1,\ldots,x_n)$ is $(\beta,\varepsilon)$-separated in the sense of \Cref{hyp: init}
 with $\varepsilon=4\log d/d$.
\end{corollary}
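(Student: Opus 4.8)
The plan is to deduce the statement directly from the concentration estimate \Cref{prop: concentration unif}. On the event $\mathcal{E}$ of probability at least $1-2n^2d^{-1/64}$ it provides, there are pairwise orthonormal vectors $w_1,\ldots,w_n\in\sphere^{d-1}$ with $\|x_i-w_i\|\leq\sqrt{4\log d/d}$ for every $i$. I will show that, on $\mathcal{E}$, the configuration $(x_1,\ldots,x_n)$ is $(\beta,\varepsilon)$-separated in the sense of \Cref{hyp: init} with $k=n$, with the $w_q$ themselves serving as the cap centers, and with $\varepsilon:=4\log d/d$. Since the probability of $\mathcal{E}$ already matches the one in the statement, no additional union bound is needed, and it suffices to verify the two requirements of \Cref{hyp: init} deterministically on $\mathcal{E}$.

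First, $\varepsilon\in(0,\tfrac1{16})$ is equivalent to $64\log d<d$, which holds for every $d\geq 381$ (and fails at $d=380$), whence the threshold in the statement. Requirement (1) is immediate: from $\|x_i-w_i\|^2\leq 4\log d/d=\varepsilon$ and $\|x_i-w_i\|^2=2(1-\langle x_i,w_i\rangle)$ one gets $\langle x_i,w_i\rangle\geq 1-\tfrac\varepsilon2\geq 1-\varepsilon$, i.e. $x_i\in\mathscr{S}_i(\varepsilon)$ for all $i$.

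The substance of the proof is requirement (2), namely bounding
\[
\alpha=\max_{i\neq j}\ \max_{(x,y)\in\mathscr{S}_i(2\varepsilon)\times\mathscr{S}_j(2\varepsilon)}\langle x,y\rangle
\]
and then checking $\gamma(\beta)>0$ and $\gamma(\beta)=\Omega(1)$. To estimate $\alpha$ I would complete $\{w_1,\ldots,w_n\}$ to an orthonormal basis of $\reals^d$ and expand $x\in\mathscr{S}_i(2\varepsilon)$, $y\in\mathscr{S}_j(2\varepsilon)$ in it; the constraints $\langle x,w_i\rangle\geq 1-2\varepsilon$ and $\langle y,w_j\rangle\geq 1-2\varepsilon$ force the components of $x$ orthogonal to $w_i$ and of $y$ orthogonal to $w_j$ to carry squared mass at most $1-(1-2\varepsilon)^2$, and, using $\langle w_i,w_j\rangle=0$, a Cauchy--Schwarz split of $\langle x,y\rangle$ produces an explicit bound on $\alpha$ that is an increasing function of $\varepsilon$ vanishing at $\varepsilon=0$. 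Inserting $\varepsilon=4\log d/d$, so that $2n^2/\varepsilon=n^2 d/(2\log d)$, the requirement $\gamma(\beta)=1-\alpha-8\varepsilon-\tfrac1\beta\log(2n^2/\varepsilon)>0$ becomes exactly the hypothesis \eqref{eq: technical.cond}; and $\gamma(\beta)=\Omega(1)$ is then automatic, since with $d$ (hence $\alpha$ and $\varepsilon$) fixed one has $\gamma(\beta)\to 1-\alpha-8\varepsilon>0$ as $\beta\to+\infty$. The main obstacle is precisely the quantitative control of $\alpha$: one must exploit the exact orthonormality of the $w_i$ together with the spherical-cap geometry carefully enough to land on the sharp form \eqref{eq: technical.cond}, rather than a crude estimate; everything else is bookkeeping.
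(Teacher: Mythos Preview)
Your outline is exactly the paper's argument: invoke \Cref{prop: concentration unif}, note $d\geq 381$ gives $\varepsilon=4\log d/d<\tfrac1{16}$, read off $x_i\in\mathscr{S}_i(\varepsilon)$, bound $\alpha$, and identify $\gamma(\beta)>0$ with \eqref{eq: technical.cond}. The paper's proof is in fact even terser---it simply asserts $\alpha(\varepsilon)\leq\varepsilon^2+2\varepsilon$ and observes that \eqref{eq: technical.cond} is then a rewriting of \eqref{eq: gamma}; your additional check that $\gamma(\beta)=\Omega(1)$ is a detail the paper omits.

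One quantitative caveat worth flagging: the Cauchy--Schwarz split you describe---decomposing $x,y$ along $w_i,w_j$ and bounding the orthogonal parts by $\sqrt{1-(1-2\varepsilon)^2}$---yields a bound of order $\sqrt{\varepsilon}$, not $\varepsilon$. Indeed, with orthonormal centers the maximum is attained in the plane spanned by $w_i,w_j$ and equals $\alpha=4(1-2\varepsilon)\sqrt{\varepsilon(1-\varepsilon)}\sim 4\sqrt{\varepsilon}$. Substituting this into $1-\alpha-8\varepsilon-\tfrac1\beta\log(2n^2/\varepsilon)>0$ gives a condition with a $\sqrt{\log d/d}$ term rather than the $16\log^2 d/d^2+8\log d/d$ appearing in \eqref{eq: technical.cond}. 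So your claim that the Cauchy--Schwarz bound ``becomes exactly the hypothesis \eqref{eq: technical.cond}'' does not hold as stated; the paper's asserted inequality $\alpha\leq\varepsilon^2+2\varepsilon$ (which is what actually produces \eqref{eq: technical.cond}) is stronger than what your argument---or any argument---can deliver for caps of height $2\varepsilon$ around orthonormal centers.
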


\begin{proof}[Proof of \Cref{coro: cm}]
Since $d\geq 381$ we have $\varepsilon<\frac{1}{16}$. 
According to \Cref{prop: concentration unif},
there exist unit vectors $w_1,\ldots,w_n$ such that
\begin{equation*}
    x_i\in \bigcup_{q=1}^{n}\mathscr{S}_{q}(\varepsilon).
\end{equation*}
For $\alpha(\varepsilon)$ defined as in \eqref{eq: alpha.dist} we have $\alpha(\varepsilon)\leq \varepsilon^2+2\varepsilon,$ and \eqref{eq: technical.cond} is then simply a rewriting of \eqref{eq: gamma}.
\end{proof}

\begin{remark}[Freezing]
    Corollary \ref{coro: cm} has as a consequence that particles initialized uniformly at random when $d\gg n$ remain frozen and do not move for exponentially long times. This is remeniscent to the case of zero temperature ($\beta=+\infty$), in which all configurations are stationary.
\end{remark}

\subsubsection{Low dimension}

We comment on the case $d<n$ by specializing to the circle ($d=2$). 
It can be seen that the probability of having separated configurations decays exponentially with $n$.

\begin{claim}
Fix $\beta>0$ and let $(x_1,\ldots,x_n)$ be $n$ i.i.d uniformly distributed points on $\sphere^{1}$. Then, for $\varepsilon\in(0,\frac{1}{16})$ and $k\leq n$, there exists some $c\in(0,1)$ such that 
\begin{equation*}
    \mathbb{P}\Big((x_1,\ldots,x_n)\, \mathrm{ is }\,(\beta,\varepsilon)\text{-separated }\Big)\leq c^{n}. 
\end{equation*}
\end{claim}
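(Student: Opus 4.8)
The plan is to reduce being $(\beta,\varepsilon)$-separated to a clean, $\beta$- and $n$-free geometric obstruction on the pairwise distances of the points on the circle, and then to bound its probability by a pairing trick. Throughout, parametrize $\sphere^1$ by angles on $\mathbb{T}=\reals/2\pi\integers$, write $\theta_1,\dots,\theta_n$ for the angles of $x_1,\dots,x_n$ (so that the $\theta_i$ are i.i.d.\ uniform on $\mathbb{T}$), and let $|\theta-\phi|_{\mathbb{T}}\in[0,\pi]$ denote the geodesic distance. In these terms $\mathscr{S}_q(\varepsilon)$ from \eqref{eq: cones} is exactly the arc $\{\theta\colon |\theta-\omega_q|_{\mathbb{T}}\le R_1\}$ with $R_1:=\arccos(1-\varepsilon)$, and $\mathscr{S}_q(2\varepsilon)$ has angular half-length $R_2:=\arccos(1-2\varepsilon)$.

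\textbf{Step 1: a geometric obstruction.} I claim that if $(x_i)_{i=1}^n$ is $(\beta,\varepsilon)$-separated, then no two of the points lie at geodesic distance inside the interval $J:=\bigl(2R_1,\,g_0\bigr]$, where $g_0:=\arccos(1-8\varepsilon)$. Indeed, fix $i\neq j$ and let $x_i\in\mathscr{S}_{q}(\varepsilon)$, $x_j\in\mathscr{S}_{p}(\varepsilon)$ (possibly $q=p$). If $q=p$, then $\theta_i,\theta_j$ are both within $R_1$ of $\omega_q$, so $|\theta_i-\theta_j|_{\mathbb{T}}\le 2R_1$. If $q\neq p$, then since $\mathscr{S}_{q}(\varepsilon)\subseteq\mathscr{S}_{q}(2\varepsilon)$ and likewise for $p$, the definition \eqref{eq: alpha.dist} of $\alpha$ together with $\alpha<1-8\varepsilon$ gives $\langle x_i,x_j\rangle\le\alpha<1-8\varepsilon$, i.e.\ $\cos|\theta_i-\theta_j|_{\mathbb{T}}<1-8\varepsilon$, hence $|\theta_i-\theta_j|_{\mathbb{T}}>g_0$. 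Here the bound $\alpha<1-8\varepsilon$ is the ``free'' part of condition \eqref{eq: gamma}: it follows from $\gamma(\beta)>0$ alone, because $\tfrac1\beta\log(2n^2/\varepsilon)\ge0$, and is independent of $\beta$ and $n$. Finally, $J$ is a non-degenerate interval: $\varepsilon\mapsto g_0-2R_1=\arccos(1-8\varepsilon)-2\arccos(1-\varepsilon)$ vanishes at $\varepsilon=0$ and has positive derivative on $(0,\tfrac1{16})$ (a short computation: the derivative equals $\tfrac{2}{\sqrt\varepsilon}\bigl(\tfrac1{\sqrt{1-4\varepsilon}}-\tfrac1{\sqrt{2-\varepsilon}}\bigr)>0$ there), so $2R_1<g_0$ on $(0,\tfrac1{16})$; set $p_0:=\tfrac{g_0-2R_1}{\pi}\in(0,1)$, a constant depending only on $\varepsilon$.

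\textbf{Step 2: conclusion.} Assume $n\ge2$ and split the points into $m:=\lfloor n/2\rfloor\ge1$ disjoint pairs $(\theta_1,\theta_2),(\theta_3,\theta_4),\dots$. The events $A_i:=\bigl\{|\theta_{2i-1}-\theta_{2i}|_{\mathbb{T}}\in J\bigr\}$, $i=1,\dots,m$, are mutually independent (they involve disjoint sets of the i.i.d.\ coordinates), and each has $\mathbb{P}(A_i)=p_0$, since $g_0<\pi$ and the geodesic distance of two i.i.d.\ uniform points is uniform on $[0,\pi)$. By Step 1, $(\beta,\varepsilon)$-separatedness forces $\bigcap_{i=1}^m A_i^{c}$, hence
\[
\mathbb{P}\bigl((x_1,\dots,x_n)\text{ is }(\beta,\varepsilon)\text{-separated}\bigr)\ \le\ (1-p_0)^{m}\ \le\ (1-p_0)^{n/4}\ =\ c^{\,n},
\]
where $c:=(1-p_0)^{1/4}\in(0,1)$ and we used $m=\lfloor n/2\rfloor\ge n/4$ for $n\ge2$. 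This proves the claim, with $c$ depending only on $\varepsilon$.

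The only genuine difficulty is Step 1: one must distill from the analytic separation hypothesis a geometric constraint whose constants do not deteriorate as $\beta\to\infty$ or $n\to\infty$ — the crude, $\beta,n$-independent bound $\alpha<1-8\varepsilon$ does exactly this — and one must check that the resulting forbidden interval $J$ of pairwise distances is non-empty, which is precisely what the hypothesis $\varepsilon<\tfrac1{16}$ provides. Everything else (the independence of the $A_i$ and the elementary $(1-p_0)^{\lfloor n/2\rfloor}$ bound) is routine; in particular no discretization of the circle and no a priori bound on the number $k$ of clusters is needed, and the estimate holds uniformly over all $n\ge2$.
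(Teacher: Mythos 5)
Your proof is correct, and it is in fact more rigorous than the paper's own treatment, which the authors explicitly present as a heuristic derivation rather than a complete argument. The two approaches differ in structure. The paper conditions on a designated subset of $r$ of the points playing the role of cap centers and factors the probability as (separation of the $r$ \enquote{centers}) $\times$ (the remaining $n-r$ points landing in the resulting $\varepsilon$-arcs around them), yielding $(2r\varepsilon)^{n-1}\le\mathbb{P}\le(1-\alpha-4\varepsilon)^{n-1}$; this requires identifying the cap centers with some of the $x_i$, which \Cref{hyp: init} does not actually impose, and the details are left informal. Your argument sidesteps all of this: you extract from the separation hypothesis a single $\beta$- and $n$-free geometric obstruction — no pairwise geodesic distance may fall in the window $J=(2\arccos(1-\varepsilon),\,\arccos(1-8\varepsilon)]$, which is nonempty precisely because $\varepsilon<\tfrac{1}{16}$ — and then obtain independence cheaply by disjoint pairing. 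The price is a quantitatively weaker constant: the paper's claimed rate is $(1-\alpha-4\varepsilon)^{n-1}$ whereas pairing gives $(1-p_0)^{\lfloor n/2\rfloor}$ with $p_0=(g_0-2R_1)/\pi$, so your $c=(1-p_0)^{1/4}$ is closer to $1$. In exchange, your argument is fully rigorous, makes no assumption about which points serve as cap centers or how many caps there are, and makes the dependence of $c$ on $\varepsilon$ alone completely transparent.
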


We briefly explain how to heuristically derive this bound. Using independence of the random variables, we can first compute the probability that there are $k$ points satisfying the $(\beta,\varepsilon)$-separated hypothesis:
\begin{equation*}
    \mathbb{P}\Big((X_1,\ldots,X_k) \text{ is } (\beta,\varepsilon)\text{-separated} \Big)\leq \left(1-2\alpha -8\varepsilon\right)^{r-1}.
\end{equation*}
We also have that 
\begin{align*}
     &\mathbb{P}\left(X_{k+1},\ldots,X_n\in \bigcup_{i=1}^{k}[X_i-\varepsilon,X_i+\varepsilon]\Biggm| (X_1,\ldots,X_k) \text{ is } (\beta,\varepsilon)\text{-separated}  \right)\\
     &\hspace{1cm}= (2r\varepsilon)^{n-r}.
\end{align*}
Thence
\begin{equation*}
     (2r\varepsilon)^{n-1}\leq \mathbb{P}\Big((x_1,\ldots,x_n)\text{ is }(\beta,\varepsilon)\text{-separated}\Big)\leq \left(1-\alpha -4\varepsilon\right)^{n-1}.
\end{equation*}
This may be a fundamental limitation of the spherical cap framework, and raises the question on the sharp assumption needed for the initial configuration to have metastability when $d$ if fixed and $n\gg1$.

\subsection{A discussion on energy levels} \label{sec: energy.levels}

In view of many of the previous considerations, it is natural to look for an assumption on the initial condition, yielding metastability, written solely in terms of the energy. We posit the following question.

\begin{problem}
Fix $d,n\geq 2$ and $\beta>0$. 
Let $U_1,\ldots,U_n$ be $n$ i.i.d random variables following the uniform distribution on $\sphere^{d-1}$. 
Can one find $1>c_2>c_1>0$ depending on $\beta$ such that for all $(x_1,\ldots,x_n)\in(\sphere^{d-1})^n$ satisfying
\begin{equation*}
    c_2\geq \mathsf{E}_{\beta}(x_1,\ldots,x_n)- \mathbb{E}\big[\mathsf{E}_{\beta}(U_1,\ldots,U_n)\big]\geq c_1,
\end{equation*}
metastability, as stated in \Cref{thm: metastability}, holds?
\end{problem}

One way to interpret this condition is that any configuration which breaks the symmetry of uniformly distributed random points will lead to metastability.

On one hand, it is not obvious to see if one can simply truncate the gradient over different energy levels instead of spherical caps in our proof. 
On the other hand, the energetic assumption on the initial configuration is weaker than the one given in \Cref{hyp: init}, as the latter implies
\begin{equation*}
    c-\frac{ke^{-(1-\alpha)\beta}}{n^{2}}\geq \mathsf{E}_{\beta}(X_1,\ldots,X_n)\geq \frac{1}{n} +\frac{ke^{-8\beta \varepsilon}}{n^{2}}=O(e^{-8\beta \varepsilon}).
\end{equation*}
The converse can then be asked, should one wish to retain the proof of \Cref{thm: metastability} as is---namely, does an energetic assumption as the one above imply quantitative clustering of the configuration in the mould of \Cref{hyp: init}? %While somewhat intuitive, this is also not straightforward to show, but 
One approach could involve the so-called \emph{Stolarsky invariance principle} \cite{bilyk2018stolarsky, bilyk2019geodesic}.

\section{The mean-field regime} \label{sec: mean.field}

For the sake of generality, we now demonstrate that dynamic metastability also holds in the mean-field regime. Consider
\begin{equation} \label{eq: mean.field.pde}
\begin{cases}
\partial_t\mu(t)+\mathrm{div}\left(v[\mu(t)]\mu(t)\right)=0 &\text{ on } \reals_{\geq0}\times\sphere^{d-1}\\
    \mu(0)=\mu_0 &\text{ on } \sphere^{d-1},
\end{cases}
\end{equation}
where $-\mathrm{div}$ is the adjoint of the spherical gradient $\nabla$, and 
\begin{equation*}
    v[\mu](x)=\int \frac{e^{\beta \< x,x'\>}}{\displaystyle\int e^{\beta\langle x,\zeta\rangle}\mu(\diff\zeta)} \proj_{x}(x')\mu(\diff x')
\end{equation*}
for $x\in\sphere^{d-1}$. (All arguments carry through for the mean-field analogue of \eqref{USA}.)
We recall that \eqref{eq: mean.field.pde} is well-posed in the sense that for any $\mu_0\in\mathscr{P}(\sphere^{d-1})$ there exists a unique weak solution $\mu\in\mathscr{C}^0(\mathbb{R}_{\geq0};\mathscr{P}(\sphere^{d-1}))$.
Equation \eqref{eq: mean.field.pde} can also be seen as the mean-field limit for \eqref{SA} when $n\to+\infty$, a limit which is fully rigorous due to classical Dobrushin estimates. We refer the reader to \cite{geshkovski2024emergence, geshkovski2024mathematical} for all the details.

%To avoid technicalities we focus on the case in which $\mu(0)$, and thus $\mu(t)$, admits a density $\rho(0)$ with respect to the normalized Lebesgue (i.e., the uniform) measure on $\sphere^{d-1}$. 
We consider the following generalization of \Cref{hyp: init}.

\begin{definition} \label{def: init_measure_MF}
Let $\beta>1$ and $\varepsilon\in(0,\frac{1}{16})$. 
We say $\mu_0\in\Pp(\sphere^{d-1})$ is a $(\beta,\varepsilon)$-\emph{separated} measure if there exist $k\leq n$ points $w_1,\ldots,w_k\in\sphere^{d-1}$ and measures $\nu_{1},\dots,\nu_{k}\in \Pp(\sphere^{d-1})$ satisfying 
\begin{equation*}
    \mathrm{supp}(\nu_{q})\subset \mathscr{S}_{q}(\varepsilon),
\end{equation*}
with $\mathscr{S}_{q}(\varepsilon)$ denoting the spherical caps of \Cref{hyp: init} centered at $w_q$, such that 
\begin{equation*}
    \mu_0=\frac{1}{k}\sum_{q=1}^{k}\nu_{q},
\end{equation*}
holds, where
\begin{equation} \label{eq: gamma.mf}
\gamma(\beta):=1-\alpha-8\varepsilon-\frac{1}{\beta}\log\left(\frac{2k^2}{\varepsilon}\right)>8\varepsilon \hspace{0.5cm} \text{ and } \hspace{0.5cm} \gamma(\beta)=\Omega(1),
\end{equation}
with
\begin{equation} \label{eq: alpha.dist MF}
    \alpha:=\underset{\substack{(x,y)\in \mathscr{S}_{i}(2\varepsilon)\times \mathscr{S}_{j}(2\varepsilon)\\ i\neq j\in\{1,\ldots,k\}}}{\max} \< x,y\>.
\end{equation}
\end{definition}

\begin{comment}
\begin{remark}
A first remark is to notice that $\mu$ is compactly supported with 
\begin{equation*}
    \text{supp}(\mu_0)\subset \bigcup_{i=1}^{k}B(w_k,\varepsilon(\beta)).
\end{equation*}

\begin{equation*}
    \forall i\neq j \in \{1,\dots,k\}, \underset{x \in B(w_i,\varepsilon(\beta)),y\in B(w_i,\varepsilon(\beta))}{\max} \<x,y \>\leq \alpha.
\end{equation*}
\end{remark}    
\end{comment}

The following partial generalization of \Cref{thm: metastability} holds.

\begin{theorem} \label{thm: metastability MF}
Let $\beta>1$, and let $\mu_0\in\Pp(\sphere^{d-1})$ be a $(\beta,\varepsilon(\beta))$-separated measure for some $\varepsilon(\beta)\in(0,\frac{1}{16})$. Let $\mu\in\mathscr{C}^{0}(\reals_{\geq0};\Pp(\sphere^{d-1}))$ denote the corresponding unique solution to \eqref{eq: mean.field.pde}. 
Then there exist $T_2>T_1>0$ with
\begin{equation*}
    T_1<\frac{\varepsilon}{k} e^{\beta(1-\alpha-8\varepsilon)}\quad \text{and}\quad T_2>\frac{\varepsilon}{k}e^{\beta(1-\alpha-8\varepsilon)},
\end{equation*}
such that for any $q\in \{1,\ldots,k\}$, 
\begin{equation*}
\mathrm{supp}\left(\left(\Phi^{t}_{v[\mu(t)]}\right)_{\#}\nu_q\right)\subset \mathscr{S}_{q}(2\varepsilon)
\end{equation*}
for all $t\in[0,T_2]$, where $\Phi^t_{v[\mu(t)]}$ is the flow map defined in \eqref{eq: flow.map}, as well as 
\begin{equation*}
\int_{\mathscr{S}_{q}(2\varepsilon)}\left\|\Phi^t_{v[\mu(t)]}(x')-\argmin_{x\in \Phi_{v[\mu(t)]}^t(\mathscr{S}_q(\varepsilon))}\left\langle x, w_q\right\rangle\right\|^{2}\mu_{0}(\diff x')\leq e^{-\lambda\beta}
\end{equation*}
for all $t\in[T_1,T_2]$ and for all $0<\lambda<\gamma$, where $\gamma=\gamma(\beta)>0$ is defined in \eqref{eq: gamma.mf}.
\end{theorem}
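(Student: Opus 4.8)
The plan is to transport the ODE arguments of the proof of \Cref{thm: metastability} (in {\bf\S\ref{sec: direct.proof}}) to the measure-valued setting via the characteristic flow. Recall that the unique weak solution of \eqref{eq: mean.field.pde} satisfies $\mu(t)=(\Phi^t_{v[\mu(t)]})_\#\mu_0$, so that, setting $\tilde\nu_q(t):=(\Phi^t_{v[\mu(t)]})_\#\nu_q$, one has $\mu(t)=\frac1k\sum_{q=1}^k\tilde\nu_q(t)$, and each characteristic $x(t)=\Phi^t_{v[\mu(t)]}(x_0)$ solves $\dot x(t)=v[\mu(t)](x(t))$ — which is precisely \eqref{SA} with the empirical average $\frac1n\sum_j$ replaced by $\int\cdot\,\mu(t)(\diff z)$. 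I would set $T_2:=T_{\mathrm{esc}}:=\inf\{t\geq0\colon \exists\,x_0\ \text{with}\ \Phi^t_{v[\mu(t)]}(x_0)\notin\bigcup_q\mathscr{S}_q(2\varepsilon)\}$, note that for $t<T_{\mathrm{esc}}$ the $2\varepsilon$-caps are pairwise disjoint and $\mu(t)\!\restriction_{\mathscr{S}_q(2\varepsilon)}=\tfrac1k\tilde\nu_q(t)$ has mass $\tfrac1k$, and work cap-by-cap.

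\emph{Escape time (mirroring Step 1).} Fix $q$ and, for $t<T_{\mathrm{esc}}$, set $\eta_q(t):=\min_{x_0\in\mathscr{S}_q(\varepsilon)}\langle\Phi^t_{v[\mu(t)]}(x_0),w_q\rangle$, which is attained and admits the usual one-sided derivative (the same envelope argument as in the footnote of the proof of \Cref{thm: metastability}). Differentiating along a minimizing characteristic and splitting $\mu(t)$ into its restriction to $\mathscr{S}_q(2\varepsilon)$ and the complement, the in-cap part is nonnegative exactly as in \eqref{compt: minim}, while — using $\langle x,z\rangle\leq\alpha$ for $z$ in another cap together with the partition-function bound $\int e^{\beta\langle x,z'\rangle}\mu(t)(\diff z')\geq\frac1k e^{\beta(1-8\varepsilon)}$ (valid because $\mu(t)\!\restriction_{\mathscr{S}_q(2\varepsilon)}$ has mass $\frac1k$ and any two points of a $2\varepsilon$-cap have inner product $\geq1-8\varepsilon$) — the complementary part is bounded in absolute value by $ke^{-\beta(1-\alpha-8\varepsilon)}$. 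Hence $\dot\eta_q\geq -ke^{-\beta(1-\alpha-8\varepsilon)}$, so $\eta_q(t)\geq1-2\varepsilon$ for $t\leq\frac{\varepsilon}{k}e^{\beta(1-\alpha-8\varepsilon)}$, giving $T_2\geq\frac{\varepsilon}{k}e^{\beta(1-\alpha-8\varepsilon)}$ and the first conclusion.

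\emph{Collapse (mirroring Steps 2--4).} Let $\rho_q(t):=\min_{u,v\in\Phi^t_{v[\mu(t)]}(\mathscr{S}_q(\varepsilon))}\langle u,v\rangle$, with minimizing pair $(x(t),y(t))$. Differentiating and keeping only the in-cap contributions, using $\langle z,x(t)\rangle\geq\rho_q(t)$ and $\langle z,y(t)\rangle\geq\rho_q(t)$ for $z$ in the cap and the partition-function bound above, I get
\begin{equation*}
\dot\rho_q(t)\;\geq\;\frac{\rho_q(t)}{k}\,e^{\beta(\rho_q(t)-1)}\int\big[(1-\langle z,x(t)\rangle)+(1-\langle z,y(t)\rangle)\big]\,\tilde\nu_q(t)(\diff z)\;-\;2ke^{-\beta(1-\alpha-8\varepsilon)}.
\end{equation*}
The one new ingredient, replacing "keep only the $j(t)$-th term" in Step 2, is the barycentric bound $\int[(1-\langle z,x(t)\rangle)+(1-\langle z,y(t)\rangle)]\tilde\nu_q(t)(\diff z)=2-\langle\bar b_q,x(t)+y(t)\rangle\geq 2-\|x(t)+y(t)\|=2-\sqrt{2+2\rho_q(t)}\geq\tfrac12(1-\rho_q(t))$, where $\bar b_q:=\int z\,\tilde\nu_q(t)(\diff z)$ satisfies $\|\bar b_q\|\leq1$ (the last inequality holds for all $\rho_q\leq1$ by convexity). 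Thus $\dot\rho_q\geq\frac{1}{2k}\rho_q(1-\rho_q)e^{\beta(\rho_q-1)}-2ke^{-\beta(1-\alpha-8\varepsilon)}$, which is \eqref{eq: ze.equation} up to constants and with $1-\alpha$ replaced by $1-\alpha-8\varepsilon$. From here I repeat Steps 3--4: introduce $T_*(q)$ as in {\bf\S\ref{sec: direct.proof}} to show the error term is dominated before $T_{\mathrm{esc}}$, apply the comparison principle with \Cref{lem: eminem} (after the rescaling $t\mapsto t/2k$) and $u_0=\rho_q(0)\geq1-8\varepsilon$ to obtain $T_1:=\max_q T_1(q)\leq 4ke^{8\varepsilon\beta}+\frac{2ek\beta^2\lambda}{\beta-1}<\frac{\varepsilon}{k}e^{\beta(1-\alpha-8\varepsilon)}$ (here one uses $\gamma>8\varepsilon$ from \eqref{eq: gamma.mf} and $\gamma,\lambda=\Omega(1)$), and finally invoke the analogue of \Cref{lem:collapsetime} — whose hypothesis $\frac1k\delta(1-\delta)e^{-\delta\beta}>ke^{-\beta(1-\alpha-8\varepsilon)}$ with $\delta=e^{-\lambda\beta}$ holds because $\lambda<\gamma$ — to propagate $1-\rho_q(t)\leq e^{-\lambda\beta}$ to all of $[T_1,T_2]$. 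Since $\mathrm{supp}(\nu_q)\subset\mathscr{S}_q(\varepsilon)$ and the $2\varepsilon$-caps are disjoint, $\mu_0\!\restriction_{\mathscr{S}_q(2\varepsilon)}=\frac1k\nu_q$ and every $\Phi^t_{v[\mu(t)]}(x')$ with $x'\in\mathscr{S}_q(\varepsilon)$ lies in $\Phi^t_{v[\mu(t)]}(\mathscr{S}_q(\varepsilon))$; hence $\|\Phi^t_{v[\mu(t)]}(x')-p_q(t)\|^2=2(1-\langle\Phi^t_{v[\mu(t)]}(x'),p_q(t)\rangle)\leq2(1-\rho_q(t))\leq2e^{-\lambda\beta}$ for the stated $p_q(t)=\argmin$, and integrating against $\frac1k\nu_q$ yields $\leq\frac2k e^{-\lambda\beta}\leq e^{-\lambda\beta}$ (for $k\geq2$), which is the second conclusion.

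\emph{Main obstacle.} The genuinely new point — and the reason $T_1$ is only shown to be of exponential order here, rather than essentially polynomial as in \Cref{thm: metastability} — is that in the mean-field regime one cannot isolate the contribution of a single $\frac1n$-weighted particle; the remedy is the barycentric lower bound $2-\sqrt{2+2\rho_q}\geq\frac12(1-\rho_q)$ above, which recovers the shape of \eqref{eq: ze.equation} at the cost of a worse (but still exponential, hence harmless) constant and the slightly strengthened requirement $\gamma>8\varepsilon$. The remaining work is routine: well-posedness and regularity of $\Phi^t_{v[\mu(t)]}$, the one-sided differentiability of the envelopes $\eta_q,\rho_q$ (minima over compact sets, as in the footnote of the proof of \Cref{thm: metastability}), and the bookkeeping of the threshold times $T_*(q)$ and $T_1(q)$.
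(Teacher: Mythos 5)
Your proof is correct and closes the statement, but it takes a genuinely different route than the paper's in the collapse and propagation steps. Your Step 1 (escape time) is essentially identical to the paper's: both track $\eta_q(t)$, use the partition-function bound $\mathscr{Z}_{\beta,\mu(t)}(x)\geq\tfrac1k e^{(1-8\varepsilon)\beta}$ to get the cross-cap error $ke^{-\beta(1-\alpha-8\varepsilon)}$, and conclude $T_{\mathrm{esc}}\geq\tfrac\varepsilon k e^{\beta(1-\alpha-8\varepsilon)}$.

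For the collapse, however, the paper does \emph{not} track the minimum pairwise inner product $\rho_q(t)$. It instead tracks the variance $\mathsf{V}_q(t):=\tfrac12\int_{\mathscr{S}_q(2\varepsilon)}\|\Phi^t_{v[\mu(t)]}(x')-x(t)\|^2\mu_0(\diff x')$ centered at the worst-positioned characteristic $x(t)\in\argmin_{x\in\mathscr{B}_q(t)}\langle x,w_q\rangle$, and it exploits the inequality $\dot\eta_q(t)\geq ke^{8\varepsilon\beta}\bigl(\eta_q(t)e^{-(1-\eta_q(t))\beta}\mathsf{V}_q(t)-e^{-(1-\alpha)\beta}\bigr)$: since $\eta_q$ is bounded by $1$, the product $\eta_q\mathsf{V}_q e^{-(1-\eta_q)\beta}$ must drop below $2e^{-\lambda\beta}$ before time $\tfrac{4\varepsilon}{k}e^{(\lambda-8\varepsilon)\beta}$ (Claim~\ref{claim: de sortie de cap}), which then implies $\mathsf{V}_q\leq e^{-\lambda\beta}$. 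The propagation step (Step 3 of the paper) is a continuity argument on $\dot{\mathsf{V}}_q$ at the level $\mathsf{V}_q=e^{-\lambda\beta}$ rather than an application of Lemma~\ref{lem:collapsetime}. Your approach instead transplants the finite-$n$ scalar ODE for $\rho_q$ directly, replacing the step "keep only the $j(t)$-th term" — which has no mean-field analogue since individual characteristics carry zero mass — by the barycentric bound $\int[(1-\langle z,x\rangle)+(1-\langle z,y\rangle)]\tilde\nu_q(\diff z)\geq 2-\|x+y\|\geq\tfrac12(1-\rho_q)$, and then reruns Steps 3--4 of the direct proof (comparison principle with Lemma~\ref{lem: eminem}, propagation via Lemma~\ref{lem:collapsetime}). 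This is a valid alternative; it buys you an ODE with the same shape as \eqref{eq: ze.equation} at the cost of a factor $\tfrac1{2k}$ and yields a $T_1$ bound of the form $4ke^{8\varepsilon\beta}+O(k\beta\lambda)$, versus the paper's $\tfrac{4\varepsilon}{k}e^{(\lambda-8\varepsilon)\beta}$; both are exponential and both sit below $T_2$ under $\gamma>8\varepsilon$ and $\gamma=\Omega(1)$. One small cosmetic point: your final integration against $\tfrac1k\nu_q$ gives $\tfrac2k e^{-\lambda\beta}$, which fails the stated bound when $k=1$; either restrict to $k\geq2$ or absorb the factor $2$ by slightly decreasing $\lambda$ (the paper's own $\mathsf{V}_q$ carries a $\tfrac12$ that quietly handles this).
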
   

Before proceeding with the proof we make a couple of comments.

\begin{remark} \Cref{thm: metastability MF} differs slightly from \Cref{thm: metastability} in that 1). the collapse time $T_1$ is of the same order of magnitude as the escape time $T_2$, and 2). only the variance of the particles within a cap is exponentially small. Both are due to the fact that we only study the distance of the particle farthest to the center of the spherical cap, as in Step 1 of the proof of \Cref{thm: metastability}. Since \Cref{thm: metastability MF} serves only to illustrate the generality of the metastability phenomenon, we circumvented a complete generalization thereof, which only requires additional technicalities.
\end{remark}

\begin{remark}[(Sub-)Gaussian case]
One can naturally inquire about generalizing the above result to measures $\nu_q$ which are not exactly supported in $\mathscr{S}_q(\varepsilon)$, but have \enquote{most} of their mass in $\mathscr{S}_q(\varepsilon)$. A case of interest is the Gaussian mixture law on $\sphere^{d-1}$ with density
\begin{equation*}
    \rho(x):=\frac{1}{k}\sum_{q=1}^{k} \frac{1}{\mathscr{Z}_{q}}e^{-\frac{\|x-w_q\|^{2}}{2\sigma_{q}^{2}}},
\end{equation*}
where $\mathscr{Z}_{q}$ is the normalizing constant. This example eluded our proof due to the difficulty of lower bounding the partition function $\mathscr{Z}_{\beta,\mu(t)}(x)$, partly due to possible interactions with particles outside the cap $\mathscr{S}_q(\varepsilon)$. We leave this question open.
%$\sigma_{d}$ given by 
%\begin{equation*}
%    x\mapsto e^{-\frac{\|x-w_i\|^{2}}{2\sigma_{i}^{2}}}.
%\end{equation*}
%We can now state our open question as a conjecture.
%\begin{question}
%        For an initial measure of the form described above, do we have the same conclusion that in \Cref{thm: metastability MF}.
%\end{question}
\end{remark}

\begin{proof}[Proof of \Cref{thm: metastability MF}]
We recall that since \eqref{eq: mean.field.pde} is well-posed, given the solution $\mu\in \mathscr{C}^0(\mathbb{R}_{\geq0};\mathscr{P}(\sphere^{d-1}))$, we know that any $x(t)\in\mathrm{supp}(\mu(t))$ satisfies    
\begin{equation*}
    \dot{x}(t)=v[\mu(t)](x(t)) \hspace{1cm} \text{ for } t\geq0.
\end{equation*}
We can define the Lipschitz-continuous and invertible map $\Phi^t_{v[\mu(t)]}: x(0)\mapsto x(t)$, and then
\begin{equation} \label{eq: flow.map}
    \mu(t)=\left(\Phi^t_{v[\mu(t)]}\right)_{\#}\mu_0.
\end{equation}
With this at hand, the proof is an adaptation of that of \Cref{thm: metastability}, mostly by replacing sums with integrals. We provide some details nonetheless.

\subsubsection*{Step 1. Lower-bounding the escape time}

For $q\in\{1,\ldots,k\}$ we define 
\begin{equation*}
    \mathscr{B}_{q}(t):=\Phi^t_{v[\mu(t)]}\left(\mathscr{S}_q(\varepsilon)\right).
\end{equation*}
Just as before,
\begin{equation*}
T_{\mathrm{esc}}:=\left\{t\geq0\colon\exists q\in\{1,\ldots,k\} \text{ such that } \mathscr{B}_{q}(t)\not\subset \bigcup_{q=1}^k\mathscr{S}_q(2\varepsilon)\right\}.
\end{equation*}
For $q\in\{1,\ldots,k\}$ we also define 
\begin{equation*}
    T_{\mathrm{esc}}(q):=\inf\left\{t\geq0\colon \mathscr{B}_{q}(t)\not\subset \mathscr{S}_q(2\varepsilon)\right\}.
\end{equation*}
Observe that 
$$T_{\mathrm{esc}} = \underset{q\in \{1,\ldots,k\}}{\min} T_{\mathrm{esc}}(q).$$
So let $q\in\{1,\ldots,k\}$ be arbitrary. We define 
\begin{equation*}
    \eta_{q}(t):=\underset{x\in \mathscr{B}_{q}(t)}{\min}\<x,w_q\>,
\end{equation*}
and take
\begin{equation*}
     x(t)\in \underset{x\in \mathscr{B}_{q}(t)}{\argmin}\<x,w_q\>.
\end{equation*}
Set $\mathscr{Z}_{\beta,\mu}(x) := \int e^{\beta\langle x,x'\rangle}\mu(\diff x')$.
We compute the derivative of $\eta_q$ as%\footnote{To this end, we use the fact that by continuity, for a fixed time $t$, 
%we first fix a time $t$, we differentiate $t\mapsto \< x(t),w_q\>$, and then, by continuity, we gather that $x(t)$ remains a point which minimizes $\langle x,w_q\rangle$ for short time.}
\begin{align*}
\dot{\eta}_{q}(t)&=\Big\<v[\mu(t)](x(t)),w_q\Big\>\\
&=\frac{1}{\mathscr{Z}_{\beta,\mu(t)}(x(t))}\int e^{\beta \<x',x(t)\>}\left\<\proj_{x(t)}(x'),w_q\right\>\mu(t, \diff x').
\end{align*}
Using $|\<\proj_{x(t)}(x'),w_q\>|\leq 1$ and the change of variable formula, we find 
\begin{align*}
\dot{\eta}_{q}(t)&\geq \frac{1}{\mathscr{Z}_{\beta,\mu(t)}(x(t))}\int_{\mathscr{S}_q(2\varepsilon)}e^{\beta\<\Phi^t_{v[\mu(t)]}(x'),x(t)\>} \left\<\proj_{x(t)}\left(\Phi^t_{v[\mu(t)]}(x')\right),w_q\right\>\mu_{0}(\diff x')\\
&\hspace{1cm}-\frac{1}{\mathscr{Z}_{\beta,\mu(t)}(x(t))}\sum_{\substack{r\in\{1,\ldots,k\}\setminus\{q\}}}  \int_{ \mathscr{S}_r(2\varepsilon)}e^{\beta\<\Phi^t_{v[\mu(t)]}(x'),x(t)\>}\mu_{0}(\diff x').
\end{align*}
For $t\in[0,T_\mathrm{esc}]$ and $x\in\mathscr{S}_q(2\varepsilon)$ we have 
\begin{equation*}
    \mathscr{Z}_{\beta,\mu(t)}(x)\geq\int_{\mathscr{B}_q(t)}e^{\beta\< x,x'\>}\mu(t,\diff x')\geq e^{(1-8\varepsilon)\beta}\int_{\mathscr{S}_q(2\varepsilon)}\mu_0(\diff x')=\frac{1}{k}e^{(1-8\varepsilon)\beta},
\end{equation*}
and also
\begin{equation*} \sum_{\substack{r\in\{1,\ldots,k\}\setminus\{q\}}}\int_{\mathscr{S}_{r}(2\varepsilon)}e^{\beta \<x',x\>}\mu(t, \diff x')\leq e^{\alpha\beta}.
\end{equation*}
Using these two inequalities, we get 
\begin{align*}
    \dot{\eta}_{q}(t)&\geq \frac{1}{\mathscr{Z}_{\beta,\mu(t)}(x(t))} \int_{\mathscr{S}_{q}(2\varepsilon)}e^{\beta\<\Phi^t_{v[\mu(t)]}(x'),x(t)\>} \left\<\proj_{x(t)}\left(\Phi^t_{v[\mu(t)]}(x')\right),w_q\right\>\mu_{0}(\diff x')\\
    &\hspace{1cm}-ke^{-(1-\alpha-8\varepsilon)\beta}.
\end{align*}
Now as in Step 1 of the proof of \Cref{thm: metastability}, since 
$x(t)\in \underset{x\in \mathscr{B}_{q}(t)}{\argmin}\<x,w_q\>$, 
we have 
\begin{align*}
   \left\<\proj_{x(t)}\left(\Phi^t_{v[\mu(t)]}(x')\right),w_q\right\>\geq 0
%&=\underbrace{\left\<\Phi^t_{v[\mu(t)]}(x')-x(t),w_q\right\>}_{\geq 0}\\
%&+\underbrace{\frac{\left\|\Phi^t_{v[\mu(t)]}(x')-x(t) \right\|^{2}}{2}\left\<x(t),w_q\right\>}_{\geq 0}.
\end{align*}
for all $x'\in \mathscr{S}_{q}(2\varepsilon)$. Thus
\begin{equation*}
    \dot{\eta}_{q}(t)\geq -ke^{-(1-\alpha-8\varepsilon)\beta}.
\end{equation*}
The same argument as in Step 1 of the proof of \Cref{thm: metastability} then yields
\begin{equation*}
    T_{\text{esc}}\geq \frac{\varepsilon}{k} e^{(1-\alpha-8\varepsilon)\beta}.
\end{equation*}
%This proves the first statement. Now, we prove that the variance is decreasing inside a spherical cap.
\subsubsection*{Step 2. The variance is decreasing}

Let $t\in[0, T_{\text{esc}}]$. From the previous step,
\begin{align*}
    \dot\eta_{q}(t)&\geq \frac{1}{\mathscr{Z}_{\beta,\mu(t)}(x(t))} \int_{\mathscr{S}_{q}(2\varepsilon)}e^{\beta \<\Phi^t_{v[\mu(t)]}(x'),x(t)\>}\left\<\proj_{x(t)}(\Phi^t_{v[\mu(t)]}(x')),w_q\right\> \mu_{0}(\diff x')\\
    &\hspace{1cm}-ke^{-(1-\alpha-8\varepsilon)\beta}\\
    &=:(a)-ke^{-(1-\alpha-8\varepsilon)\beta}.
\end{align*}
Elementary algebraic manipulations yield
\begin{align*}
    (a)&\geq \frac{1}{\mathscr{Z}_{\beta,\mu(t)}(x(t))} \int_{\mathscr{S}_{q}(2\varepsilon)}e^{\beta\<\Phi^t_{v[\mu(t)]}(x'),x(t)\>}\<x(t),w_q \>\frac{\left\|\Phi^t_{v[\mu(t)]}(x')-x(t)\right\|^{2}}{2}\mu_{0}(\diff x')\\
    &\geq  \frac{\eta_{q}(t)}{\mathscr{Z}_{\beta,\mu(t)}(x(t))}\int_{\mathscr{S}_{q}(2\varepsilon)}e^{\beta\<\Phi^t_{v[\mu(t)]}(x'),x(t)\>}\frac{\left\|\Phi^t_{v[\mu(t)]}(x')-x(t)\right\|^{2}}{2}\mu_{0}(\diff x').
\end{align*}
Then
\begin{align*}
\dot{\eta_{q}}(t)&\geq \frac{\eta_q(t)}{2\mathscr{Z}_{\beta,\mu(t)}(x(t))}\int_{\mathscr{S}_{q}(2\varepsilon)}e^{\beta\<\Phi^t_{v[\mu(t)]}(x'),x(t)\>}\left\|\Phi^t_{v[\mu(t)]}(x')-x(t)\right\|^{2}\mu_{0}(\diff x')\\
    &\hspace{1cm}-ke^{-(1-\alpha-8\varepsilon)\beta}.
\end{align*}
Therefore,
\begin{align*}
    \dot{\eta_{q}}(t)&\geq \frac{k}{2}\eta_{q}(t)\int_{\mathscr{S}_{q}(2\varepsilon)}e^{\beta(\<\Phi^t_{v[\mu(t)]}(x'),x(t)\>-1+8\varepsilon)}\left\|\Phi^t_{v[\mu(t)]}(x')-x(t)\right\|^{2}\mu_{0}(\diff x')\\
    &\hspace{1cm}-ke^{-\left(1-\alpha-8\varepsilon\right)\beta}.
\end{align*}
Since for all $x' \in \mathscr{S}_{q}(2\varepsilon)$ we have
\begin{equation*}
    e^{\beta(\<\Phi^t_{v[\mu(t)]}(x'),x(t)\>-1+8\varepsilon)}\geq e^{\beta\left( \eta_{q}(t)-1+8\varepsilon\right)},
\end{equation*}
we deduce that 
\begin{align*}
    \dot{\eta_{q}}(t)&\geq \frac{k}{2} \eta_{q}(t)e^{-\left(1-\eta_{q}(t)-8\varepsilon\right)\beta}\int_{\mathscr{S}_{q}(2\varepsilon)}\left\|\Phi^t_{v[\mu(t)]}(x')-x(t)\right\|^{2}\mu_{0}(\diff x')\\
    &\hspace{1cm}-ke^{-\left(1-\alpha-8\varepsilon\right)\beta}.
\end{align*}
For $q\in \{1,\ldots,k\}$, we define 
\begin{equation*}
\mathsf{V}_{q}(t):=\frac12\int_{\mathscr{S}_{q}(2\varepsilon)}\left\|\Phi^t_{v[\mu(t)]}(x')-x(t)\right\|^{2}\mu_{0}(\diff x').
\end{equation*}
Then 
\begin{equation} \label{eq: my.fave.bd}
     \dot{\eta_{q}}(t)\geq k e^{8\varepsilon\beta}\left(\eta_{q}(t)e^{-(1-\eta_{q}(t))\beta}\mathsf{V}_{q}(t)-e^{-(1-\alpha)\beta}\right).
\end{equation}
For $q\in\{1,\ldots,k\}$ and $c>0$, we define
\begin{equation*}
    T_{*}(q,c):=\inf\left\{t\in[0,T_{\mathrm{esc}}]\colon \eta_{q}(t)\mathsf{V}_{q}(t)e^{-(1-\eta_{q}(t))\beta}\leq 2e^{-c\beta}\right\}.
\end{equation*}

\begin{claim} \label{claim: de sortie de cap}
We have  
$$
\left\{t\in[0,T_{\mathrm{esc}}]\colon \eta_{q}(t) \mathsf{V}_{q}(t)e^{-(1-\eta_{q}(t))\beta}\leq 2e^{-c\beta}\right\}\neq\varnothing
$$
and
\begin{equation*}
     \inf\left\{t\in[0,T_{\mathrm{esc}}]\colon \eta_{q}(t) \mathsf{V}_{q}(t)e^{-(1-\eta_{q}(t))\beta}\leq 2e^{-c\beta}\right\}<\frac{4\varepsilon}{k} e^{(c-8\varepsilon)\beta}.
\end{equation*}
\end{claim}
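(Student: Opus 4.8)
The plan is to read \Cref{claim: de sortie de cap} as a one-dimensional comparison-principle statement for $t\mapsto\eta_q(t)$, driven entirely by the differential inequality \eqref{eq: my.fave.bd}, and to split it into (i) nonemptiness of the sublevel set and (ii) an upper bound on its infimum. Throughout I would record the sign bookkeeping $c<1-\alpha$ — valid for the relevant values of $c$ since $c<\gamma$ and $\gamma<1-\alpha$ by \eqref{eq: gamma.mf} (as $8\varepsilon$ and $\beta^{-1}\log(2k^2/\varepsilon)$ are positive) — and the identity $\eta_q(0)=\min_{x\in\mathscr{S}_q(\varepsilon)}\langle x,w_q\rangle=1-\varepsilon$.

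First I would prove the set is nonempty by contradiction. Suppose $\eta_q(t)\mathsf{V}_q(t)e^{-(1-\eta_q(t))\beta}>2e^{-c\beta}$ for every $t\in[0,T_{\mathrm{esc}}]$. Since $c<1-\alpha$ gives $2e^{-c\beta}-e^{-(1-\alpha)\beta}\geq e^{-c\beta}$, inequality \eqref{eq: my.fave.bd} yields $\dot\eta_q(t)\geq k\,e^{(8\varepsilon-c)\beta}$ on all of $[0,T_{\mathrm{esc}}]$. Integrating from $0$ to $T_{\mathrm{esc}}$ and inserting the Step~1 lower bound $T_{\mathrm{esc}}\geq\frac{\varepsilon}{k}e^{(1-\alpha-8\varepsilon)\beta}$ gives
\begin{equation*}
    \eta_q(T_{\mathrm{esc}})\geq 1-\varepsilon+k\,e^{(8\varepsilon-c)\beta}\,T_{\mathrm{esc}}\geq 1-\varepsilon+\varepsilon\,e^{(1-\alpha-c)\beta}>1,
\end{equation*}
where the last strict inequality uses $1-\alpha-c>0$; this contradicts $\eta_q\leq 1$. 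Hence the set is nonempty, so $T_*(q,c)$ is a genuine element of $[0,T_{\mathrm{esc}}]$.

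Next I would bound $T_*(q,c)$ from above. By definition of the infimum, for every $t<T_*(q,c)$ the strict inequality $\eta_q(t)\mathsf{V}_q(t)e^{-(1-\eta_q(t))\beta}>2e^{-c\beta}$ still holds, hence the same estimate $\dot\eta_q(t)\geq k\,e^{(8\varepsilon-c)\beta}$ holds on $[0,T_*(q,c))$. Integrating from $0$ and using $\eta_q\leq 1$ and $\eta_q(0)=1-\varepsilon$ forces $k\,e^{(8\varepsilon-c)\beta}\,t\leq\varepsilon$ for all such $t$; letting $t\uparrow T_*(q,c)$ yields $T_*(q,c)\leq\frac{\varepsilon}{k}e^{(c-8\varepsilon)\beta}<\frac{4\varepsilon}{k}e^{(c-8\varepsilon)\beta}$, which is in fact sharper than claimed — the factor $4$ is harmless slack (one may instead use the cruder bound $1-\eta_q(0)\leq 2\varepsilon$, valid throughout the metastable window, or keep the weaker constant $2e^{-c\beta}$ rather than $e^{-c\beta}$ in the previous step).

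I do not expect a genuine obstacle here: the two points needing care are that \eqref{eq: my.fave.bd} may legitimately be invoked on the intervals in question — immediate, since the threshold $2e^{-c\beta}$ defining $T_*(q,c)$ is precisely the quantity controlling the sign of the bracket in \eqref{eq: my.fave.bd} — and the sign inequality $c<1-\alpha$, which is built into \Cref{def: init_measure_MF}. A mild technical nuisance is that $\eta_q$ is only a pointwise infimum of smooth maps and hence merely Lipschitz, but $\dot\eta_q$ has already been computed in the Danskin sense in Step~1, so this is not an issue.
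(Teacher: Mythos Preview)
Your argument is correct and in spirit coincides with the paper's: both proofs read off from \eqref{eq: my.fave.bd} that, as long as the threshold $2e^{-c\beta}$ has not been reached, $\dot\eta_q(t)$ is bounded below by a positive constant of order $k\,e^{(8\varepsilon-c)\beta}$, and then integrate. The difference lies in how the conclusion is extracted. The paper records the auxiliary two-sided estimate $\tfrac{2}{\eta_q(t)}e^{\beta(1-\eta_q(t)-c)}\leq\mathsf{V}_q(t)\leq\tfrac{4}{k}(1-\eta_q(t))$ and uses the upper bound to show that the monitored quantity $\eta_q\mathsf{V}_qe^{-\beta(1-\eta_q)}$ is dominated by $4\eta_q(1-\eta_q)$, which decreases linearly once $\eta_q$ grows linearly---this is where the constant $4$ in the statement originates. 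You bypass the upper bound on $\mathsf{V}_q$ entirely and close the argument with the bare constraint $\eta_q\leq 1$, which is shorter, handles the error term $-e^{-(1-\alpha)\beta}$ in \eqref{eq: my.fave.bd} explicitly (the paper's write-up silently drops it), and in fact yields the sharper bound $T_*(q,c)\leq\tfrac{\varepsilon}{k}e^{(c-8\varepsilon)\beta}$. Your separate treatment of nonemptiness via contradiction with the Step~1 lower bound on $T_{\mathrm{esc}}$ is also cleaner than the paper's implicit handling.
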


We provide the proof after the present one. 
Setting $c:=\lambda$ for an arbitrary but fixed $\lambda\in(8\varepsilon,\gamma)$---this is without loss of generality, since if the bound in the statement holds for $\lambda>8\varepsilon$, it also holds for $\lambda\leq 8\varepsilon$---, using the fact that $\eta_q(T_{*}(q,c))\geq 1-8\varepsilon>\frac12$ we find
\begin{equation} \label{eq: v.small}
    \mathsf{V}_{q}(T_{*}(q,c)) \leq e^{-\lambda\beta}.
\end{equation}

\subsubsection*{Step 3. Propagation of smallness}

We can conclude as in Step 4 in the proof of \Cref{thm: metastability}. Consider 
\begin{equation*}
    T:=\inf \left\{t\geq T_{*}(q,c)\colon \mathsf{V}_q(t)\geq e^{-\lambda\beta}\right\},
\end{equation*}
and suppose that $T<T_{\mathrm{esc}}.$ By continuity, we have  $\mathsf{V}_{q}(T)=e^{-\lambda \beta}.$
We can compute the derivative of $\mathsf{V}_{q}$ at a given time $t$ as
\begin{align*}
    \dot{\mathsf{V}}_{q}(t)&=-2\int_{\mathscr{S}_{q}(2\varepsilon)}\frac{\diff}{\diff t}\<x(t),\Phi^{t}_{v[\mu(t)]}(x') \>\mu_{0}(\diff x')\\
    &=-2\int_{\mathscr{S}_{q}(2\varepsilon)}\left(\<\dot{x}(t),\Phi^{t}_{v[\mu(t)]}(x') \>+\left\<x(t),\frac{\diff}{\diff t}{\Phi}^{t}_{v[\mu(t)]}(x')\right\>\right)\mu_{0}(\diff x').
\end{align*}
We begin with the left term in the above identity:
\begin{align*}
    \int_{\mathscr{S}_{q}(2\varepsilon)}\<\dot{x}(t),\Phi^{t}_{v[\mu(t)]}(x') \>\mu_0(\diff x')&=\int_{\mathscr{S}_{q}(2\varepsilon)}\<v[\mu(t)](x(t)),\Phi^{t}_{v[\mu(t)]}(x') \>\mu_0(\diff x'). 
\end{align*}
Further computations yield
\begin{align*}
    &\int_{\mathscr{S}_{q}(2\varepsilon)}\<v[\mu(t)](x(t)),\Phi^{t}_{v[\mu(t)]}(x') \>\mu_0(\diff x') \\
    &=\int_{\mathscr{S}_{q}(2\varepsilon)}\left\<\int_{\mathscr{B}_{q}(t)}\frac{e^{\beta\<x(t),y\>}}{\mathscr{Z}_{\beta,\mu(t)}(x(t))}\proj_{x(t)}(y)\mu(t, \diff y),\Phi^{t}_{v[\mu(t)]}(x') \right\>\mu_0(\diff x') \\
    &\hspace{1cm}+\sum_{i\neq q}\int_{\mathscr{S}_{q}(2\varepsilon)}\left\<\int_{\mathscr{B}_{i}(t)}\frac{e^{\beta\<x(t),y\>}}{\mathscr{Z}_{\beta,\mu(t)}(x(t))}\proj_{x(t)}(y)\mu(t, \diff y),\Phi^{t}_{v[\mu(t)]}(x') \right\>\mu_0(\diff x'). 
\end{align*}
The second term in the above identity can be bounded as
\begin{align*}
    &\sum_{i\neq q}\int_{\mathscr{S}_{q}(2\varepsilon)}\left<\int_{\mathscr{B}_{i}(2\varepsilon)}\frac{e^{\beta\<x(t),y\>}}{\mathscr{Z}_{\beta,\mu(t)}(x(t))}\proj_{x(t)}(y)\mu(t, \diff y),\Phi^{t}_{v[\mu(t)]}(x')\right\>\mu_0(\diff x')\\
    &\leq k e^{-(1-\alpha-8\varepsilon)\beta}.
\end{align*}
We use the following bound for the other term:
\begin{align*}
    &\left\<\proj_{x(t)}(\Phi^{t}_{v[\mu(t)]}(y)),\Phi^{t}_{v[\mu(t)]}(x')\right\>\\
    &\hspace{0.5cm}=\left\<\Phi^{t}_{v[\mu(t)]}(y),\Phi^{t}_{v[\mu(t)]}(x')\right\>-\left\<x(t),\Phi^{t}_{v[\mu(t)]}(y)\right\>\left\<x(t),\Phi^{t}_{v[\mu(t)]}(x')\right\>\\
    &\hspace{0.5cm}\geq\left\<x(t),\Phi^{t}_{v[\mu(t)]}(x')\right\>\left(1-\left\<x(t),\Phi^{t}_{v[\mu(t)]}(y)\right\>\right)\\
    &\hspace{0.5cm}\geq \frac{1}{2}\left\<x(t),\Phi^{t}_{v[\mu(t)]}(x')\right\> \left\| x(t)-\Phi^{t}_{v[\mu(t)]}(y)\right\|^{2}.
\end{align*}
We now integrate this inequality to get
\begin{align*}
     &\int_{\mathscr{S}_{q}(2\varepsilon)}
     \left\<\dot{x}(t),\Phi^{t}_{v[\mu(t)]}(x')\right\>\mu_0(\diff x')\\
     &\vspace{1mm}\geq \int_{\mathscr{S}_{q}(2\varepsilon)}\left\<\int_{\mathscr{S}_{q}(2\varepsilon)}\frac{e^{\beta\<x(t),\Phi^{t}_{v[\mu(t)]}(y)\>}}{\mathscr{Z}_{\beta,\mu(t)}(x(t))}\proj_{x(t)}\left(\Phi^{t}_{v[\mu(t)]}(y)\right)\mu_0(\diff y),\Phi^{t}_{v[\mu(t)]}(x')\right\>\mu_0(\diff x') \\
    &\vspace{1mm}\geq \frac{1}{k}\int_{\mathscr{S}_{q}(2\varepsilon)}\int_{\mathscr{S}_{q}(2\varepsilon)}\left\<\proj_{x(t)}(\Phi^{t}_{v[\mu(t)]}(y),\Phi^{t}_{v[\mu(t)]}(x') \right\>\mu_0(\diff x')\mu_0(\diff y)\\
    &\vspace{1mm}\geq  \frac{1}{k}\int_{\mathscr{S}_{q}(2\varepsilon)}\int_{\mathscr{S}_{q}(2\varepsilon)} \frac{1}{2}\left\<x(t),\Phi^{t}_{v[\mu(t)]}(x')\right\>\left\| x(t)-\Phi^{t}_{v[\mu(t)]}(y)\right\|^{2}\mu_0(\diff x')\mu_0(\diff y)\\
    &\vspace{1mm}\geq \frac{(1-2\varepsilon)}{2k^{2}}\mathsf{V}_{q}(t).
\end{align*}
We can argue similarly for the second term resulting in:
\begin{equation*}
    \int_{\mathscr{S}_{q}(2\varepsilon)}\left\<x(t),\frac{\diff}{\diff t}\Phi^{t}_{v[\mu(t)]}(x')\right\> \mu_{0}(\diff x')\geq -e^{-(1-\alpha-8\varepsilon)\beta}.
\end{equation*}
All in all, for $t\geq T$,
\begin{equation*}
    \dot{\mathsf{V}}_{q}(t)\leq -\frac{(1-2\varepsilon)}{k^{2}}\mathsf{V}_{q}(t)+2e^{-(1-\alpha+8\varepsilon)\beta}\leq -\frac{(1-2\varepsilon)}{k^{2}}e^{-\lambda\beta}+2e^{-(1-\alpha+8\varepsilon)\beta}.
\end{equation*}
Because of the condition on $\lambda,$ and the definition of $T$, we can conclude that 
\begin{equation*}
    \dot{\mathsf{V}}_{q}(T)<0.
\end{equation*}
By continuity, this implies that there exists $t<T$ such that $\mathsf{V}_{q}(T)\geq e^{-\lambda \beta}$. Therefore, necessarily $T\geq T_{\mathrm{esc}}.$
\end{proof}

\begin{proof}[Proof of Claim \ref{claim: de sortie de cap}]
For all $t\in[0, T_{*}(q,c)]$ we have
\begin{equation} \label{ineq: variance}
    \frac{2e^{\beta(1-\eta_{q}(t)-c)}}{\eta_{q}(t)}\leq \mathsf{V}_{q}(t)\leq \frac{4(1-\eta_{q}(t))}{k}.
\end{equation} 
(The second inequality is actually always true.) Also,
\begin{equation} \label{ineq: angular deriv}
 \dot{\eta}_{q}(t)\geq ke^{8\varepsilon\beta}\eta_{q}(t) \mathsf{V}_{q}(t)e^{-\beta(1-\eta_{q}(t))}.
\end{equation}
By plugging \eqref{ineq: variance} into \eqref{ineq: angular deriv}, we find 
\begin{equation} \label{ineq: derivative}
     \dot{\eta}_{q}(t)\geq 2ke^{-(c-8\varepsilon)\beta}.
\end{equation}
So, using both \eqref{ineq: variance} and \eqref{ineq: derivative}, we deduce
\begin{align*}
    \eta_{q}(t)\mathsf{V}_{q}(t)e^{-\beta(1-\eta_{q}(t))}&\leq 4\eta_{q}(t)(1-\eta_{q}(t))\\
    &\leq 4\eta_{q}(0)\left(1-\eta_{q}(0)-2tke^{-(c-8\varepsilon)\beta}\right).
\end{align*}
We deduce
\begin{equation*}
    T_{*}(q,c)<\frac{4\varepsilon}{k}e^{(c-8\varepsilon)\beta}.\qedhere
\end{equation*}
\end{proof}

\section{Beyond metastability} \label{sec: staircase}

\Cref{thm: metastability} entails that the dynamics take an exponential time to escape the metastable state. This raises the question of describing the dynamics beyond this escape time. 
It is for instance tempting to iterate the arguments using spherical caps presented in the proof of \Cref{thm: metastability}. We did not succeed in this endeavor as it appears challenging to propagate the $(\beta, \varepsilon)$-separateness condition beyond the first cone collapse. We leave this question open as a subject for future investigation.

In the same vein, here we are interested in understanding the dynamics in the low-temperature limit: $d$ and $n$ are fixed, and $\beta\to+\infty$. As alluded to in {\bf\S \ref{sec: beyond.the.escape.time}}, existing results of this kind in related literature mostly rely on explicit time-rescalings strongly linked to the particular problem at hand that accelerate the dynamics.
Finding an explicit rescaling is not straightforward in our setting due to particle interactions.
%since it ought to be adaptive.

As a starting point for our study we posit the following question. 

\begin{problem}[Staircase profile] \label{conj: saddle-to-saddle}
Fix $d, n\geq2$. Let $(x_1(0),\ldots,x_n(0))\in(\sphere^{d-1})^{n}$ and consider the unique solution $(x_{1}(\cdot),\ldots,x_{n}(\cdot))\in \mathscr{C}^{0}(\reals_{\geq 0},(\sphere^{d-1})^{n})$ to the corresponding Cauchy problem for \eqref{SA} or \eqref{USA}. 
Do there exist a number of jumps $k\in\{1,\ldots,n\}$, jumping times $0=T_0<T_1<\ldots<T_{k}<T_{k+1}=+\infty$, and a sequence 
$(\tau_{\beta})_{\beta\geq0}\subset\mathscr{C}^{0}(\reals_{\geq0};\reals_{\geq0})$, 
such that the function $\varphi_{\beta}:\reals_{\geq0}\rightarrow\reals_{\geq0}$ defined by
\begin{equation*}
\varphi_{\beta}(t):=\mathsf{E}_{\beta}\left(x_1(\tau_{\beta}(t)),\ldots,x_n(\tau_{\beta}(t))\right),
\end{equation*}
converges uniformly on $(T_{i},T_{i+1})$ for $i \in \{0,\ldots,k-1\}$ towards some piecewise constant $\varphi_\infty\in L^\infty(\reals_{\geq0};[0,1])$ as $\beta\rightarrow+\infty$? 
Otherwise said, $\varphi_\infty$ is defined as
\begin{equation*}
    \varphi_\infty(t)=\varphi_\infty(T_{i}) \hspace{1cm} \text{ for } t\in [T_i,T_{i+1}),
\end{equation*}
for $i \in \{0,\ldots,k+1\}$.
\end{problem}

We believe this to be a challenging problem due to the singular nature of the limit $\beta\to+\infty$. At a fixed time instance, the Laplace method ensures that, as $\beta\to+\infty$, the {\tt softmax} converges to the {\tt argmax}. But issues arise along the flow due to the fact that particles cannot collide in finite time---when two particles are too near, most of the interaction is not between the two, but rather with the others.

\subsection{Staircase on the circle}

We present a stylized example in which the staircase profile of the energy can be proven to occur. We focus on dynamics on the circle with the following class of initial configurations. 

\begin{definition}[Well-prepared configuration] \label{d: init_s1}
Let $\beta>1$. We call a configuration $(\theta_1,\ldots,\theta_n)\in\mathbb{T}^{n}$ \emph{well-prepared} if
\begin{equation*}
    0\leq\theta_1<\theta_2<\ldots<\theta_n\leq \pi
\end{equation*}
and there exists a numerical constant $c>1$ such that for all $i \in \{2,\ldots,n-1\}$ and $k>i$,
\begin{equation*}
\cos(\theta_i-\theta_1)>\cos(\theta_k-\theta_i)+\frac{c\log \beta}{\beta }.
\end{equation*}
\end{definition}

\begin{remark} The configuration $(\theta_1,\ldots,\theta_n)$ where $\theta_j=c\cdot 2^{j}$ is well-prepared for sufficiently small $c>0$, and $\beta$ large enough.
\end{remark}

\begin{figure}[h]
    \centering
    \includegraphics[width=10cm]{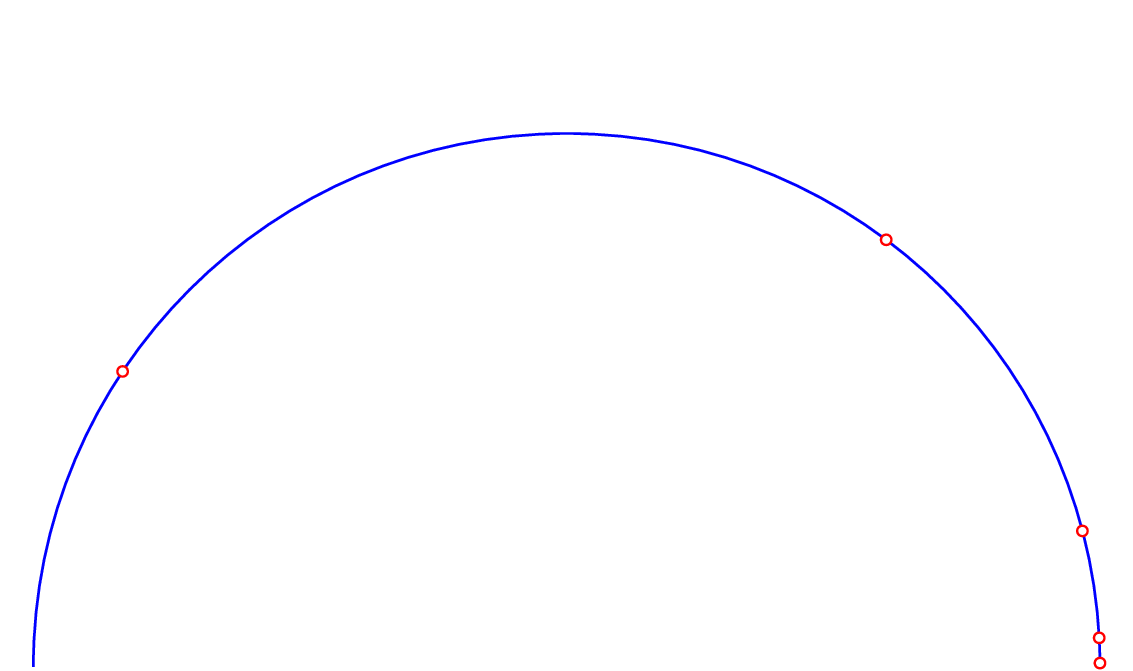}
    \caption{A well-prepared configuration.}
    \label{fig: init}
\end{figure}

We can give an affirmative answer to Problem \ref{conj: saddle-to-saddle} but for a slightly modified version of \eqref{USA} in which we enforce collisions.

\begin{definition}[Modified (USA)]\label{def: modified USA}
Suppose $\beta>0$, $n\geq2$, and $(\theta_i(0))_{i=1}^n\in\mathbb{T}^n$.
Given the unique solution $(\theta_i(\cdot))_{i=1}^n\in\mathscr{C}^0(\reals_{\geq0};\mathbb{T}^n)$ to  the corresponding Cauchy problem for \eqref{eq: usa.angles}, define
\begin{equation*}
    T_*=\inf\left\{t\geq 0\colon \exists i\neq j\in\{1,\ldots,n\}^{2} \text{ such that } |\theta_i(t)-\theta_j(t)|\leq \frac{1}{\sqrt{\beta \log \beta}}\right\},
\end{equation*}
and suppose that 
\begin{equation*}
\mathrm{card}\left\{i\in \{1,\ldots,n\}\colon \exists j \text{ such that } |\theta_i(T_*)-\theta_{j}(T_*) |\leq \frac{1}{\sqrt{\beta \log \beta}}\right\}=2.
\end{equation*} 
Without loss of generality let $(1,2)$ be these two indices.
\begin{enumerate}
    \item If $T_*<+\infty$, define 
\begin{equation*}
 \left(\overline{\theta}_i(t)\right)_{i=1}^n := \begin{dcases} 
 (\theta_i(t))_{i=1}^n &\text{ for } t<T_*\\
 (\theta_i^*(t))_{i=1}^n &\text{ for } t\geq T_*,
 \end{dcases}   
\end{equation*}
where $\theta_1^*(t):=\theta_2^*(t)$ for $t\geq T_*$, and $(\theta_i^*(\cdot))_{i=2}^n\in\mathscr{C}^0([T_*,+\infty);\mathbb{T}^n)$ denotes the unique solution to \eqref{eq: usa.angles} with initial data $(\theta_i(T_*))_{i=2}^n$;
\item If $T=+\infty$, set $(\overline{\theta}_i(t))_{i=1}^n:=(\theta_i(t))_{i=1}^n$ for all $t\geq0$.
\end{enumerate}
We call $(\overline{\theta}_i(\cdot))_{i=1}^n$ the \emph{modified USA dynamics}.
\end{definition}

These dynamics have some enhanced aspects compared to \eqref{eq: usa.angles}; for instance, it is not obvious to prove that two particles remain in a neighborhood of size $\beta^{-1/2}$ of each other over time. The only statement we can prove in this direction is \Cref{lem: collaps_time_app}, which requires that these two particles stay isolated enough from all the others. 
Furthermore, our numerical simulation in \Cref{fig: staircase} is actually done by merging these nearby particles since we otherwise encounter numerical overflow.
The following holds.

\begin{theorem} \label{thm: staircase}
Let $n\geq2$. For $\beta>0$, let $(\theta_i(0))_{i=1}^n\in\mathbb{T}^n$ be a well-prepared configuration in the sense of \Cref{d: init_s1}, let $\Uptheta(\cdot)=(\theta_{i}(\cdot))_{i=1}^n$ be the dynamics defined in \Cref{def: modified USA}, and consider $\tau_{\beta}\in \mathscr{C}^{0}(\reals_{\geq0};\reals_{\geq0})$ defined as a solution to
\begin{equation} \label{compt: reparam}
    \begin{cases}
    \dot{\tau}_{\beta}(t)=\log \beta\underset{\substack{(i,j)\in\{1,\ldots,n\}^2\\ |\theta_i(t)- \theta_j(t)|>\frac{1}{\sqrt{\beta \log \beta}}}}{\max} e^{\beta\left(1-\cos\left(\theta_i(t)-\theta_j(t)\right)\right)} &\text{ for } t\geq0,\\
    \tau_{\beta}(0)=0.
\end{cases}
\end{equation}
Then there exist a sequence of times $0=T_0<T_1<T_2<\ldots<T_{k}<T_{k+1}=+\infty$ with $k\leq n$, and a piecewise constant $\varphi_\infty\in L^{\infty}(\reals_{\geq 0};[0, 1])$ such that
\begin{equation*}
\lim_{\beta\to+\infty}\max_{i\in\{1,\ldots,k\}}\sup_{t\in(T_i,T_{i+1})}\left|\mathsf{E}_{\beta}(\Uptheta(\tau_{\beta}(t)))-\varphi_\infty(t)\right|=0.
\end{equation*}
\end{theorem}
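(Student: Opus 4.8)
# Proof Proposal for Theorem (Staircase on the circle)

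The plan is to run an induction on the \textbf{epochs} of the dynamics, where each epoch is a maximal time interval between two successive "collision" events of the modified USA flow, and to show that along the rescaled time $\tau_\beta$ each epoch contributes exactly one plateau of the energy, separated by a jump of vanishing width. The backbone of the argument is \Cref{thm: metastability}, applied at the start of each epoch, together with the time-rescaling $\tau_\beta$ which exactly cancels the exponentially slow factor in the escape time.

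\textbf{Step 1 (First epoch: reduction to the metastability theorem).} Starting from a well-prepared configuration, I would first check that it is $(\beta,\varepsilon)$-separated for a suitable $\varepsilon = \varepsilon(\beta)$ of order $\beta^{-1}$ (the well-prepared gap condition $\cos(\theta_i-\theta_1) > \cos(\theta_k-\theta_i) + c\log\beta/\beta$ forces the $n$ points into $k$ caps whose mutual angular separation dominates $\varepsilon$; here one uses the "different heights" remark after \Cref{thm: metastability}, hence the reference to step \ref{adapter le thm: explication}). By \Cref{thm: metastability} the particles stay in safety caps $\mathscr{S}_q(2\varepsilon)$ up to a time $T_{\mathrm{esc}} \gtrsim \frac{\varepsilon}{n}e^{(1-\alpha)\beta}$, and after a short time $T_1$ all particles inside a given cap are within $2e^{-\lambda\beta/2}$ of each other. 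On this window, $\mathsf{E}_\beta(\Uptheta(t))$ is within $O(e^{-c'\beta})$ of the energy $\mathsf{E}_\beta$ of the $k$-cluster configuration obtained by collapsing each cap to a point — this is a direct Lipschitz estimate on $\mathsf{E}_\beta$ (bounded gradient) plus the diameter bound \eqref{eq: stick}. That constant energy value is the first plateau height $\varphi_\infty(T_0)$.

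\textbf{Step 2 (The rescaling realizes the plateau and the jump).} The key computation is that under $\tau_\beta$ defined by \eqref{compt: reparam}, the physical time elapsed during the metastable window maps to an $O(1)$ interval in rescaled time: $\dot\tau_\beta$ is of order $\log\beta \cdot e^{(1-\alpha_{\min})\beta}$ where $\alpha_{\min}$ is the largest cross-cap inner product, so $\tau_\beta$ advances by $O(1)$ over a physical time $\sim e^{(1-\alpha)\beta}$. I would set $T_1$ (the first jump time) to be the rescaled time at which the two closest clusters enter the critical window $|\theta_i-\theta_j| \le (\beta\log\beta)^{-1/2}$; by \Cref{lem: eminem} the collapse of those two clusters, once they are $O(1)$ apart, takes physical time $O(e^{(1-\alpha)\beta})$ but by a \emph{faster} exponential than the plateau (using the acceleration heuristic of \S\ref{sec: acceleration}, or more simply a direct comparison estimate), so in rescaled time the jump occupies an interval of width $o_\beta(1)$, shrinking to a point as $\beta\to\infty$. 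At the collision time $T_*$ we invoke the modification in \Cref{def: modified USA}: the two clusters merge into one weighted particle, the energy jumps by a fixed $O(1)$ amount (the difference of the two-point interaction energies at the two separations), and we are left with a configuration of $k-1$ weighted clusters which is again well-prepared — here the well-prepared hypothesis is designed exactly so that merging the two innermost points preserves the gap condition for the remaining points.

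\textbf{Step 3 (Induction and uniform convergence).} Iterating Steps 1–2 at most $n-1$ times produces jump times $0=T_0<T_1<\cdots<T_k<T_{k+1}=+\infty$ and plateau values; after the last merge all particles coincide, $\mathsf{E}_\beta$ equals its global maximum $1$, so $\varphi_\infty \equiv 1$ on $(T_k,+\infty)$. Uniform convergence $\sup_{t\in(T_i,T_{i+1})}|\mathsf{E}_\beta(\Uptheta(\tau_\beta(t))) - \varphi_\infty(t)| \to 0$ follows because on each open subinterval the energy is pinned within $O(e^{-c'\beta})$ of the plateau value by the metastability estimate, uniformly in $t$, and the transition regions have been excluded by choosing the $T_i$ to be the limits of the (shrinking) jump windows. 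One must track that the constants $c'$, the $\lambda$'s and the $\varepsilon$'s chosen at successive epochs remain of the right order — each merge changes $n$, $k$ and $\alpha$, but only finitely many times, so a single $\beta$-threshold works for all epochs.

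\textbf{Main obstacle.} The hardest point is \textbf{controlling the transition (jump) phase}: showing that once two clusters are within $O(1)$ of each other they collapse into the critical window $(\beta\log\beta)^{-1/2}$ in a physical time that is exponentially \emph{smaller} (by a definite margin in the exponent) than the preceding plateau duration, so that the rescaled jump width genuinely vanishes — and simultaneously that during this collapse \emph{no other} pair of clusters moves appreciably, so the energy changes only through the single collapsing pair. This requires a careful two-sided ODE comparison for the relevant inner products, quantifying the separation of time-scales, plus a propagation-of-separateness argument across the collision — precisely the step the authors flag as the reason they restrict to well-prepared configurations and to the modified dynamics. A secondary technical nuisance is that $\tau_\beta$ is itself defined through the trajectory (the $\max$ in \eqref{compt: reparam} depends on the current configuration), so one must verify $\tau_\beta$ is well-defined, strictly increasing, and that its asymptotics are governed epoch-by-epoch by the dominant (closest non-critical) pair.
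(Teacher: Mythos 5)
Your proposal identifies the correct high-level scaffold — iterate an adapted metastability estimate epoch by epoch, use the rescaling to make each plateau $O(1)$, and merge a pair at each critical crossing — and you correctly flag that the crux is the shrinking jump window. But the mechanism you invoke for that crux is not the one that works, and this is a genuine gap rather than a detail.

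The paper's proof does \emph{not} argue that the collapse is "by a faster exponential" than the plateau, nor does it control the jump via \Cref{lem: eminem} or the acceleration heuristic of \S\ref{sec: acceleration}. Instead, the whole point of the reparametrization \eqref{compt: reparam} is that after multiplying by $\dot\tau_\beta$, the inner-product evolution of the \emph{closest} pair satisfies, up to exponentially small forcing, the autonomous scalar ODE
\[
\dot u_1(t)\approx -\frac{2\log\beta}{n^2}\,\sin\bigl(u_1(t)\bigr),
\]
for $u_1 := \widetilde\theta_2-\widetilde\theta_1$ in rescaled time. The key technical tool is then \Cref{lem: exact time scale of clustering}: integrating this ODE through the change of variable $v=\log\tan(u/2)$ shows that the descent from $u_0=O(1)$ down to $\sqrt{\log\beta/\beta}$ takes rescaled time converging to a finite constant (this \emph{is} the plateau), while the further descent from $\sqrt{\log\beta/\beta}$ to $(\beta\log\beta)^{-1/2}$ — the window where the corresponding energy term jumps from $\beta^{-1/2}$ to $1-O(1/\log\beta)$ — takes only $O(\log\log\beta/\log\beta)\to 0$. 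Thus the plateau is \emph{not} a phase where the particles sit still; it is precisely the bulk of the collapse, and the jump is its tail, separated by the scale $\sqrt{\log\beta/\beta}$. Without this two-scale analysis of the sine ODE, your "faster exponential" heuristic gives you no way to show the rescaled jump width vanishes, and there is no route to the theorem.

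Two further points that would fail as written. First, your claimed error $O(e^{-c'\beta})$ for $\mathsf{E}_\beta$ against the plateau value via "a direct Lipschitz estimate on $\mathsf{E}_\beta$ (bounded gradient)" is incorrect: the gradient of $\mathsf{E}_\beta$ is not uniformly bounded in $\beta$ (it carries factors of $\beta$), and in fact the actual convergence rate to the plateau heights is only $O(1/\log\beta)$ — this comes from the pair at distance $(\beta\log\beta)^{-1/2}$ contributing $e^{-1/(2\log\beta)}$ to the energy, nowhere near $1-O(e^{-c'\beta})$. Second, you leave the explicit plateau heights unspecified; the paper pins them down as $\varphi_\infty = \frac{1}{n}+\frac{2}{n^2}\sum_{j\leq i} j$ on the $i$-th plateau, which is needed to write down $\varphi_\infty$ and hence the statement itself.
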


%The idea of the proof is to show the existence and precise asymptotics of $\beta^{-1/2}$-clustering of each pair of particles. 
%We then can deduce the statement of the proposition.
%Under this strong assumption, we can actually iterate an argument similar to the spherical caps argument for different spherical caps over time with very little spherical caps for particles that do not collide, and bigger spherical caps for the two particles that are going to collide. 

\begin{proof}[Proof of \Cref{thm: staircase}]

%We consider a fixed initialisation $(\theta_{1},\dots,\theta_{n})\in \mathbb{T}^n$ which satisfies \eqref{d: init_s1}. We introduce a reparametrization of time $(\tau_{\beta})_{\beta \in \reals_{\geq0}} $ to get a limiting curve of the energy, i.e 
%\begin{equation}\label{compt: reparam}
%    \dot{\tau}_{\beta}(t)=\log \beta\underset{\substack{(i,j)\in\{1,\ldots,n\}^2\\ |\theta_i(t)- \theta_j(t)|>\frac{1}{\beta \log \beta}}}{\max}  \left(e^{\beta(1-\cos(\theta_i(t)-\theta_j(t)))}\right).
%\end{equation}

For $i\in \{1,\ldots,n\}$ and $t\geq0$, set $\widetilde{\theta}_{i}(t):=\theta_{i}(\tau_{\beta}(t)).$ 

\subsubsection*{Step 1. Preliminary spherical caps}

Consider the spherical caps
\begin{align*}
\mathscr{S}_{1}&:=[\theta_1(0),\theta_2(0)],\\
\mathscr{S}_{q}&:=[\theta_{q}(0)-e^{-\beta K}, \theta_{q}(0)+e^{-\beta K}] \hspace{1cm} \text{ for } q\in \{3,\ldots,n\},
\end{align*}
where $\frac{\log \beta}{\beta}<K<\cos\left(\frac{\theta_2(0)-\theta_1(0)}{2}\right)-\cos(\theta_3(0)-\theta_2(0))$.
We also define 
$$
t_{1}({\beta}):=\inf\left\{t\geq 0\colon \underset{(i,j)\in \{1,\ldots,n\}^2}{\min}\left|\theta_{i}(t)-\theta_{j}(t)\right|\leq \sqrt{\frac{\log \beta}{\beta}} \right\},$$
and
$$T_{1}({\beta}):=\inf\left\{t\geq 0\colon \underset{(i,j)\in \{1,\ldots,n\}^2}{\min}\left|\theta_{i}(t)-\theta_{j}(t)\right|\leq\frac{1}{\sqrt{\beta\log \beta}} \right\}.$$
We can slightly modify the proof of \Cref{thm: metastability} to ensure that the particles $\theta_i(t)$ do not leave their respective spherical caps $\mathscr{S}_q$ up to a time $T>0$ which is exponentially large with respect to $\beta$, and that $|\theta_1(t)-\theta_2(t)|$ becomes exponentially small with respect to $\beta$.
We briefly explain how to adapt the proof. The first step of the proof of \Cref{thm: metastability} can be reproduced in this setting with a lower bound on the time of escape which is of the form
\begin{equation*}
    T_{\mathrm{esc}}\geq \underset{q\in \{1,\ldots,n\}}{\max}\frac{e^{\beta(1-\alpha_q-4e^{-\beta K}-K)}}{n},
\end{equation*}
where $\alpha_q=\cos(\theta_q(0)-\theta_{q-1}(0)).$
We can reproduce the cone collapse argument and ensure that the time $T_c>0$ of clustering within the first spherical cap satisfies
\begin{equation*}
    T_{c}\leq 4n e^{\beta \varepsilon},
\end{equation*}
with $\varepsilon=1-\cos\left(\frac{\theta_2(0)-\theta_1(0)}{2}\right)$. So, asymptotically, the time scales are of different orders if for all $q\in \{2,\ldots,n\}$,
\begin{equation*}
    \frac{\log \beta}{\beta}< K < \cos\left(\frac{\theta_{2}(0)-\theta_{1}(0)}{2}\right)-\alpha_q.
 \end{equation*}
We end up with two particles for the \eqref{USA} dynamics that come exponentially near each other while the others do not escape their original spherical caps.\label{adapter le thm: explication} 
 
As a result of the above discussion, all particles remain in their original caps up to time $T$, and for $t\in[0,T_{1}({\beta})]$,
\begin{equation*} 
\underset{\substack{(i,j)\in\{1,\ldots,n\}^2\\ |\theta_i(t)- \theta_j(t)|>\frac{1}{\sqrt{\beta \log \beta}}}}{\argmax} \cos\left(\theta_i(t)-\theta_{j}(t)\right)=(1,2)
\end{equation*}
if $\beta$ is large enough.
%, because if not a particle leave its spherical cap, the two nearest are the first and the second particles.
Then for all $i \in \{1,\ldots,n\}$ and $t\in[0, T_{1}(\beta)]$,  
\begin{equation}\label{compt: derivative}
    \dot{\widetilde{\theta_{i}}}(t)=\dot{\tau}_{\beta}(t)\sum_{j=1}^{n} \frac{e^{\beta\left(\cos\left(\widetilde{\theta}_j(t)-\widetilde{\theta}_i(t)\right)-1\right)}}{n^{2}}\sin\left(\widetilde{\theta}_{j}(t)-\widetilde{\theta}_{i}(t)\right). 
\end{equation}
Plugging \eqref{compt: reparam} into \eqref{compt: derivative}, we gather that for all $i \in \{1,\ldots,n\}$ and $t\in[0, T_{1}({\beta})]$,
\begin{equation*}
  \dot{\widetilde{\theta_{i}}}(t)=\log \beta\,  \sum_{j=1}^{n}\frac{e^{\beta(\cos(\widetilde{\theta}_j(t)-\widetilde{\theta}_i(t))-\cos(\widetilde{\theta}_1(t)-\widetilde{\theta}_2(t))}}{n^{2}}\sin(\widetilde{\theta}_j(t)-\widetilde{\theta}_i(t)). 
\end{equation*}
Set $u_1(t):=\widetilde{\theta}_{2}(t)-\widetilde{\theta}_{1}(t)$. We then have 
\begin{align*}
    \dot{u}_{1}(t)&=-\frac{2\log \beta}{n^{2}}\sin(u_{1}(t))+\sum_{j\notin \{1,2\}}\frac{e^{\beta\left(\cos\left(\Tilde{\theta}_{j}(t)-\Tilde{\theta}_{1}(t)\right)-d_{1}(t)\right)}}{n^{2}}\sin(\Tilde{\theta}_{j}(t)-\Tilde{\theta}_{1}(t))\\
    &\hspace{3.65cm}-\sum_{j\notin \{1,2\}}\frac{e^{\beta\left(\cos\left(\Tilde{\theta}_{j}(t)-\Tilde{\theta}_{2}(t)\right)-d_{1}(t)\right)}}{n^{2}}\sin(\Tilde{\theta}_{j}(t)-\Tilde{\theta}_{2}(t)).
\end{align*}
We then can bound the right-hand side using the inequality on $d_{i}$ as
\begin{equation*}
    \left|\dot{u}_{1}(t)+\frac{2\log \beta}{n^{2}}\sin(u_{1}(t))\right|\leq \frac{2\log\beta}{n} e^{-c\beta}.
\end{equation*}
Then we have the following lemma.

\begin{lemma}\label{lem: exact time scale of clustering}
Suppose $u_0\in[0,1]$, $\beta\geq e$, $c>0, K>0$ and $\kappa>\frac{\log \beta}{\beta}$. 
Consider $u\in\mathscr{C}^0(\reals_{\geq0})$ a solution to the Cauchy problem 
\begin{equation*}
\begin{cases}
    \dot{u}(t)=-c\log \beta\sin(u(t))+c(\beta) &\text{ for } t\geq0\\
    u(0)=u_0,
\end{cases}
\end{equation*}
where 
\begin{equation*}
    |c(\beta)|\leq K e^{-\kappa\beta} \log \beta.
\end{equation*}
Let
\begin{equation*}
    t({\beta}):=\inf\left\{t\geq 0 \colon u(t)\leq \sqrt{\frac{\log \beta}{\beta}} \right\},
\end{equation*}
and
\begin{equation*}
    T({\beta}):=\inf\left\{t\geq 0 \colon u(t)\leq \frac{1}{\sqrt{\beta\log \beta}} \right\}.
\end{equation*}
Then, as $\beta\to+\infty$,
\begin{equation*}
    \left|t({\beta})-\frac{2}{c}\right|\leq \frac{2\log\left(\tan\left(\frac{u(0)}{2}\right)\right)}{c\log \beta}+\frac{\log\log\beta}{c\log \beta},
\end{equation*}
and
\begin{equation*}
    T({\beta})-t({\beta})\leq\frac{2\log\log\beta}{c\log \beta} +O\left(\frac{1}{\beta^{2}\log \beta}\right)
\end{equation*}
\end{lemma}

We postpone the proof to \Cref{lem_app: exact time scale of clustering}. 
%With this lemma we can give an estimate on $t_1(\beta)$ the time $T_{1}(\beta)$ of the first collision, and the time $t_1(\beta).$

\subsubsection*{Step 2. Repeating the arguments}

We now argue by induction. By definition of the modified USA dynamics \eqref{def: modified USA}, at time $t=T_{k}(\beta)$ (defined analogously to $T_1(\beta)$), the particles $\theta_1(t)$ and $\theta_{k+1}(t)$ are fusioned. 
Thus at time $T_k(\beta)$ we consider the spherical caps
\begin{align*}
\mathscr{S}_{1}&:=[\theta_{1}(T_k),\theta_{k+2}(T_k)],\\
\mathscr{S}_{q}&:=[\theta_{q}(T_{k})-e^{-c_k\beta},\theta_{q}(T_{k})+e^{-c_k\beta} ] \hspace{1cm} \text{ for } q\geq k+3,
\end{align*}
where 
$$c_k=\cos\left(\frac{\theta_{k+2}(T_k)-\theta_{1}(T_k)}{2}\right)- \cos\left(\theta_{k+3}(T_k)-\theta_{k+2}(T_k)\right)>0$$ 
because of the hypothesis on the initial configuration, and $\theta_{1}(T_k(\beta))\geq \theta_1(0)$. 
One can redo the argument of Step 1, to establish that the particles do not escape their spherical caps because of the hypothesis on the distance of spherical caps at initialization\footnote{The hypothesis on the initial configuration implies that all the spherical caps are sufficiently separated to apply the adapted proof of \Cref{thm: metastability}, by using the fact that $\theta_{1}(T_k(\beta))\geq \theta_1(0)$ for all $k\in \{1,\ldots,n-1\}$.}. 
Moreover, similar bounds on $T_{k}(\beta)$ and $t_{k}(\beta)$ can be provided by virtue of a result similar to \Cref{lem: exact time scale of clustering}, but considering weighted particles as to handle the particles which are already merged\footnote{The proof is a straightforward adaptation, the only difference being that the scalar differential equation is not for $u(t)=\tilde{\theta}_k(t)-\tilde{\theta}_1(t)$, but rather for a weighted difference of the two particles, namely $v(t)=\lambda_k\tilde{\theta}_k(t)-(1-\lambda_k)\tilde{\theta}_1(t)$.}.
So by induction, there exist $t_{1}(\beta)<T_{1}(\beta)<t_{2}(\beta)<T_{2}(\beta)<\ldots<t_{k}(\beta)<T_{k}(\beta)$ such that for all $i \in \{1,\ldots,k\}$
\begin{equation*}
T_{i}(\beta)-t_{i}(\beta)\leq \frac{2}{c_i}\frac{\log\log\beta}{\log \beta}+O\left(\frac{1}{\beta^{2}\log \beta}\right).
\end{equation*}
Besides, we notice that for all $i\in \{1,\ldots,k\}$
\begin{align*}
&\mathsf{E}_{\beta}\left(\Tilde{\theta}_{1}(T_{i}(\beta)),\ldots,\Tilde{\theta}_{n}(T_{i}(\beta))\right)-\mathsf{E}_{\beta}\left(\Tilde{\theta}_{1}(t_{i}(\beta)),\ldots,\Tilde{\theta}_{n}(t_{i}(\beta))\right)
\\
&\hspace{1cm}=\sum_{k=1}^{n}\sum_{j=1}^{n} \frac{e^{\beta(\cos\left(\theta_j(T_i(\beta))-\theta_{k}(T_i(\beta))\right)-1)}}{n^{2}}-\sum_{k=1}^{n}\sum_{j=1}^{n} \frac{e^{\beta\left(\cos\left(\theta_j (t_i(\beta))-\theta_{k}(t_i(\beta)) \right)-1\right)}}{n^{2}}\\
&\hspace{1cm} \geq \frac{2i}{n^{2}}e^{\beta\left(\cos\left(\theta_{i+1}(T_i(\beta))-\theta_{1}(T_i(\beta))\right)-1\right)} -e^{\beta\left( \cos\left(\theta_{i+1}(T_{i}(\beta))-\theta_{i+2}(T_{i}(\beta))\right) -1\right)} \\
&\hspace{2cm}-e^{\beta(\cos\left(\theta_1(t_i(\beta))-\theta_{i+1}(t_i(\beta))-1 \right)} \\
&\hspace{1cm}\geq \frac{2i}{n^{2}}+O\left(\frac{1}{\log \beta}\right),
\end{align*}
and we also have 
\begin{align*}
&\mathsf{E}_{\beta}\left(\Tilde{\theta}_{1}(T_{i}(\beta)),\ldots,\Tilde{\theta}_{n}(T_{i}(\beta))\right)-\mathsf{E}_{\beta}\left(\Tilde{\theta}_{1}(t_{i}(\beta)),\ldots,\Tilde{\theta}_{n}(t_{i}(\beta))\right)\\
&\hspace{1cm}=\sum_{k=1}^{n}\sum_{j=1}^{n} \frac{e^{\beta(\cos\left(\theta_j(T_i(\beta))-\theta_{k}(T_i(\beta))\right)-1)}}{n^{2}}-\sum_{k=1}^{n}\sum_{j=1}^{n} \frac{e^{\beta\left(\cos\left(\theta_j (t_i(\beta))-\theta_{k}(t_i(\beta)) \right)-1\right)}}{n^{2}} \\
    &\hspace{1cm}\leq \frac{2i}{n^{2}} +\left( \frac{1}{\log \beta }\right).
\end{align*}
Using the monotonicity of the energy, and the definitions of $T_{i}(\beta)$ and $t_{i+1}(\beta)$, we gather that for all $i\in \{1,\ldots,k\}$ and $t\in (T_{i}(\beta),t_{i+1}(\beta))$,
\begin{equation*}
    0\leq \mathsf{E}_{\beta}\left(\Tilde{\theta}_{1}(t),\ldots,\Tilde{\theta}_{n}(t)\right)-  \mathsf{E}_{\beta}\left(\Tilde{\theta}_{1}(T_{i}(\beta)),\ldots,\Tilde{\theta}_{n}(T_{i}(\beta))\right)=  O\left(\frac{1}{\log \beta}\right).
\end{equation*}
Thence there exist $\ell_0,\ell_1,\ldots,\ell_n \in \reals_{\geq 0}$, defined as limits of $T_{k}(\beta)$ as $\beta\to+\infty$, with $\ell_0:=0$, such that, defining $\varphi_\infty:\reals_{\geq 0}\rightarrow \reals_{\geq 0}$ as
\begin{equation*}
    \varphi_\infty(t)=\frac{1}{n}+\frac{2}{n^{2}}\sum_{k=1}^{i}k \hspace{1cm} \text{ for } t\in (\ell_i,\ell_{i+1}),
\end{equation*}
we have, for all $i\in \{1,\ldots,n\}$ and
$t \in (\ell_i,\ell_{i+1})$, 
\begin{equation*}
    \left|\mathsf{E}_{\beta}\left(\Tilde{\theta}_{1}(t),\ldots,\Tilde{\theta}_{n}(t)\right)-\varphi_\infty(t)\right|\underset{\beta\rightarrow +\infty}{\xrightarrow{\hspace{1cm}}} 0,
\end{equation*}
as desired.
\end{proof}
 
\subsection{A reparametrization candidate}
A naive way of accelerating the dynamics is to introduce the time reparametrization \(\tau_{\beta}\) defined by 
\begin{equation*}
\begin{cases}
    \dot{\tau}_{\beta}(t)=\frac{\log \beta}{\| \nabla \mathsf{E}_{\beta}(\tau_{\beta}(t))\|} & \text{for } t \geq 0,\\
    \tau_{\beta}(0) = 0.
    \end{cases}
\end{equation*}
One sees that when the gradient is small, the dynamics is accelerated. 
Therefore, the hope is that the dynamics would take a constant time (not depending on \(\beta\) asymptotically) to induce a jump in the energy. Denoting $\varphi_{\beta}(t):=\mathsf{E}_{\beta}(u(\tau_{\beta}(t)))$, we have
\begin{equation*}
    \dot{\varphi}_{\beta}(t)=\log\beta\cdot \| \nabla \mathsf{E}_{\beta}(u(\tau_{\beta}(t)))\|,
\end{equation*}
since the gradient has different scales of magnitude depending on $\beta.$ We posit the following question.

\begin{comment}We will denote the following hitting times $T_1,T_1',\dots,T_k,T_k'$ such that 
\begin{equation*}
    T_k=\inf \{t\geq T'_{k-1}\colon u(t)\notin \mathcal{N} \}
\end{equation*}
and 
\begin{equation*}
    T'_{k}=\inf\{t\geq T_{k}\colon u(t)\in \mathcal{N}\}.
\end{equation*}
A consequence is that we have 
\begin{equation*}
    \phi_{\beta}(T_k)=\phi_{\beta}(T'_{k-1})+\int_{T'_{k-1}}^{T_k}\| \nabla \mathsf{E}_{\beta}(u(\tau_{\beta}(t)))\|dt.
\end{equation*}
A consequence is that we have
\begin{equation*}
    \left|\phi_{\beta}(T_k)-\phi_{\beta}(T'_{k-1})\right|\leq \delta_{\beta}\left(T_{k}-T'_{k-1}\right)
\end{equation*}
We obtain that for all $T_{k}<t<T'_{k}$, we have 
\begin{equation*}
    u(t)\notin \mathcal{N} 
\end{equation*}
Then, by our result, we obtain a PL inequality for the energy function $\mathsf{E}_{\beta}$.
\begin{equation*}
    \dot{\phi}_{\beta}(t)=-\| \nabla \mathsf{E}_{\beta}(u(\tau_{\beta}(t)))\|\leq -C_{\beta}^{1/2} \left( E(u(\tau_{\beta}(t)))-\mathsf{E}_{\beta}^{*} \right)^{1/2} 
\end{equation*}
which is means 
\begin{equation*}
    \frac{d}{dt}(\phi_{\beta}-\mathsf{E}_{\beta}^{*}(t))\leq -C_{\beta}^{1/2} \left( \phi_{\beta}(t)-\mathsf{E}_{\beta}^{*} \right)^{1/2} 
\end{equation*}
We can integrate this inequality and we get
\begin{equation*}
    \phi_{\beta}(t)-\mathsf{E}_{\beta}^{*}\leq -C_{\beta}^{1/2}t+ \phi_{\beta}(0)-\mathsf{E}_{\beta}^{*}.
\end{equation*}
\end{comment}

\begin{problem}
Does the statement of Problem \ref{conj: saddle-to-saddle} hold for $\tau_\beta$ as above?
\end{problem}

\appendix

\section{Toolkit}

\subsection{Technical lemmas}

\subsubsection{Proof of \Cref{lem: eminem}} \label{lem: appen_time_bound_collapse}

\begin{comment}
    \begin{lemma}[Until collapse] \label{lem: appen_time_bound_collapse}
Let $\beta\geq0$ and $c>0$. For $u_0\in(0,1]$ consider $u\in\mathscr{C}^0(\reals_{\geq0}; [0, 1])$ denote the unique solution to the Cauchy problem
\begin{equation*}
    \begin{cases}
        \dot{u}(t)=u(t)(1-u(t))e^{\beta(u(t)-1)} &\text{ for } t\geq0,\\
        u(0)=u_0.
    \end{cases}
\end{equation*}
Then,
\begin{equation*}
    \inf\left\{ t\geq 0\colon 1-u(t)\leq e^{-c\beta}\right\}\leq K_1\cdot \frac{e^{\beta (1-u_0)}}{u_0}+K_2\cdot c\cdot\beta\cdot e,
\end{equation*}  
where $K_1=1_{\beta>1}+2\cdot 1_{\beta\leq1}$ and $K_2=\frac{\beta}{\beta-1}\cdot 1_{\beta>1}+2\cdot 1_{\beta\leq 1}$.
\end{lemma}
\end{comment}

\begin{proof}[Proof of \Cref{lem: eminem}]
First of all, observe that $F(u)=u(1-u)e^{\beta(u-1)}$ is zero at $u=0,1$ and strictly positive in $(0,1)$. Whence $t\mapsto u(t)$ is increasing for all $\beta\geq0$. 
Now suppose $\beta>1$, and consider 
\begin{equation*}
    t_1:=\inf\left\{ t\geq 0 \colon 1-u(t)\leq\frac{1}{\beta}\right\}.
\end{equation*}
Since $u(t)\geq u_0>0$ and also $1-u(t)\geq \beta^{-1}$ for all $t\in[0, t_1]$, we have
\begin{equation*}
    \dot{u}(t)\geq \frac{u_0}{\beta}e^{\beta\left( u(t)-1\right)}
\end{equation*}
for $t\in[0, t_1]$. Setting $f(t):=e^{\beta(1-u(t))}$, we find
\begin{equation*}
    \dot{f}(t)=-\beta e^{\beta(1-u(t))}\dot{u}(t)\leq -u_0,
\end{equation*}
whence 
\begin{equation*}
    f(t)\leq f(0)-u_0t.
\end{equation*}
It follows that 
\begin{equation*}
     t_1\leq \frac{f(0)-f(t_1)}{u_0}\leq  \frac{f(0)}{u_0}=\frac{e^{\beta(1-u_0)}}{u_0}.
\end{equation*}
We then define
\begin{equation*}
    t_2:=\inf\left\{t\geq t_1 \colon 1-u(t)\leq e^{-c\beta}\right\}.
\end{equation*}
For all $t\geq t_2\geq t_1$ we have $u(t)\geq 1-\beta^{-1}$ and $e^{\beta(u(t)-1)}\geq e^{-1}$. So
\begin{equation*}
    \dot{u}(t)\geq (1-u(t))\left(1-\frac{1}{\beta}\right)\frac{1}{e}=\frac{1-u(t)}{\frac{\beta}{\beta-1}\cdot e}.
\end{equation*}
By the Grönwall lemma, for all $t \geq t_1$
\begin{equation*}
    1-u(t) \leq  (1-u(t_1))e^{-\frac{(t-t_1)}{\frac{\beta}{\beta-1}\cdot e}}.
\end{equation*}
We then deduce that 
\begin{equation*}
    t_2-t_1 \leq \frac{\beta^2\cdot c\cdot e}{\beta-1}.
\end{equation*}
We conclude by using the bound on $t_1$.
%When $\beta\leq 1$, the same argument follows through, replacing $\frac{1}{\beta}$ by $\frac12$ in the definition of $t_1$.
\end{proof}

\subsubsection{Proof of \Cref{lem:collapsetime}} \label{lem: collaps_time_app}

\begin{comment}
    \begin{lemma}\label{lem: collaps_time_app}
Fix $\beta>1$, and consider $\delta=\delta(\beta)\in(0,1)$ and $\alpha=\alpha(\beta)\in(-1,1)$ satisfying
\begin{equation*}
    \frac{e^{-\beta\delta}\delta(1-\delta)}{n}> ne^{-(1-\alpha)\beta}.
\end{equation*}
Suppose $(x_i(0))_{i=1}^n\in(\sphere^{d-1})^n$ is such that 
\begin{equation*}
    \left\langle x_i(0), x_j(0)\right\rangle\geq 1-\delta
\end{equation*}
for some $I\subset\{1,\ldots, n\}$ and for all $(i, j)\in I^2$. Let $(x_i(\cdot))_{i=1}^n\in \mathscr{C}^0(\mathbb{R}_{\geq0};(\sphere^{d-1})^n)$ be the unique solution to the corresponding Cauchy problem for \eqref{SA} (or \eqref{USA}), and suppose that for all $i\in I$ and $k\in I^c$,
\begin{equation*}
    \left\langle x_i(t), x_k(t)\right\rangle\leq\alpha \hspace{1cm}\text{ for all } t\in[0,T],
\end{equation*}
Then for all $(i,j)\in I^2$
\begin{equation*}
    \langle x_i(t), x_j(t)\rangle \geq  1-\delta \hspace{1cm}\text{ for all } t\in[0,T].
\end{equation*}
\end{lemma}
\end{comment}
    
\begin{proof}[Proof of \Cref{lem:collapsetime}]
We define  
\begin{equation*}
    T_*:=\inf\left\{t\geq 0\colon \min_{(i,j)\in I^2}\< x_i(t),x_j(t)\> \leq 1-\delta\right\}.
\end{equation*}
By contradiction suppose that $T_*<T$. Let $t\in [0, T_*]$ and   
\begin{equation*}
    \rho(t):=\underset{(i,j)\in I^2}{\min} \<x_{i}(t),x_{j}(t) \>
\end{equation*}
We also consider $i(t),j(t)$ such that
\begin{equation*}
    \left(i(t),j(t)\right)\in\underset{(i,j)\in I^2}{\argmin} \<x_{i}(t),x_{j}(t) \>.
\end{equation*}
Following exactly the same arguments as 
\begin{comment}
    We have
    \begin{equation*}
    \dot{\rho}(t)=\sum_{k=1}^{n}a_{i(t),k}\left\<\proj_{x_{i(t)}(t)}(x_k(t)),x_{j(t)}(t)\right\> +\sum_{k=1}^{n}a_{j(t),k} \left\<\proj_{x_{j(t)}(t)}(x_{k}(t)),x_{i(t)}(t)\right\> .
\end{equation*}
Using the same bounds as 
\end{comment}
in Step \ref{mainproof:step2} of the proof of \Cref{thm: metastability}, we get 
\begin{equation*}
    \dot{\rho}(t)\geq \frac{2}{n}\rho(t)(1-\rho(t))e^{-\beta(1-\rho(t))}-2ne^{-(1-\alpha)\beta}
\end{equation*}
for $t\in[0,T_*]$.
Now for $t=T_*,$ by continuity we have
\begin{equation*}
    \frac{1}{n}\rho(T_*)(1-\rho(T_*))e^{-\beta(1-\rho(T_*))}> ne^{-(1-\alpha)\beta}.
\end{equation*}
Plugging the former inequality into the latter we get $\dot{\rho}(T_*)>0.$ So for all times $t$ in a neighborhood of $T_*$, $\dot{\rho}(t)>0$, whence $\rho(t)<1-\delta$ for $t$ in a neighborhood of $T_*$. This is in contradiction with the definition of $T_*$. Therefore $T_*\geq T$, as desired.
\end{proof}

\subsubsection{Proof of \Cref{lem: exact time scale of clustering}} \label{lem_app: exact time scale of clustering}

\begin{comment}
    \begin{lemma} \label{lem_app: exact time scale of clustering}
Suppose $u_0\in[0,1]$, $\beta\geq e$, $c>0, K>0$ and $\kappa>\frac{\log \beta}{\beta}$. 
Consider $u\in\mathscr{C}^0(\reals_{\geq0})$ a solution to the Cauchy problem 
\begin{equation*}
\begin{cases}
    \dot{u}(t)=-c\log \beta\sin(u(t))+c(\beta) &\text{ for } t\geq0\\
    u(0)=u_0,
\end{cases}
\end{equation*}
where 
\begin{equation*}
    |c(\beta)|\leq K e^{-\kappa\beta} \log \beta.
\end{equation*}
Let
\begin{equation*}
    t({\beta}):=\inf\left\{t\geq 0 \colon u(t)\leq \sqrt{\frac{\log \beta}{\beta}} \right\},
\end{equation*}
and
\begin{equation*}
    T({\beta}):=\inf\left\{t\geq 0 \colon u(t)\leq \frac{1}{\sqrt{\beta\log \beta}} \right\}.
\end{equation*}
Then, as $\beta\to+\infty$,
\begin{equation*}
    \left|t({\beta})-\frac{2}{c}\right|\leq \frac{2\log\left(\tan\left(\frac{u(0)}{2}\right)\right)}{c\log \beta}+\frac{\log\log\beta}{c\log \beta},
\end{equation*}
and
\begin{equation*}
    T({\beta})-t({\beta})\leq\frac{2\log\log\beta}{c\log \beta} +O\left(\frac{1}{\beta^{2}\log \beta}\right)
\end{equation*}
\end{lemma}
\end{comment}

\begin{proof}[Proof of \Cref{lem: exact time scale of clustering}]
Define $v(t):=\log\left(\tan\left(\frac{u(t)}{2}\right)\right)$. Note that
\begin{equation*}
    \dot{v}(t)=\frac{\dot{u}(t)}{2\sin(u(t))}.
\end{equation*}
For all $t\geq 0$, we then have  
\begin{equation*}
    \dot{v}(t)=-c\log \beta +\frac{c(\beta)}{2\sin(u(t))}.
\end{equation*}
Furthermore, for all $t\in[0, T({\beta})]$,
\begin{equation}\label{compt: ineq upper_bd}
    \tan\left(\frac{u(t)}{2}\right)\leq \tan\left(\frac{u(0)}{2}\right)e^{-\left(\frac{c\log \beta}{2}-\frac{c(\beta)\sqrt{\beta \log \beta}}{2}\right)t},
\end{equation}
as well as 
\begin{equation}\label{compt: ineq lower_bd}
    \tan\left(\frac{u(t)}{2}\right)\geq \tan\left(\frac{u(0)}{2}\right)e^{-\left(\frac{c\log \beta}{2}+\frac{c(\beta)\sqrt{\beta \log \beta}}{2}\right)t}.
\end{equation}
We now turn our attention to deriving bounds on  $t({\beta})$ and $T({\beta})$. The inequalities \eqref{compt: ineq lower_bd} and \eqref{compt: ineq upper_bd} yield
\begin{equation*}
    e^{-\left(\frac{c(\beta)\sqrt{\beta \log \beta}}{2}\right)t({\beta})}\leq \frac{\tan\left(\frac{u(t({\beta}))}{2}\right)}{e^{\frac{-ct({\beta})\log \beta}{2}}\tan\left(\frac{u(0)}{2}\right)} \leq e^{\left(\frac{c(\beta)\sqrt{\beta \log \beta}}{2}\right)t({\beta})}
\end{equation*}
Since $c(\beta)=O(e^{-\kappa\beta} \log \beta)$ with $\kappa>0$ and $t({\beta})$ is bounded uniformly in $\beta$, as $\beta\to+\infty$ we have 
\begin{equation*}
\frac{\tan\left(\frac{u(t({\beta}))}{2}\right)}{e^{\frac{-c(\log \beta) t({\beta})}{2}}\tan\left(\frac{u(0)}{2}\right)} =1+O\left(c(\beta)\sqrt{\beta \log \beta}\right).
\end{equation*}
Besides, Taylor-expanding the $\tan$ we get 
\begin{align*}
    e^{\frac{-c(\log \beta) t({\beta})}{2}}\tan\left(\frac{u(0)}{2}\right)=\sqrt{\frac{\log \beta}{\beta}}&+O\left(\left(\frac{\log\beta}{\beta}\right)^{-\frac32}\right)\\
    &+O\left( e^{\frac{-c(\log \beta) t({\beta})}{2}}c(\beta) \sqrt{\beta\log \beta}\right).
\end{align*}
Whence, as $\beta\to+\infty$,
\begin{equation*}
    t({\beta})=\frac{2}{c}+\frac{2\log\left(\tan\left(\frac{u(0)}{2}\right)\right)}{c\log \beta}-\frac{\log\log\beta}{c\log \beta}+O\left(\frac{1}{\beta^{2}\log \beta}\right).
\end{equation*}
Following the above chain of computations, we can also gather that, as $\beta\to+\infty$,
\begin{equation*}
    T({\beta})=\frac{2}{c}+\frac{2\log\left(\tan\left(\frac{u(0)}{2}\right)\right)}{c\log \beta}+\frac{\log\log\beta}{c\log \beta}+O\left(\frac{1}{\beta^{2}\log \beta}\right).
\end{equation*}
Therefore, asymptotically as $\beta\to+\infty$,
\begin{equation*}
    T({\beta})-\tau({\beta})=\frac{2\log\log\beta}{c\log \beta}+O\left(\frac{1}{\beta^{2}\log \beta}\right).\qedhere
\end{equation*}
\end{proof}

\subsection{Numerical considerations}

Code can be found at \href{https://github.com/HugoKoubbi/2024-transformers-dotm}{{\color{myblue}https://github.com/HugoKoubbi/2024-transformers-dotm}}. 
In \Cref{fig: staircase} we used the initial configuration displayed in \Cref{fig: init.config.staircase}, and discretized the equation using a forward Euler scheme with time-step equal to $10^{-6}$.

\begin{figure}[h!]
    \centering
    \includegraphics[scale=0.75]{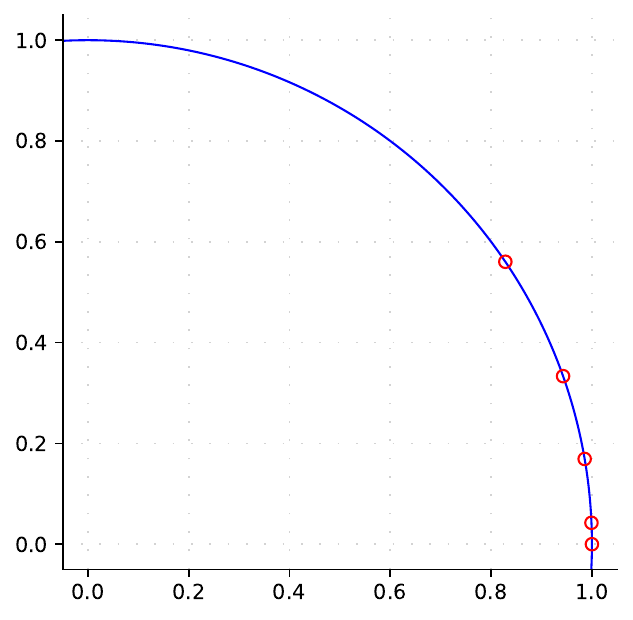}
    \caption{The initial configuration with $n=5$ points used for \Cref{fig: staircase}.}
    \label{fig: init.config.staircase}
\end{figure}

\bibliography{ref}
\bibliographystyle{alpha}

\bigskip

\begin{minipage}[t]{.5\textwidth}
{\footnotesize{\bf Borjan Geshkovski}\par
  Inria \&
  Laboratoire Jacques-Louis Lions\par
  Sorbonne Université\par
  4 Place Jussieu\par
  75005 Paris, France\par
 \par
  e-mail: \href{mailto:borjan@mit.edu}{\textcolor{blue}{\scriptsize borjan.geshkovski@inria.fr}}
  }
\end{minipage}% This must go next to `\end{minipage}`
\begin{minipage}[t]{.5\textwidth}
  {\footnotesize{\bf Hugo Koubbi}\par
  Department of Statistics and Data Science \par 
  ENS Paris-Saclay \& Yale University \par
  219 Prospect Street\par
  New Haven, CT 06511, United States \par
  e-mail: \href{mailto:blank}{\textcolor{blue}{\scriptsize hugo.koubbi@ens-paris-saclay.fr}}
  }
\end{minipage}%

\vspace{0.75cm}

\begin{minipage}[t]{.5\textwidth}
{\footnotesize{\bf Yury Polyanskiy}\par
  Department of EECS\par
  Massachusetts Institute of Technology\par
  77 Massachusetts Ave\par
  Cambridge 02139 MA, United States\par
 \par
  e-mail: \href{mailto:borjan@mit.edu}{\textcolor{blue}{\scriptsize yp@mit.edu}}
  }
\end{minipage}% This must go next to `\end{minipage}`
\begin{minipage}[t]{.5\textwidth}
  {\footnotesize{\bf Philippe Rigollet}\par
  Department of Mathematics\par
  Massachusetts Institute of Technology\par
  77 Massachusetts Ave\par
  Cambridge 02139 MA, United States\par
 \par
  e-mail: \href{mailto:blank}{\textcolor{blue}{\scriptsize rigollet@math.mit.edu}}
  }
\end{minipage}%

\end{document}